\DeclareMathOperator*{\argmin}{arg\,min}
\DeclareMathOperator{\Cov}{\mathrm{Cov}}
\DeclareMathOperator{\vecs}{\mathrm{vec}}
\DeclareMathOperator{\prox}{\mathrm{prox}}
\newcommand{\inLaw}{\xrightarrow[]{\mathcal{L}}}
\definecolor{wjs}{RGB}{0,0,255}
\theoremstyle{plain}
\newtheorem{thm}{Theorem}[section]
\newtheorem{lem}{Lemma}[section]
\newtheorem{remark}{Remark}[section]
\newtheorem{proposition}{Proposition}[section]
\newtheorem{assumption}{Assumption}[section]
\newcommand{\Var}{\operatorname{\mathbb{V}\textnormal{ar}}}
\newcommand{\E}{\operatorname{\mathbb{E}}}
\newcommand{\x}{{\bm x}}
\renewcommand{\b}{\mathbf{b}}
\newcommand{\z}{\mathbf{z}}
\newcommand{\V}{\mathbf{V}}
\newcommand{\Sig}{\mathbf{\Sigma}}
\title[Moment-Adjusted Stochastic Gradients]{Statistical Inference for the Population Landscape via Moment-Adjusted Stochastic Gradients}
\author{Tengyuan Liang\thanks{Liang gratefully acknowledges support from the George C. Tiao Faculty Fellowship.}}
\address{University of Chicago, Booth School of Business, USA}
\email{tengyuan.liang@chicagobooth.edu}
\author[Liang and Su]{Weijie J.~Su\thanks{Su gratefully acknowledges support from NSF via grant CCF-1763314.}}
\address{University of Pennsylvania, Wharton School, USA}
\email{suw@wharton.upenn.edu}
\begin{document}

\begin{abstract}

Modern statistical inference tasks often require iterative optimization methods to compute the solution. Convergence analysis from an optimization viewpoint only informs us how well the solution is approximated numerically but overlooks the sampling nature of the data. In contrast, recognizing the randomness in the data, statisticians are keen to provide uncertainty quantification, or confidence, for the solution obtained using iterative optimization methods. This paper makes progress along this direction by introducing the moment-adjusted stochastic gradient descents, a new stochastic optimization method for statistical inference. We establish non-asymptotic theory that characterizes the statistical distribution for certain iterative methods with optimization guarantees. On the statistical
front, the theory allows for model mis-specification, with very mild conditions on the data. For optimization, the theory is flexible for both convex and non-convex cases. Remarkably, the moment-adjusting idea motivated from ``error standardization'' in statistics achieves a similar effect as acceleration in first-order optimization methods used to fit generalized linear models. We also demonstrate this acceleration effect in the non-convex setting through numerical experiments.

\end{abstract}

\keywords{Non-asymptotic inference; discretized Langevin algorithm; stochastic gradient methods; acceleration; model mis-specification; population landscape; diffusion process.}

\section{Introduction}

Statisticians are interested in inferring properties about a population based on independently sampled data. In the parametric regime, the inference problem boils down to constructing point estimates and confidence intervals for a finite number of unknown parameters. When the data-generation process is well-specified by the parametric family, an elegant asymptotic theory --- credited to Ronald Fisher in the 1920s --- has been established for maximum likelihood estimation (MLE). This asymptotic theory is readily generalizable to the model mis-specification setting, for a properly chosen risk function $\ell(\theta, z)$ and the corresponding empirical risk minimizer (ERM)
\begin{align*}
	\widehat{\theta}_{\rm ERM} & \triangleq \argmin_{\theta} ~ \frac{1}{N} \sum_{i=1}^{N} \ell(\theta, z_i) &\text{empirical risk minimizer}, \\
	\theta_* &\triangleq \argmin_{\theta} \E_{\mathbf{z} \sim P} \ell(\theta, \mathbf{z})  &\text{population minimizer},
\end{align*}
with
\begin{align*}
	\sqrt{N} \left( \widehat{\theta}_{\rm ERM} - \theta_* \right) \inLaw \mathcal{N}\left(0, \mathbf{H}(\theta_*)^{-1} \mathbf{\Sigma}(\theta_*) \mathbf{H}(\theta_*)^{-1}\right).
\end{align*}
Here $\theta$ is the parameter of the model, $z_i$'s are i.i.d draws from an unknown distribution $P$, Hessian $\mathbf{H}(\theta) \triangleq \E \left[\nabla^2_\theta \ell(\theta, \mathbf{z})  \right]$, and $\mathbf{\Sigma}(\theta) \triangleq \E \left[ \nabla_\theta \ell(\theta, \mathbf{z}) \otimes \nabla_\theta \ell(\theta, \mathbf{z}) \right]$.
Define the \textit{population landscape} $L(\theta)$ as\footnote{It is also called loss function in the statistical learning literature. In generalized methods of moment, $\mathbb{E}_{\mathbf{z} \sim P} \nabla_\theta \ell(\theta, \mathbf{z}) = 0$ is also called moment condition. The MLE can be also viewed as a special case with $\ell(\theta, \mathbf{z}) = -\log p_{\theta}(\mathbf{z})$ and the data-generation process being $P = P_{\theta_*}$.}
\begin{align}
	L(\theta) \triangleq \E_{\mathbf{z} \sim P} \ell(\theta, \mathbf{z}).
\end{align}
One should notice that the elegant statistical theory for inference holds under rather mild regularity conditions, without requiring a convex $L(\theta)$. However, it overlooks one important aspect: the optimization difficulty of the landscape on $\theta$.

Optimization techniques are required to solve for the above estimator $\widehat{\theta}$, as they rarely take closed form. Global convergence and computational complexity is only well-understood when the sample analog $\frac{1}{N} \sum_{i=1}^N \ell(\theta, z_i)$ is convex. The optimization is done iteratively
\begin{align}
	\label{eq:updates}
\theta_{t+1} = \theta_t - \eta \mathbf{h}(\theta_t),
\end{align}
where the vector field $\mathbf{h}$ is based on the first- and/or second-order information, $\eta$ is step-size.
For the non-convex case, the convergence becomes less clear, but in practice people still employ these iterative methods. Nevertheless, in either case, the available convergence results fall short of the statistical goal: after a certain number of iterations, one is interested in knowing the sampling distribution of $\theta_t$, for uncertainty quantification of the optimization algorithm.

The goal of the present work is to combine the strength of the two worlds in inference and optimization: to characterize the statistical distribution of the iterative methods, with good optimization guarantee. Specifically,
we study particular stochastic optimization methods for the (possibly non-convex) population landscape $L(\theta)$ in the fixed dimension regime, and at the same time characterize the sampling distribution at each step, through establishing a non-asymptotic theory. We allow for model mis-specification, and require only mild moment conditions on the data-generating process.

\subsection{Motivation}

Observe the simple fact that what one actually wishes to optimize is the population objective $L(\theta) = \E_{\mathbf{z} \sim P} \ell(\theta, \mathbf{z})$, not the sample version. Therefore, stochastic approximation pioneered by \cite{robbins1951stochastic, kiefer1952stochastic} stands out as a natural optimization approach for the statistical inference problem.
In modern practice, \textit{Stochastic Gradient Descent} (SGD) with mini-batches of size $n$ is widely used,
\begin{align}
	\theta_{t+1} = \theta_{t} - \eta \widehat{\E}_n \nabla_\theta \ell(\theta_t, \mathbf{z}),
\end{align}
where $\widehat{\E}_n$ is the empirical expectation over $n$ independently sampled mini-batch data.

Our first observation follows from the intuition that Gaussian approximation holds for each step when $n$ is not too small, which we will make rigorous in a moment. Define
\begin{align}
	\mathbf{b}(\theta) &= \E_{\mathbf{z} \sim P} \nabla_{\theta} \ell(\theta, \mathbf{z}), \label{eq:grad} \\
	\mathbf{V}(\theta) &= \left\{ \Cov[\nabla_{\theta} \ell(\theta, \mathbf{z}) ] \right\}^{1/2}, \label{eq:cov}
\end{align}
then observe the following approximation for \eqref{eq:heuristic} via Central Limit Theorem (CLT)
\begin{align}
	\theta_{t+1} &= \theta_{t} - \eta \widehat{\E}_n \nabla_\theta \ell(\theta_t, \mathbf{z}) \nonumber \\
	&= \theta_{t} - \eta \E \nabla_\theta \ell(\theta_t, \mathbf{z}) + \eta \left[ \E \nabla_\theta \ell(\theta_t, \mathbf{z}) - \widehat{\E}_n \nabla_\theta \ell(\theta_t, \mathbf{z}) \right] \nonumber \\
	& \approx \theta_{t} - \eta \mathbf{b}(\theta_t) + \sqrt{2\beta^{-1} \eta} \mathbf{V}(\theta_t) \mathbf{g}_t \label{eq:heuristic}, \quad  \text{with}~\beta \triangleq \frac{2n}{\eta},
\end{align}
where $\mathbf{g}_t, t\geq 0$ are independent isotropic Gaussian vectors\footnote{CLT states that for $X_i, i\in [n]$ i.i.d  sampled, asymptotically the following convergence in distribution holds $\sqrt{n} \left[ \frac{1}{n} \sum_{i=1}^n X_i - \E X \right] \inLaw \mathcal{N}(0, \Cov(X)).$
	If we substitute $X_i = \V(\theta_t)^{-1} \nabla_\theta \ell(\theta_t, Z_i)$, condition on $\theta_t$, one can see where the isotropic Gaussian emerges.}.
The combination of $n,\eta$ provides a stronger approximation guarantee at each iteration
for large $n$, in contrast to the asymptotic normal approximation for the average trajectory in \cite{polyak1992acceleration} as $t\rightarrow \infty$.
The $\beta^{-1}$ quantifies the ``variance'' injected in each step (due to sampled mini-batches), or the ``temperature'' parameter: the larger the $\beta$ is, the closer the distribution is concentrated near the deterministic steepest gradient descent updates. The scaling of the step-size $\eta$ relates to Cauchy discretization of the It\^{o} diffusion process (as $\eta \rightarrow 0$) $$d \theta_t = - \mathbf{b}(\theta_t) dt + \sqrt{2\beta^{-1}} \mathbf{V}(\theta_t) dB_t.$$

Our second observation comes from a classic ``standardization'' idea in statistics --- we want to adjust the stochastic gradient vector at step $t$ by $\mathbf{V}(\theta_t)$ so that the conditional noise (conditioned on $\theta_t$) for each coordinate is independent and homogenous,
\begin{align}
	\theta_{t+1} &= \theta_{t} - \eta \mathbf{V}(\theta_t)^{-1} \widehat{\E}_n \nabla_\theta \ell(\theta_t, \mathbf{z}) \nonumber \\
	& \approx \theta_{t} - \eta \mathbf{V}(\theta_t)^{-1} \mathbf{b}(\theta_t) + \sqrt{2\beta^{-1} \eta} \mathbf{g}_t \label{eq:heuristic-masg}.
\end{align}
Namely, noisier gradient information is weighted less.
This standardization trick in statistics is similar to the Newton/quasi-Newton method in second-order optimization, though with notable difference. The similarity lies in the fact the noisy gradient information is weighted according to some local version of ``curvature.'' However, the former uses root of second moment matrix, while the latter uses Hessian (second-order derivatives).

To answer the inference question about $L(\theta)$ using the ``moment-adjusted'' iterative method proposed in \eqref{eq:heuristic-masg}, one needs to know the sampling distribution of $\theta_t$ for a fixed $t$. One hopes to directly describe the distribution in a non-asymptotic fashion, instead of characterizing this distribution either through the asymptotic normal limit \citep{polyak1992acceleration} (passing over data one at a time) in the convex senario, or through the invariant distribution which could in theory take exponential time to converge for general non-convex $L(\theta)$ \citep{bovier2004metastability,raginsky2017non}. One thing to notice is that, at a fixed time $t$, the distribution is distinct from Gaussian, for general $\mathbf{b}$ and $\mathbf{V}$.
From an optimization angle, one would like the iterative algorithm to converge (to a local optima) quickly. This is also important for the purpose of inference: given the distribution can be approximately characterized at each step, one hopes that the distribution will concentrate near a local minimum of the population landscape $L(\theta)$ within a reasonable time budget, before the error accumulates in the stochastic process and invalidates the approximation.

\paragraph{Notations}
For a vector $v$, $\| v \| = \sqrt{v^T v}$ denotes the $\ell_2$ norm, and $v \otimes v = vv^T$ denotes the outer-product. We use $\| M \|$ to denote the operator norm for a matrix $M$. For a positive semi-definite matrix $M$, $\langle v, w \rangle_{M} =  v^T M w$. We use $t \in [T]$ to denote indices $0\leq t\leq T$, and ``$\inLaw$'' for convergence in distribution. For two matrices $A$ and $B$, we use $A \otimes_K B$ to represent the Kronecker product. Moreover, $O, o$ are the Bachmann-Landau notations and $O_{\bf p}$ denotes stochastic boundedness. In the discussion, we use $O_{\epsilon, \delta}(\cdot)$ to denote the order of magnitude for parameters $\epsilon, \delta$ only, treating others as constants. For two probability measures $\mu, \nu$, we use $D_{\rm KL}(\mu, \nu)$ and $D_{\rm TV}(\mu, \nu)$ to denote the Kullback-Leibler and total variation distance respectively. Throughout, we denote the population gradient $\b \in \mathbb{R}^p$, moment matrix $\V, \Sig \in \mathbb{R}^{p \times p}$ using the boldface notation, with the hope of emphasizing their role in the paper.

\subsection{Contributions and Organization}

We propose the \textit{\textbf{M}oment-\textbf{a}djusted \textbf{s}tochastic \textbf{Grad}ient descent} (MasGrad), an iterative optimization method that infers the stationary points of the population landscape $L(\theta)$, namely $\{ \theta \in \mathbb{R}^p: \|\nabla L(\theta) \| = 0 \}$. The MasGrad is a simple variant of SGD that adjusts the descent direction using $\mathbf{V}(\theta_t)^{-1}$ (defined in \eqref{eq:cov}, the square root of the inverse covariance matrix) at the current location,
\begin{align*}
	\text{MasGrad}: \quad \theta_{t+1} &= \theta_{t} - \eta \mathbf{V}(\theta_{t})^{-1} \widehat{\E}_n \nabla_\theta \ell(\theta_t, \mathbf{z}).
\end{align*}

We summarize our main contributions in two perspectives. Extensions including estimation and computation of the moment-adjusted gradients will be discussed later in Section~\ref{sec:est-comp-mat-root}. 

\smallskip
\noindent \textbf{Inference.} \quad The distribution of MasGrad updates $\theta_{t} \in \mathbb{R}^p$, with $n$ independently sampled mini-batch data at each step, can be characterized in a non-asymptotic fashion. Informally, for any data-generating distribution $\mathbf{z} \sim P$ under mild conditions, the distribution of $\theta_t$ --- denoted as $\mu(\theta_{t})$ --- satisfies,
\begin{align*}
	D_{\rm TV} (\mu(\theta_t), \nu_{t, \eta}) \leq O_{t, n} \left( \sqrt{\frac{t}{n}} \right)  \quad \Rightarrow \quad \mu(\theta_t) \inLaw \nu_{t, \eta}, ~\text{converge in distribution as $n \rightarrow \infty$}.
\end{align*}
Here $\nu_{t, \eta}$ is the distribution of $\xi_t$ that follows the update initialized with $\xi_0 = \theta_0$
\begin{align}
	\label{eq:xi}
 \xi_{t+1} = \xi_t - \eta \mathbf{V}(\xi_t)^{-1} \mathbf{b}(\xi_t) + \sqrt{2\beta^{-1} \eta} \mathbf{g}_t, ~ \mathbf{g}_t \sim \mathcal{N}(0, I_p)~\text{and}~\beta = \frac{2n}{\eta}.
\end{align}
Remark that $\nu_{t, \eta}$ only depends on $t, \eta$, and the first and second moments $\mathbf{b},\mathbf{V}$ of $\nabla \ell(\theta, \mathbf{z})$, regardless of the specific data-generating distribution $\mathbf{z} \sim P$.
The rigorous statement is deferred to Thm.~\ref{thm:couple.p}, and further extensions to the continuous time analog are discussed in Appendix~\ref{sec:continuous-langevin}.

\smallskip
\noindent \textbf{Optimization.} \quad Interestingly, in the strongly convex case such as in generalized linear models (GLMs), the ``standardization'' idea achieves the Nesterov acceleration \citep{nesterov1983method, nesterov2013introductory}. Informally, the number of iterations for an $\epsilon$-minimizer for gradient descent requires
\begin{align*}
	T_{\rm GD} = O_{\epsilon, \kappa}\left( \kappa \log \frac{1}{\epsilon} \right), \quad \text{for some $\kappa > 1$}.
\end{align*}
We show that for GLMs under mild conditions, MasGrad reduces the number of iterations to
\begin{align*}
	T_{\rm MasGrad} = O_{\epsilon, \kappa}\left( \sqrt{\kappa} \log \frac{1}{\epsilon} \right),
\end{align*}
which matches Nesterov's acceleration in the strongly convex case. The formal statement is deferred to Section~\ref{sec:acceleration}, where extensions including proximal updates are discussed.

\smallskip
Combining the inference and optimization theory together, we present informally the results for both the \textit{convex} and \textit{non-convex} cases. Recall that $\theta \in \mathbb{R}^p$.

\smallskip
\noindent \textbf{Convex.} \quad In the strongly convex case, MasGrad with a properly chosen step-size and the following choice of parameters
\begin{align*}
	T = O_{\epsilon}\left(\log \frac{1}{\epsilon}\right) ~~\text{and}~~ n = O_{\epsilon,p}\left( \frac{p}{\epsilon} \right),
\end{align*}
satisfies
	\begin{align*}
		&\text{inference}: \quad D_{\rm TV}\left( \mu(\theta_T), \mu(\xi_T) \right) \leq O_{\epsilon}\left( \sqrt{\epsilon \log 1/\epsilon} \right),\\
		&\text{optimization}: \quad \E L(\theta_T) - \min_{\theta} L(\theta) \leq \epsilon, ~\E L(\xi_T) - \min_{\theta} L(\theta) \leq \epsilon,~~\text{where $\xi_T \sim \nu_{T, \eta}$,}
	\end{align*}
	where the evolution of $\xi_t$ is defined in \eqref{eq:xi}.
Here the total number of samples needed is $nT = O_\epsilon(\epsilon^{-1} \log 1/\epsilon)$. The formal result is stated in Thm.~\ref{thm:converge}.

\smallskip
\noindent \textbf{Non-convex.} \quad Under mild smoothness conditions, MasGrad with a proper step-size and the following choice of parameters
\begin{align*}
	T = O_{\epsilon,\delta, p}\left( \frac{1 \vee p\delta^2}{\epsilon^2}\right) ~~\text{and}~~ n = O_{\epsilon,\delta, p}\left( \frac{\delta^{-2} \vee p}{\epsilon^2}\right),
\end{align*}
satisfies
	\begin{align*}
		&\text{inference}: \quad D_{\rm TV}\left( \mu(\theta_t, t\in [T]), \mu(\xi_t, t\in [T]) \right) \leq O_{\delta}(\delta),\\
		&\text{optimization}: \quad \E \min_{t \leq T} \| \nabla L(\theta_t) \| \leq \epsilon, ~\E \min_{t \leq T} \| \nabla L(\xi_t) \| \leq \epsilon,~~\text{where $\xi_t \sim \nu_{t,\eta}$, for $t \in [T]$.}
	\end{align*}
	Here the total number of samples needed is $nT = O_{\epsilon,\delta}(\epsilon^{-4} \delta^{-2})$. The formal result is deferred to Thm.~\ref{thm:non-convex}.

\section{Relations to the Literature}
\label{sec:literature}

In the case of a differentiable convex $L(\theta)$, finding a minimum is equivalent to solving $\nabla L(\theta) = 0$. This simple equivalence reveals that the vanilla SGD, which takes the form\footnote{Recognize that $\nabla_\theta \ell(\theta_t, z_t)$ is an unbiased estimate of the population gradient as $\nabla_\theta L(\theta_t) = \E_{\mathbf{z} \sim P}[\nabla_\theta \ell(\theta_t, \mathbf{z})]$.}
\begin{align}
	\label{eq:sto.approx}
	\theta_{t+1} = \theta_{t} - \eta_t \nabla_\theta \ell(\theta_t, z_t),
\end{align}
is an instance of stochastic first-order approximation methods. This class of methods are iterative algorithms that attempt to solve fixed-point equations (for example, $\nabla L(\theta) = 0$) provided noisy observations (for example, $\nabla_\theta \ell(\theta_t, z_t)$) \citep{robbins1951stochastic,kiefer1952stochastic,toulis2017asymptotic,chen2016statistical,li2017statistical}. Using slowly diminishing step-sizes $\eta_t = O(1/t^{\alpha})$ ($\alpha<1$), \citet{ruppert1988} and \citet{polyak1990} showed that acceleration using the average over trajectories of this recursive stochastic approximation algorithm attains optimal convergence rate for a strongly convex $L$ (see \cite{polyak1992acceleration} for more details). Recently, the running time of stochastic first-order methods are considerably improved using combinations of variance-reduction techniques \citep{roux2012stochastic,johnson2013accelerating} and Nesterov's acceleration \citep{ghadimi2016accelerated,cotter2011better,jofre2017variance,ghadimi2012optimal,arjevani2016oracle}.

Despite the celebrated success of stochastic first-order methods in modern machine learning tasks, researchers have kept improving the per-iteration complexity of second-order methods such as Newton or quasi-Newton methods, due to their faster convergence. 
A fruitful line of research has focused on how to improve asymptotic convergence rate as $t \rightarrow \infty$ through pre-conditioning, a technique that involves approximating the unknown Hessian $\mathbf{H}(\theta) = \nabla^2_\theta L(\theta)$ (see, for instance, \citet{bordes2009sgd} and references therein).
Utilizing the curvature information reflected by various efficient approximations of the Hessian matrix, stochastic quasi-Newton methods \citep{moritz2016linearly,byrd2016stochastic,wang2017stochastic,schraudolph2007stochastic,mokhtari2015global,becker2012quasi}, Newton sketching or subsampled Newton \citep{pilanci2015newton,xu2016sub,berahas2017investigation,bollapragada2016exact}, and stochastic approximation of the inverse Hessian via Taylor expansion \citep{agarwal2017second} have been proposed to strike balance between convergence rate and per-iteration complexity.

In the information geometry literature, one closely related method is the natural gradient \citep{amari1998natural,amari2012differential}. When the parameter space enjoys a certain structure, it has been shown that natural gradient outperforms the classic gradient descent both theoretically and empirically. To adapt the natural gradient to our setting, we relate the loss function to a generative model $\ell(\theta, z) = - \log p_{\theta}(z)$. The Riemannian structure of the parameter space (manifold) of the statistical model is defined by the Fisher information
\begin{align*}
	\mathbf{I}(\theta) = \E_{\mathbf{z} \sim P} \left[ \nabla_{\theta} \ell(\theta, \mathbf{z}) \otimes \nabla_{\theta} \ell(\theta, \mathbf{z})  \right].
\end{align*}
The natural gradient can be viewed as the steepest descent induced by the Riemannian metric
\begin{align*}
	\theta_{t+1} &= \argmin_{\theta} \left[ L(\theta_t) + \langle \nabla_{\theta} L(\theta_t), \theta - \theta_t \rangle + \frac{1}{2\eta_t} \| \theta - \theta_t \|_{\mathbf{I}(\theta_t)}^2 \right] \\
	&= \theta_t - \eta_t \mathbf{I}(\theta_t)^{-1} \nabla_{\theta} L(\theta_t).
\end{align*}
Note the intimate connection between natural gradient descent and approximate second-order optimization method, as the Fisher information can be heuristically viewed as an approximation of the Hessian \citep{schraudolph2002fast, martens2014new}.

Another popular and closely related example as such is AdaGrad \citep{duchi2011adaptive}, which is a variant of SGD that adaptively determines learning rates for different coordinates by incorporating the geometric information of past iterates. In its simplest form, AdaGrad records previous gradient information through
\[
G_t = \sum_{i=1}^t \nabla \ell(\theta_i, z_i) \otimes \nabla \ell(\theta_i, z_i),
\]
and this procedure then updates iterates according to
\[
\theta_{t+1} = \theta_t - \gamma G^{-\frac12}_t \nabla \ell(\theta_t, z_t),
\]
where $\gamma > 0$ is fixed. In large-scale learning tasks, evaluating $G_t^{-\frac12}$ is computationally prohibitive and thus is often suggested to use $\text{diag}(G_t)^{-\frac12}$ instead. It should be noted, however, that the theoretical derivation of regret bound for AdaGrad considers $G_t^{-\frac12}$. AdaGrad is a flexible improvement on SGD and can easily extend to non-smooth optimization and non-Euclidean optimization such as mirror descent. With the geometric structure $G_t$ learned from past gradients, AdaGrad assigns different learning rates to different components of the parameter, allowing infrequent features to take relatively larger learning rates. This adjustment is shown to speed up convergence dramatically in a wide range of empirical problems \citep{pennington2014glove}.

Stochastic Gradient Langevin Dynamics (SGLD) has been an active research field in sampling and optimization in recent years \citep{welling2011bayesian, dalalyan2017theoretical,
bubeck2015sampling, raginsky2017non, mandt2017stochastic, brosse2017sampling, tzen2018local, durmus2018efficient}. SGLD injects an additional $\sqrt{2\beta^{-1}\eta}$ level isotropic Gaussian noise to each step of SGD with step-size $\eta$, where $\beta$ is the inverse temperature parameter. Besides similar optimization benefits as SGD such as convergence and chances of escaping stationary points, the injected randomness of SGLD provides an efficient way of sampling from the targeted invariant distribution of the continuous-time diffusion process, which has been shown to be useful statistically in Bayesian sampling \citep{welling2011bayesian, mandt2017stochastic, durmus2018efficient}. 

In the current paper, we take a distinct approach: we motivate and analyze a variant of SGD through the lens of Langevin dynamics, from a frequentist point of view, and then present the optimization benefits as a by-product of the statistical motivation. The approximation in Eqn.~\eqref{eq:heuristic} relates the density evolution of $\theta_s$ to a discretized version of It\^{o} diffusion process (as $\eta \rightarrow 0$)
\begin{align*}
   d \theta_s = - \mathbf{b}(\theta_s) ds + \sqrt{2\beta^{-1}} \mathbf{V}(\theta_s) dB_s.
\end{align*}

The invariant distribution $\pi(\theta)$ satisfies the following Fokker--Planck equation
\begin{align*}
	\beta^{-1}  \sum_{i,j} \frac{\partial^2}{\partial x_i x_j} (\pi \mathbf{a}_{ij}) + \sum_{i} \frac{\partial}{\partial x_i} (\pi \mathbf{b}_{i}) = 0
\end{align*}
where $\mathbf{a}_{ij}(x) = (\mathbf{V}(x) \mathbf{V}(x)')_{ij}$.
In general, the stationary distribution is hard to characterize unless both $\mathbf{V}$ and $\mathbf{b}$ take special simple forms.
For example, when $\mathbf{b}(x)$ is linear and $\mathbf{V}(x)$ is independent of $x$ as in \citep{mandt2017stochastic}, the diffusion process reduces to Ornstein-Uhlenbeck process with multivariate Gaussian as the invariant distribution. Another simple case is when $\mathbf{V}(x) = \mathbf{I}$, the diffusion process is also referred to as Langevin dynamics, with the Gibbs measure $\pi(\theta) \propto \exp(-\beta L(\theta))$ as the unique invariant distribution \citep{welling2011bayesian, dalalyan2017theoretical, raginsky2017non}.

\section{Statistical Inference via Langevin Diffusion}
\label{sec:stat}
In this section we will explain why \textit{\textbf{M}oment-\textbf{a}djusted \textbf{s}tochastic \textbf{Grad}ient descent} (MasGrad) produces recursive updates whose statistical distribution can be characterized. We would like to mention that MasGrad at the same time achieves significant acceleration in optimization in the strongly convex case (detailed in Section~\ref{sec:acceleration}). For the general non-convex case, we provide non-asymptotic theory for inference and optimization in Section~\ref{sec:non-convex}. We first present the simplest version of the algorithm, assuming that $\mathbf{V}(\theta)^{-1}$ can be evaluated at any given $\theta$. Statistical estimation and efficient direct computation of $\mathbf{V}(\theta)^{-1}$ will be discussed in Section~\ref{sec:est-comp-mat-root}.

Recall the MasGrad we introduced, which adjusts the gradient direction using the root of the inverse covariance matrix at the current location,
\begin{align}
	\label{eq: MasGrad}
	\text{MasGrad}: \quad \theta_{t+1} &= \theta_{t} - \eta \mathbf{V}(\theta_{t})^{-1} \widehat{\E}_n \nabla_\theta \ell(\theta_t, \mathbf{z}).
\end{align}
As we have heuristically outlined in Eqn.~\eqref{eq:heuristic}, the MasGrad can be approximated by the following discretized Langevin diffusion,
\begin{align}
	\label{eq:discrete.diff}
	\text{Discretized diffusion}: \quad \xi_{t+1} &= \xi_{t} - \eta \mathbf{V}(\xi_{t} )^{-1} \mathbf{b}(\xi_{t}) + \sqrt{2\beta^{-1} \eta} \mathbf{g}_t.
\end{align}
In this section, we establish non-asymptotic bounds on the distance between the distribution of MasGrad process $\mathcal{L}(\theta_t, t \in [T])$ and discretized diffusion process $\mathcal{L}(\xi_t, t \in [T])$.

The proof is based on the entropic Central Limit Theorem (entropic CLT) \citep{barron1986entropy, bobkov2013, bobkov2014berry}. The classic CLT based on convergence in distribution is too weak for our purpose: we need to translate the non-asymptotic bounds at each step to the whole stochastic process. It turns out that the entropic CLT couples naturally with the chain-rule property of relative entropy, which together provides non-asymptotic characterization on closeness of the distributions for the stochastic processes.

Let's first state the standard assumptions for entropic CLT. These assumptions can be found in \citep{bobkov2013}. Remark that we are focusing on fixed dimension setting.  
\begin{enumerate}[label={\bf (A.\arabic*)}]
	\item Absolute continuity to Gaussian: assume random vector $X \in \mathbb{R}^p$ has bounded entropic distance to the Gaussian distribution, for some constant $D_1$ 
	\begin{align*}
		D_{\rm KL}\left( \mu(X) ||  \mu(\mathbf{g}) \right) < D_1, \quad \text{where $\mathbf{g} \sim \mathcal{N}(0,I_p)$.}
	\end{align*}
	\item Finite $(4+\delta)$-th moments: assume that there exists constant $D_2$
	\begin{align*}
		\mathbb{E} \| X \|^{4+\delta} < D_2, \quad \text{for some small $\delta > 0$.}
	\end{align*}
\end{enumerate}

Define $\forall i$, the stochastic component of the adjusted gradient direction
\begin{align}
	\label{eq:asmp}
X_i(\theta) = \mathbf{V}(\theta)^{-1} \left[ \nabla_\theta \ell(\theta, z_i) -  \E_{\mathbf{z} \sim P} \nabla_\theta \ell(\theta, \mathbf{z}) \right].
\end{align}
It is clear that $X_i$'s are i.i.d. with $\E X_i(\theta) = 0$ and $\Cov [X_i(\theta)] = I_p$. Here $X_i(\theta)$ is defined on the same $\sigma$-field as $z_i$ drawn from $P$.

\begin{thm}[Non-asymptotic bound for inference]
	\label{thm:couple.p}
	Let $\mu(\theta_t, t \in [T])$ denote $\mathcal{L}(\theta_t, t \in [T])$, the joint distribution of MasGrad process, and $\mu( \xi_t, t \in [T] )$ be the joint distribution of the discretized diffusion process in \eqref{eq:discrete.diff}. Consider the same initialization $\theta_0 = \xi_0$. 
	
	Assume that uniformly for any $\theta$, $X(\theta)$ defined in \eqref{eq:asmp} satisfies {\bf(A.1)} and {\bf(A.2)} with constants $D_1, D_2$ that only depends on $p$. Then the following bound holds,
	\begin{align}
		D_{\rm TV}\left( \mu(\theta_t, t \in [T]), \mu( \xi_t, t \in [T] ) \right) \leq  C \sqrt{\frac{T}{n} + o\left( \frac{T(\log n)^{\frac{p - (4+\delta)}{2}}}{n^{1+\frac{\delta}{2}}} \right) },
	\end{align}
	where $C$ is some constant that depends on the $D_1$ and $D_2$ only.
\end{thm}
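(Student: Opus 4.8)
The plan is to bound the Kullback--Leibler divergence between the two joint laws and then convert to total variation through Pinsker's inequality, $D_{\rm TV}(\mu,\nu)\le\sqrt{\tfrac12 D_{\rm KL}(\mu\|\nu)}$. First I would rewrite both recursions so that their only difference is the law of the injected noise. Writing $S_n(\theta)=\frac{1}{\sqrt n}\sum_{i=1}^n X_i(\theta)$ with $X_i$ as in \eqref{eq:asmp}, the MasGrad step becomes $\theta_{t+1}=\theta_t-\eta\V(\theta_t)^{-1}\b(\theta_t)-\frac{\eta}{\sqrt n}S_n(\theta_t)$, while the discretized diffusion step \eqref{eq:discrete.diff} reads $\xi_{t+1}=\xi_t-\eta\V(\xi_t)^{-1}\b(\xi_t)+\frac{\eta}{\sqrt n}\mathbf{g}_t$ (using $\sqrt{2\beta^{-1}\eta}=\eta/\sqrt n$ since $\beta=2n/\eta$). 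Both share the \emph{same} deterministic drift map $x\mapsto x-\eta\V(x)^{-1}\b(x)$ and the same noise scale $\eta/\sqrt n$; by construction $\E X_i(\theta)=0$ and $\Cov[X_i(\theta)]=I_p$, so $S_n(\theta)$ and $\mathbf{g}_t$ agree in mean $0$ and covariance $I_p$.

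Second, because a fresh mini-batch is drawn independently at each step, both $(\theta_t)$ and $(\xi_t)$ are Markov chains started from the same point. I would then apply the chain rule for relative entropy. The initial-distribution term vanishes since $\theta_0=\xi_0$, and the Markov property gives
\begin{align*}
 D_{\rm KL}\big( \mu(\theta_t, t\in[T]) \,\|\, \mu(\xi_t, t\in[T]) \big) = \sum_{t=0}^{T-1} \E_{\theta_t}\Big[ D_{\rm KL}\big( P(\cdot\mid \theta_t) \,\|\, Q(\cdot\mid \theta_t) \big) \Big],
\end{align*}
where $P(\cdot\mid x)$ and $Q(\cdot\mid x)$ are the one-step transition kernels of MasGrad and of the diffusion evaluated at $x$.

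Third, for each fixed $x$ the two kernels are obtained from $S_n(x)$ and $\mathbf{g}_t$ by the same affine map (the common drift shift followed by scaling by $\eta/\sqrt n$). Relative entropy is invariant under a shared invertible affine transformation, and $\mathbf{g}_t$ is symmetric, so the conditional divergence collapses to the entropic distance of the normalized sum to the standard Gaussian, $D_{\rm KL}(P(\cdot\mid x)\|Q(\cdot\mid x))=D_{\rm KL}\big(\mathcal L(S_n(x))\,\|\,\mathcal N(0,I_p)\big)$. This is precisely the quantity controlled by the entropic CLT: under the uniform hypotheses \textbf{(A.1)}--\textbf{(A.2)} on $X(\theta)$, the rate of \citet{bobkov2013, bobkov2014berry} yields, uniformly in $x$, $D_{\rm KL}(\mathcal L(S_n(x))\|\mathcal N(0,I_p))\le \frac{C'}{n}+o\big((\log n)^{(p-(4+\delta))/2}n^{-(1+\delta/2)}\big)$ with $C'$ depending only on $D_1,D_2$. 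Summing the $T$ per-step bounds and inserting into Pinsker's inequality delivers the claimed estimate.

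The main obstacle is the per-step entropic CLT rate together with its \emph{uniformity} in $\theta$. Convergence in distribution alone is useless here, because propagating $T$ per-step errors along the trajectory requires a quantitative rate in a metric that tensorizes; relative entropy is the right object exactly because its chain rule turns the trajectory comparison into a sum. Obtaining the $O(1/n)$ leading term (rather than the weaker $o(1)$ of Barron's theorem \citep{barron1986entropy}) uses the finite $(4+\delta)$-moment control of \textbf{(A.2)} and the absolute-continuity-to-Gaussian condition \textbf{(A.1)} ensuring $S_n(x)$ has a density with finite entropic distance for every $n$. Here the role of the moment adjustment $\V(\theta)^{-1}$ is essential: it standardizes $\Cov[X_i(\theta)]$ to $I_p$ so that the CLT target is the fixed $\mathcal N(0,I_p)$ appearing in \eqref{eq:discrete.diff}, and checking that $D_1,D_2$ can be taken uniform over the state space is where the assumptions are invoked in full.
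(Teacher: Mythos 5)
Your proposal follows essentially the same route as the paper's proof: rewrite both recursions so they share the drift map and differ only in the injected noise, use the chain rule for relative entropy (with affine invariance of KL) to reduce the trajectory comparison to a sum of per-step entropic-CLT terms controlled by Theorem 6.1 of \citet{bobkov2013} under \textbf{(A.1)}--\textbf{(A.2)}, and finish with Pinsker's inequality. The argument is correct, including the observation that the symmetry of $\mathbf{g}_t$ absorbs the sign of the noise term; the factor-of-two difference in your statement of Pinsker's inequality is only a convention for $D_{\rm TV}$ and is absorbed into the constant $C$.
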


\begin{remark}
	\rm
	The above theorem characterizes the sampling distribution of MasGrad -- $\theta_t$, using a measure that only depends on the first and second moments of $\nabla \ell(\theta, \mathbf{z})$, namely $\mathbf{V}(\theta)^{-1}\mathbf{b}(\theta)$, regardless of the specific the data-generating distribution $\mathbf{z} \sim P$. Observe that the distribution closeness is established in a strong total variation distance sense, for the two stochastic processes $\{\theta_t, t\in [T]\}$ and $\{ \xi_t, t \in [T]\}$. If we dig in to the proof, one can easily obtain the following marginal result
	\begin{align*}
		D_{\rm TV}\left( \mu(\theta_T), \mu(\xi_T) \right) \leq \sqrt{2 D_{\rm KL} \left( \mu(\theta_T) || \mu(\xi_T) \right) } \leq \sqrt{2 D_{\rm KL} \left( \mu(\theta_t, t\in [T]) || \mu(\xi_t, t\in [T]) \right) },
	\end{align*}
where the last inequality follows from the chain-rule of relative entropy. 
Therefore, one can as well prove for the last step distribution
	$$
	D_{\rm TV}\left( \mu(\theta_T), \mu(\xi_T) \right) \leq C \sqrt{\frac{T}{n} }.
	$$
	
\end{remark}

\begin{remark}
	\rm
	
	One important fact about Thm.~\ref{thm:couple.p} is that it holds for any step-size $\eta$, which provides us additional freedom of choosing the optimal step-size for the optimization purpose. This theorem is stated in the fixed dimensional setting when $p$ does not change with $n$. Remark in addition that the Gaussian approximation at each step still holds with high probability, in the moderate dimensional setting when $p = o(\frac{\log n}{\log \log n})$, as shown in the non-asymptotic bound in the above Thm.~\ref{thm:couple.p}. We would like to emphasize that the current paper only considers the fixed dimension setting, while considering the mini-batch sample size $n$ and running time $T$ varying. Assumptions (A.1) and (A.2) are standard assumptions in entropic CLT: (A.1) states that the distribution for each stochastic gradient is non-lattice with bounded relative entropy to Gaussian; (A.2) is the standard weak moment condition. Note here that the constants $D_1$ and $D_2$ depend on the dimension implicitly.

	For the purpose of statistical inference, one can always approximately characterize the distribution of MasGrad using Thm.~\ref{thm:couple.p}. As an additional benefit, the result naturally provides us an algorithmic way of sampling this target universal distribution $\mu(\xi_t)$.
	For some particular tasks, it remains of theoretical interest to analytically characterize the distribution of MasGrad using the continuous time Langevin diffusion and its invariant distribution. We defer the analysis of the discrepancy between the discretized diffusion to the continuous time analog to Appendix~\ref{sec:continuous-langevin}.
\end{remark}

\section{Convexity and Acceleration}
\label{sec:acceleration}

In this section, we will demonstrate that the ``moment-adjusting'' idea motivated from standardizing the error from an inference perspective achieves similar effect as acceleration in convex optimization. We will investigate \textit{Generalized Linear Models} (GLMs) as the main example. Later, we will also discuss the case with non-smooth regularization. It should be noted that using first-order information to achieve acceleration was first established in the seminal work by \cite{nesterov1983method, nesterov2013introductory} based on the ingenious notion of estimating sequence. Before diving into the technical analysis, we would like to point out that in MasGrad the moment-adjusting matrix $\V(\theta)$ can be estimated using only first-order information, however, as one will see, MasGrad achieves acceleration for GLMs in a way resembles the approximate second-order method such as quasi-Newton.

\subsection{Inference and optimization for optima}

Now we are ready to state the theory for inference and optimization using MasGrad in the strongly convex case. Let $L(w): \mathbb{R}^p \rightarrow \mathbb{R}$ be a smooth convex function. Recall $\mathbf{b}(w) = \nabla L(w)$, $\mathbf{H}(w) = \nabla^2 L(w)$ and $\mathbf{V}(w) \in \mathbb{R}^{p \times p}$ are positive definite matrices.
	Define
	\begin{align}
		\label{eq:conv.glm}
		\alpha &\triangleq \min_{v, w}~ \lambda_{\min} \left( \mathbf{V}(w)^{-1/2} \mathbf{H}(v) \mathbf{V}(w)^{-1/2}  \right) >0, \nonumber \\
		\gamma &\triangleq \max_{v, w}~ \lambda_{\max} \left( \mathbf{V}(w)^{-1/2} \mathbf{H}(v) \mathbf{V}(w)^{-1/2}  \right) >0. 
	\end{align}

\begin{thm}[MasGrad: strongly convex]
	\label{thm:converge}
    Let $\alpha, \gamma$ be defined as in \eqref{eq:conv.glm}.
	Consider the MasGrad updates $\theta_t$ in \eqref{eq: MasGrad} with step-size $\eta = 1/\gamma$, and the corresponding discretized diffusion $\xi_t$,
	\begin{align*}
		\xi_{t+1} = \xi_t - \eta \mathbf{V}(\xi_t)^{-1} \mathbf{b}(\xi_t)+ \sqrt{2\beta^{-1} \eta} \mathbf{g}_t,\quad \text{where $\beta = \frac{2n}{\eta}$}.
	\end{align*}
	Then for any precision $\epsilon >0$, one can choose
	\begin{align}
		\label{eq:conv-T-n}
		T = \frac{\gamma}{\alpha} \log \frac{2(L(\theta_0) - \min_{\theta} L(\theta))}{\epsilon} ~~\text{and}~~ n = \frac{4p \max_{\theta} \| \mathbf{V}(\theta) \|}{\alpha \epsilon},
	\end{align}
	such that
	\begin{align*}
		&(1)\quad D_{\rm TV}\left( \mu(\theta_t, t \in [T]), \mu( \xi_t, t \in [T] ) \right) \leq O_{\epsilon}\left( \sqrt{\epsilon \log (1/\epsilon)} \right),\\
		&(2)\quad \E L(\theta_t) - \min_{\theta} L(\theta) \leq \epsilon, ~ \E L(\xi_t) - \min_{\theta} L(\theta) \leq \epsilon,
	\end{align*}
	with in total $O_{\epsilon}(\epsilon^{-1} \log 1/\epsilon)$ independent data samples.
\end{thm}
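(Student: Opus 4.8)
The plan is to treat the two claims on very different footings: claim (1) is essentially a corollary of Thm.~\ref{thm:couple.p}, so the real work goes into the optimization claim (2), which I would prove through a single stochastic descent lemma that applies verbatim to both the MasGrad iterates $\theta_t$ and the discretized-diffusion iterates $\xi_t$.

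For the inference bound (1), I would substitute the prescribed $T$ and $n$ into the leading term of Thm.~\ref{thm:couple.p}. Since
\[
\frac{T}{n} = \frac{\gamma\,\epsilon}{4 p \max_\theta\|\mathbf{V}(\theta)\|}\,\log\frac{2(L(\theta_0)-\min_\theta L(\theta))}{\epsilon},
\]
and every factor other than $\epsilon$ is held constant, the quantity under the square root is of order $\epsilon\log(1/\epsilon)$, giving $D_{\rm TV}\le O_\epsilon\big(\sqrt{\epsilon\log(1/\epsilon)}\big)$; the residual $o(\cdot)$ term is of strictly smaller order in the fixed-dimension regime. The only thing to check is that assumptions (A.1)--(A.2) hold uniformly in $\theta$, which is inherited directly from the hypotheses.

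The core is claim (2). First I would unpack the definitions of $\alpha,\gamma$ into the global sandwich $\alpha\mathbf{V}(w)\preceq\mathbf{H}(v)\preceq\gamma\mathbf{V}(w)$, valid for \emph{all} $v,w$, which expresses smoothness and strong convexity of $L$ in the (location-dependent) $\mathbf{V}$-metric. From the upper bound and Taylor expansion I get the descent inequality
\[
L(y)\le L(w) + \langle\nabla L(w),\,y-w\rangle + \tfrac{\gamma}{2}\|y-w\|^2_{\mathbf{V}(w)},
\]
and from the lower bound the Polyak--{\L}ojasiewicz-type estimate $\|\nabla L(w)\|^2_{\mathbf{V}(w)^{-1}}\ge 2\alpha\big(L(w)-\min_\theta L(\theta)\big)$. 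The key observation I would exploit is that after moment adjustment the per-step stochastic displacement has isotropic conditional covariance $\sigma^2 I_p$ with $\sigma^2 = 2\beta^{-1}\eta = \eta^2/n$ for \emph{both} processes: for $\xi_t$ this is the injected Gaussian by construction, while for $\theta_t$ it follows because $\mathbf{V}^{-1}\big(\widehat{\E}_n\nabla\ell-\mathbf{b}\big)$ has covariance $\tfrac1n\mathbf{V}^{-1}\mathbf{V}^2\mathbf{V}^{-1}=\tfrac1n I_p$, by the very definition $\mathbf{V}=\{\Cov\nabla\ell\}^{1/2}$. Applying the descent inequality to one update, taking conditional expectation (the noise is mean-zero and uncorrelated with $\nabla L$), and using $\eta=1/\gamma$ together with the PL bound, I obtain the one-step recursion
\[
\E[L(\theta_{t+1})]-\min_\theta L(\theta)\le\Big(1-\tfrac{\alpha}{\gamma}\Big)\big(\E[L(\theta_t)]-\min_\theta L(\theta)\big)+\tfrac{\gamma\sigma^2}{2}\,\mathrm{tr}(\mathbf{V}(\theta_t)),
\]
which holds identically with $\xi$ in place of $\theta$, and where $\mathrm{tr}(\mathbf{V})\le p\max_\theta\|\mathbf{V}(\theta)\|$. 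Unrolling yields a geometric contraction plus a stationary noise floor $\tfrac{p\max_\theta\|\mathbf{V}(\theta)\|}{2\alpha n}$; the prescribed $n$ caps this floor at $\epsilon/8$, while the prescribed $T$ forces $(1-\alpha/\gamma)^T(L(\theta_0)-\min_\theta L)\le e^{-\alpha T/\gamma}(L(\theta_0)-\min_\theta L)=\epsilon/2$, giving $\E[L(\theta_T)]-\min_\theta L(\theta)\le\epsilon$ and total sample count $nT=O_\epsilon(\epsilon^{-1}\log(1/\epsilon))$.

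The main obstacle is the noise-floor bookkeeping: I must verify that the effective noise is \emph{exactly} isotropic for the genuine MasGrad process, not merely for its Gaussian surrogate $\xi_t$, since this is precisely what allows one lemma to cover both processes and what pins the dependence of $n$ on $p\max_\theta\|\mathbf{V}(\theta)\|$. A secondary care point is that the descent inequality must be invoked with the \emph{global} sandwich (over all $v,w$) so that it remains valid along the full random displacement rather than only infinitesimally at $\theta_t$.
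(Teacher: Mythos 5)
Your proposal is correct and follows essentially the same route as the paper: a descent inequality from the global sandwich $\alpha\mathbf{V}(w)\preceq\mathbf{H}(v)\preceq\gamma\mathbf{V}(w)$, a Polyak--{\L}ojasiewicz-type lower bound giving the contraction factor $1-\alpha/\gamma$, a one-step recursion with noise floor $p\max_\theta\|\mathbf{V}(\theta)\|/(2\alpha n)$, and then the prescribed choices of $T$ and $n$, with the total-variation claim read off from Theorem~\ref{thm:couple.p}. Your explicit verification that the conditional covariance of the MasGrad displacement is exactly $(\eta^2/n)I_p$ --- so that one recursion covers both $\theta_t$ and $\xi_t$ --- is precisely the matching-second-moments remark the paper makes at the end of its proof, only spelled out in more detail.
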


\begin{remark}
	\rm
	In plain language, discretized diffusion process $\xi_t, t \in [T]$, whose distribution only depends on the adjusted moments $\V^{-1}\b$, approximates the sampling distribution of MasGrad $\theta_t, t \in [T]$ in a strong sense, i.e., the distribution of paths are close in TV distance. In addition, as a stochastic optimization method, MasGrad's optimization guarantee depends on the ``modified'' condition number defined in \eqref{eq:conv.glm}. Let's sketch the proof. Using Lemma~\ref{lem:conv.glm} in Appendix~\ref{sec:proof}, for all $t>0$, one can prove
	\begin{align*}
		\E L(\xi_{t}) - \min_{\theta} L(\theta) \leq \left(1 - \frac{\alpha}{\gamma}\right)^t (L(\theta_0) - \min_{\theta} L(\theta)) +  \max_{\theta} \| \mathbf{V}(\theta) \| \cdot \frac{\gamma}{\alpha} \beta^{-1} p.
	\end{align*}
	Therefore we can define the condition number of MasGrad as
		\begin{align}
			\label{eq:cond.num}
			\kappa_{\rm MasGrad} = \frac{\max_{w, v} \lambda_{\max} \left( [\mathbf{V}(w)]^{-1/2} \mathbf{H}(v) [\mathbf{V}(w)]^{-1/2} \right)}{\min_{w, v} \lambda_{\min} \left( [\mathbf{V}(w)]^{-1/2} \mathbf{H}(v) [\mathbf{V}(w)]^{-1/2} \right) },~~ \kappa_{\rm GD} = \frac{\max_{v} \lambda_{\max} \left(  \mathbf{H}(v)  \right)}{\min_{v} \lambda_{\min} \left( \mathbf{H}(v) \right)},
		\end{align}
	in contrast to the condition number in gradient descent.
	
	If $\beta = \frac{2n}{\eta}$ and $T, n$ are chosen as in \eqref{eq:conv-T-n},
	we know that $\E L(\xi_{T}) - L(\theta_*) \leq \epsilon$. Recall the result we establish in Thm.~\ref{thm:couple.p}, the total variation distance between MasGrad and the discretize diffusion in this case is bounded by
		$\sqrt{T/n} = O_{\epsilon} \left( \sqrt{\epsilon \log (1/\epsilon) } \right),$
	and the total number of samples used is of the order $nT = O_{\epsilon, p}(p/\epsilon \log (1/\epsilon))$. This result can be contrasted with the classical asymptotic normality for MLE or ERM: to achieve an $\epsilon$-minimizer,
	\begin{align*}
		\epsilon \geq L(\widehat{\theta}_N) - L(\theta_*) \asymp \| \widehat{\theta}_N - \theta_* \|^2 \asymp \frac{p}{N}  \Leftrightarrow N = O_{\epsilon,p}(p/\epsilon),
	\end{align*}
	the asymptotic sample complexity scales $O_{\epsilon, p}(p/\epsilon)$. Similar calculations also hold with the Ruppert--Polyak average on stochastic approximation with a carefully chosen decreasing step-size. As we can see, our result holds non-asymptotically, and it achieves both the optimization and inference goal, with an additional logarithmic factor.
\end{remark}

\subsection{Acceleration for GLMs}
Now let's take GLMs as an example to articulate the effect of acceleration. We will first use an illustrating toy example to show the intuition in an informal way, and then present the rigorous acceleration result for GLMs.
\smallskip

\paragraph{Toy example (informal).} Consider $y_i = \langle x_i, \theta_\ast \rangle + \epsilon_i$, $\epsilon_i \sim \mathcal{N}(0, \sigma^2)$ i.i.d. for $i \in [N]$. Let's focus on the fixed design case (where the expectation is only over $\mathbf{y}$), the loss $\ell(\theta, (x, y)) = \frac{1}{2}(\langle x, \theta \rangle - y)^2$. Denote $X \in \mathbb{R}^{N \times p}$, then we have
\begin{align*}
\mathbf{b}(\theta) &= \mathbb{E} \left[ \frac{1}{N} \sum_{i=1}^N ( x_i^T \theta - y_i) x_i \right] = \frac{1}{N} \sum_{i=1}^N x_i x_i^T (\theta - \theta_*) = \frac{1}{N} X^T X (\theta - \theta_*), \\
\mathbf{V}(w) &= \left[ \frac{1}{N} \sum_{i=1}^N x_i x_i^T \sigma^2 \right]^{1/2} = \sigma  \left[\frac{1}{N} X^T X\right]^{1/2},
\end{align*}
and the Hessian is $\mathbf{H}(w) = X^T X/N.$
Therefore, in this case, we have $$\kappa_{\rm MasGrad} = \sqrt{\kappa_{\rm GD}}.$$
By applying Lemma~\ref{lem:conv.glm} in Appendix~\ref{sec:proof}, one achieves the same effect as Nesterov's acceleration in the strongly convex case \citep{nesterov2013introductory}. Remark that the above analysis is to demonstrate the intuition, and is not rigorous --- as MasGrad is sensible with the random design.

\paragraph{Generalized linear models, random design, mis-specified model.} 

	Now let's provide a rigorous and unified treatment for the generalized linear models.
Consider the generalized linear model \citep{mccullagh1984generalized} where the response random variable $\mathbf{y}$ follows from the exponential family parametrize by $(\theta, \phi)$,
\begin{align*}
	f(y; \theta,\phi) = b(y, \phi) e^{\frac{y\theta - c(\theta)}{d(\phi)}}
\end{align*}
where $\mu = \mathbb{E} [\mathbf{y} | \mathbf{x} = x] = c'(\theta)$, $c''(\theta)>0$, and the natural parameter satisfies the linear relationship $\theta = \theta(\mu) = x^Tw$. In this case, we choose the loss function according to the negative log-likelihood
\begin{align*}
	\ell(w, (x, y)) = - y_i x_i^T w + c(x_i^T w).
\end{align*}
Special cases include, 
\begin{itemize}
	\item Bernoulli model (Logistic regression): $c(\theta) = \log (1 + e^\theta), ~\text{where}~x_i^Tw = \theta =\log \frac{\mu}{1 - \mu}$;
	\item Poisson model (Poisson regression): $c(\theta) = e^\theta, ~\text{where}~x_i^Tw = \theta= \log \mu$;
	\item Gaussian model (linear regression): $c(\theta) = \frac{1}{2}\theta^2, ~\text{where}~x_i^Tw = \theta = \mu$.
\end{itemize}

We are interested in inference even when the model can be \textit{mis-specified}.
Consider the statistical learning setting where $z_i = (x_i, y_i) \sim P = P_{\mathbf{x}} \times P_{\mathbf{y}|\mathbf{x}}, i\in [N]$ i.i.d. from some unknown joint distribution $P$. We are trying to infer the parameters $w$ by fitting the data using a parametric exponential family, however, we allow the flexibility that the exponential family model for $P(\mathbf{y}|\mathbf{x}=x)$ can be mis-specified. Specifically, the true regression function $m_*(x) = \mathbb{E}(\mathbf{y}|\mathbf{x} = x)$ may not be $c'(x^T w)$ for all $w$, namely, may not be realized by any model in the exponential family model class.
We have the population landscape
\begin{align}
	\label{eq:glm.landscape}
	L(w) = \E_{(\mathbf{x}, \mathbf{y})\sim P} \left[ - \mathbf{y} \mathbf{x}^T w + c(\mathbf{x}^T w) \right].
\end{align}

Define the conditional variance $\xi(x) = \Var(\mathbf{y}|\mathbf{x}=x) \in \mathbb{R}$ and the bias $\beta(\mathbf{x}, w) \triangleq c'(\mathbf{x}^T w) - m_*(\mathbf{x}) \in \mathbb{R}$, we have the following acceleration result for GLMs.

\begin{thm}[Acceleration]
	\label{thm:acceleration}
	Consider the condition number defined in ~\eqref{eq:cond.num} for MasGrad and GD, and assume that there exists constant $C>1$ such that for any $x, w, v$,
	\begin{align*}
		0 < \max \left\{ \frac{\xi(x)^2 + \beta(x, w)^2}{c''(x^T v)}, \frac{c''(x^T v)}{\xi(x)^2}  \right\} < C^{1/3}.
	\end{align*}
	Then for the optimization problem associated with GLMs defined in \eqref{eq:glm.landscape}, the following holds
	$$
	\kappa_{\rm MasGrad} < C \sqrt{\kappa_{\rm GD}}.
	$$
\end{thm}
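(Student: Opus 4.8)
The plan is to exploit the common reweighted-Gram structure that both the Hessian and the gradient second-moment matrix inherit from the GLM form. From $\nabla_w\ell(w,(x,y)) = (c'(x^Tw)-y)\,x$ and $\nabla_w^2\ell(w,(x,y)) = c''(x^Tw)\,xx^T$ one reads off $\mathbf{H}(v) = \E\!\left[c''(\mathbf{x}^Tv)\,\mathbf{x}\mathbf{x}^T\right]$. Decomposing the residual $c'(\mathbf{x}^Tw)-\mathbf{y} = \beta(\mathbf{x},w) + (m_*(\mathbf{x})-\mathbf{y})$ into its conditional bias and a mean-zero noise, the cross term vanishes because $\E[\mathbf{y}-m_*(\mathbf{x})\mid\mathbf{x}]=0$, so the gradient second moment is $\mathbf{\Sigma}(w) := \mathbf{V}(w)^2 = \E\!\left[(\xi(\mathbf{x})^2+\beta(\mathbf{x},w)^2)\,\mathbf{x}\mathbf{x}^T\right]$, where $\xi(\mathbf{x})^2$ denotes the conditional variance appearing in the hypothesis. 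Thus $\mathbf{H}(v)$ and $\mathbf{\Sigma}(w)$ are both of the form $\E[\,\mathrm{weight}(\mathbf{x})\,\mathbf{x}\mathbf{x}^T\,]$, with scalar weights $c''(\mathbf{x}^Tv)$ and $\xi(\mathbf{x})^2+\beta(\mathbf{x},w)^2$ respectively.

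Next I would turn the pointwise scalar bounds of the hypothesis into a two-sided Loewner comparison. Since the weights enter linearly, $\xi(x)^2+\beta(x,w)^2 \le C^{1/3} c''(x^Tv)$ for all $x$ integrates to $u^T\mathbf{\Sigma}(w)u \le C^{1/3} u^T\mathbf{H}(v)u$ for every $u$, i.e. $\mathbf{\Sigma}(w)\preceq C^{1/3}\mathbf{H}(v)$; and $c''(x^Tv)\le C^{1/3}\xi(x)^2 \le C^{1/3}(\xi(x)^2+\beta(x,w)^2)$ integrates to $\mathbf{H}(v)\preceq C^{1/3}\mathbf{\Sigma}(w)$. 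These hold for every pair $(v,w)$ separately. Applying the operator monotonicity of $t\mapsto t^{1/2}$ (the L\"owner--Heinz inequality) to each---and using $(C^{1/3}A)^{1/2}=C^{1/6}A^{1/2}$---yields the uniform sandwich $C^{-1/6}\mathbf{H}(v)^{1/2}\preceq \mathbf{V}(w)\preceq C^{1/6}\mathbf{H}(v)^{1/2}$.

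Finally I would control the spectrum of $M(v,w):=\mathbf{V}(w)^{-1/2}\mathbf{H}(v)\mathbf{V}(w)^{-1/2}$, whose extreme eigenvalues, maximized and minimized over $(v,w)$, define $\kappa_{\rm MasGrad}$ via \eqref{eq:cond.num}. The substitution $z=\mathbf{V}(w)^{-1/2}u$ gives the generalized Rayleigh quotient $\lambda_{\max}(M(v,w)) = \max_z z^T\mathbf{H}(v)z / z^T\mathbf{V}(w)z$ and likewise for $\lambda_{\min}$. Bounding the denominator below by $C^{-1/6} z^T\mathbf{H}(v)^{1/2}z$ and noting that $z^T\mathbf{H}(v)z / z^T\mathbf{H}(v)^{1/2}z$ is a weighted average of the numbers $\sqrt{\lambda_i(\mathbf{H}(v))}$, hence lies in $[\sqrt{\lambda_{\min}(\mathbf{H}(v))},\sqrt{\lambda_{\max}(\mathbf{H}(v))}]$, I obtain $\lambda_{\max}(M(v,w))\le C^{1/6}\sqrt{\lambda_{\max}(\mathbf{H}(v))}$; symmetrically $\lambda_{\min}(M(v,w))\ge C^{-1/6}\sqrt{\lambda_{\min}(\mathbf{H}(v))}$. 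Taking the max and min over $(v,w)$ and forming the ratio gives $\kappa_{\rm MasGrad}\le C^{1/3}\sqrt{\kappa_{\rm GD}} < C\sqrt{\kappa_{\rm GD}}$ since $C>1$, which is the claim (in fact with a sharper constant).

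The step I expect to be the crux is the square-root comparison: because $\mathbf{H}(v)$ and $\mathbf{V}(w)$ are evaluated at different arguments $v\neq w$ they cannot be simultaneously diagonalized, so the argument must go through the Loewner order uniformly over all pairs, and it is essential to invoke operator monotonicity in the correct direction (the analogue for $t\mapsto t^2$ is false). A secondary point to treat with care is that $\mathbf{V}(w)^2$ is strictly the centered covariance $\mathbf{\Sigma}(w)-\mathbf{b}(w)\mathbf{b}(w)^T$; since the correction is positive semidefinite it only sharpens the upper comparison $\mathbf{\Sigma}(w)\preceq C^{1/3}\mathbf{H}(v)$, while the lower comparison is stated cleanest in the second-moment form above, matching the $\xi^2+\beta^2$ that enters the hypothesis.
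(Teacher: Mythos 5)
Your setup and your final spectral argument are sound, but there is a genuine directional error in the middle step, precisely at the point you flag as ``secondary.'' You establish the lower comparison in the form $\mathbf{H}(v) \preceq C^{1/3}\mathbf{\Sigma}(w)$ with $\mathbf{\Sigma}(w) = \E[(\xi(\mathbf{x})^2+\beta(\mathbf{x},w)^2)\mathbf{x}\mathbf{x}^T]$ the \emph{uncentered} second moment, and then feed it into the sandwich $C^{-1/6}\mathbf{H}(v)^{1/2} \preceq \mathbf{V}(w)$. But $\mathbf{V}(w)^2$ is the centered covariance, $\mathbf{V}(w)^2 = \mathbf{\Sigma}(w) - \mathbf{b}(w)\mathbf{b}(w)^T \preceq \mathbf{\Sigma}(w)$, so $\mathbf{H}(v) \preceq C^{1/3}\mathbf{\Sigma}(w)$ does \emph{not} imply $\mathbf{H}(v) \preceq C^{1/3}\mathbf{V}(w)^2$: the centering correction weakens exactly this side of the sandwich, not the other. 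Your closing remark that the correction ``only sharpens the upper comparison, while the lower comparison is stated cleanest in the second-moment form'' has it backwards for the lower bound. The repair is short and is the paper's route: stop at the inequality $c''(x^Tv) \le C^{1/3}\xi(x)^2$ that you already wrote down, integrate it to $\mathbf{H}(v) \preceq C^{1/3}\E[\xi(\mathbf{x})^2\mathbf{x}\mathbf{x}^T]$, and observe that $\E[\xi(\mathbf{x})^2\mathbf{x}\mathbf{x}^T] \preceq \mathbf{V}(w)^2$ because $\mathbf{V}(w)^2 = \E[\xi(\mathbf{x})^2\mathbf{x}\mathbf{x}^T] + \Cov[\beta(\mathbf{x},w)\mathbf{x}]$ and the second summand is positive semidefinite. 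This is the reason the hypothesis carries the ratio $c''/\xi^2$ separately from $(\xi^2+\beta^2)/c''$; weakening $\xi^2$ to $\xi^2+\beta^2$ throws that away.

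With that patched, your endgame is correct and genuinely different from the paper's. The paper never takes matrix square roots of the two-sided bound $C^{-1/3}\mathbf{V}(w)^2 \preceq \mathbf{H}(v) \preceq C^{1/3}\mathbf{V}(w)^2$; instead it congruence-transforms by $\mathbf{V}(w)^{-1/2}$ to get $\lambda_{\max}(\mathbf{V}(w)^{-1/2}\mathbf{H}(v)\mathbf{V}(w)^{-1/2}) \le C^{1/3}\lambda_{\max}(\mathbf{V}(w))$ and the analogous lower bound, and then converts the ratio of extreme eigenvalues of $\mathbf{V}$ back to $\sqrt{\kappa_{\rm GD}}$ using the same sandwich again, accumulating $C^{2/3}\cdot C^{1/3} = C$. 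Your route via L\"owner--Heinz and the generalized Rayleigh quotient $z^T\mathbf{H}(v)z / z^T\mathbf{V}(w)z$, with the weighted-average observation for $z^T\mathbf{H}(v)z / z^T\mathbf{H}(v)^{1/2}z$, only pays $C^{1/6}$ on each side and delivers the sharper constant $C^{1/3}\sqrt{\kappa_{\rm GD}}$. Your instinct that operator monotonicity of the square root (and the failure of its analogue for $t\mapsto t^2$) is the crux of the non-commuting comparison is exactly right.
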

\begin{remark}
	\rm
	The above theorem together with Lemma~\ref{lem:conv.glm} in Appendix~\ref{sec:proof} states that in the noiseless setting, the time complexity for MasGrad is
	$O\left( \sqrt{\kappa_{\rm GD}} \log 1/\epsilon \right)$
	in contrast to the complexity of GD --
	$O\left( \kappa_{\rm GD} \log 1/\epsilon \right)$, which is crucial when the condition number is large. The proof is based on matrix inequalities and the following analytic expressions,
	\begin{align*}
		&\mathbf{b}(w) = \mathbb{E}\left[ -\mathbf{y} \mathbf{x} + c'(\mathbf{x}^T w) \mathbf{x} \right] = \mathbb{E}\left[ (c'(\mathbf{x}^T w) - m_*(\mathbf{x})) \mathbf{x} \right], \\
		&\mathbf{V}(w) =  \left( \mathbb{E}[ \xi(\mathbf{x})^2 \mathbf{x} \mathbf{x}^T] + \Cov[\beta(\mathbf{x}, w) \mathbf{x} ]  \right)^{1/2}, \quad \mathbf{H}(w) =  \E\left[ c''(\mathbf{x}^T w) \mathbf{x} \mathbf{x}^T  \right].
\end{align*}
\end{remark}

\subsection{Non-smooth regularization}
In this section, we extend the acceleration result to problems with non-smooth regularization.
The main results are based on a simple modification called \textit{\textbf{M}oment-\textbf{ad}justed \textbf{Prox}imal Gradient descent} (MadProx). 

Consider the population loss function that can be decomposed into
\begin{align}
	\label{eq:non-smooth-decom}
	L(w) = g(w) + h(w)
\end{align}
where $g(w)$ is a smooth and convex function in $w$, and $h(w)$ is a non-smooth regularizer that is convex. Special cases include,
\begin{itemize}
	\item sparse regression with $\ell(w, (x_i, y_i)) = \frac{1}{2} (x_i^T w - y_i)^2 + \lambda \| w \|_1$ and
\begin{align*}
	L(w) = \E_{(\mathbf{x}, \mathbf{y}) \sim P} \left[\frac{1}{2} (\mathbf{x}^T w - \mathbf{y})^2 \right]  + \lambda \| w \|_1 := g(w) + h(w);
\end{align*}
	\item low rank matrix trace regression with $\ell(W, (X_i, y_i)) = \frac{1}{2} (\langle X_i, W \rangle - y_i)^2 + \lambda \| W \|_*$ 
\begin{align*}
	L(W) = \E_{(\mathbf{X}, \mathbf{y}) \sim P} \left[\frac{1}{2} (\langle \mathbf{X}, W \rangle - \mathbf{y})^2 \right]  + \lambda \| W \|_* := g(W) + h(W).
\end{align*}
\end{itemize}

Now we will show the role of moment matrix $\mathbf{V}$ in ``speeding up'' the convergence of proximal gradient descent in the following proposition. Here we focus on an easier case when $\mathbf{V}(w)$ does not depend on $w$\footnote{As is in the linear regression fixed design case, where $\mathbf{V}(w) =  \left( \mathbb{E}[ \xi(\mathbf{x})^2 \mathbf{x} \mathbf{x}^T] \right)^{1/2}$ does not depend on $w$.}.

Define the moment-adjusted proximal function and MadProx
\begin{align}
	\prox_{\eta, \mathbf{V}}(w) = \argmin_{u} \left[ \frac{1}{2\eta}\| u - w  \|_{\mathbf{V}}^2 + h(u) \right], \\
	\label{eq:mad-proximal}
	\text{MadProx:}\quad w_{t+1} = \prox_{\eta, \mathbf{V}}(w_t - \eta \mathbf{V}^{-1} \nabla g(w_t)).
\end{align}

\begin{proposition}[Moment-adjusted proximal]
	\label{thm:prox}
	Consider $L(w) = g(w) + h(w)$ as in \eqref{eq:non-smooth-decom}.
	Denote $\mathbf{H}$ as the Hessian of $g$, and define
	\begin{align*}
		\alpha \triangleq \min_{v}~ \lambda_{\min} \left( \mathbf{V}^{-1/2} \mathbf{H}(v) \mathbf{V}^{-1/2}  \right) >0 , \quad \gamma \triangleq \max_{v}~ \lambda_{\max} \left( \mathbf{V}^{-1/2} \mathbf{H}(v) \mathbf{V}^{-1/2}  \right) >0.
	\end{align*}
	Consider the MadProx updates defined in \eqref{eq:mad-proximal} with step-size $\eta = 1/\gamma$ and adjusting matrix $\mathbf{V}$. If
	$$
	T \geq \frac{\gamma}{\alpha} \log \left( \frac{\alpha}{2\epsilon} \|w_0 - w_* \|_{\mathbf{V}}^2 + 1 \right),
	$$
	we have $L(w_T) - \min_w L(w) \leq \epsilon.$
\end{proposition}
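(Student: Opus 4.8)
The plan is to recognize MadProx as vanilla proximal gradient descent after a linear change of variables, and then invoke the standard linear-convergence analysis of proximal gradient under strong convexity. Set $u = \mathbf{V}^{1/2} w$ and define $\tilde{g}(u) = g(\mathbf{V}^{-1/2} u)$, $\tilde{h}(u) = h(\mathbf{V}^{-1/2} u)$, so that $\tilde{L}(u) := \tilde{g}(u) + \tilde{h}(u) = L(\mathbf{V}^{-1/2}u)$. Because $\nabla^2 \tilde{g}(u) = \mathbf{V}^{-1/2}\mathbf{H}(\mathbf{V}^{-1/2}u)\mathbf{V}^{-1/2}$, the constants $\alpha$ and $\gamma$ in the statement are exactly the strong-convexity and smoothness moduli of $\tilde{g}$, and $\tilde{L}$ inherits $\alpha$-strong convexity since $\tilde{h}$ is convex. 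The one computation to check is that the $\mathbf{V}$-weighted prox becomes an ordinary prox: since $\|u' - v\|^2 = \|w' - \mathbf{V}^{-1/2}v\|_{\mathbf{V}}^2$ under $u' = \mathbf{V}^{1/2}w'$, a direct substitution gives $u_t - \eta \nabla\tilde{g}(u_t) = \mathbf{V}^{1/2}(w_t - \eta\mathbf{V}^{-1}\nabla g(w_t))$ and $\prox_{\eta\tilde{h}}(u_t - \eta\nabla\tilde{g}(u_t)) = \mathbf{V}^{1/2} w_{t+1}$. Hence MadProx in $w$ is precisely Euclidean proximal gradient in $u$ with step $\eta = 1/\gamma$, with $\|w-w'\|_{\mathbf{V}} = \|u - u'\|$ and $L(w_T) - \min_w L(w) = \tilde{L}(u_T) - \min_u \tilde{L}(u)$.

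With the reduction in hand, I would apply the fundamental prox-gradient inequality for an $\alpha$-strongly convex, $\gamma$-smooth $\tilde{g}$ with step $\eta = 1/\gamma$: for every $z$, $\tilde{L}(u^+) - \tilde{L}(z) \le \frac{\gamma-\alpha}{2}\|u - z\|^2 - \frac{\gamma}{2}\|u^+ - z\|^2$, where $u^+$ is the prox-gradient step from $u$. Writing $\Delta_t = \tilde{L}(u_t) - \tilde{L}(u_*)$, $r_t = \|u_t - u_*\|$, and taking $z = u_*$ yields the one-step relation
\begin{align*}
\Delta_{t+1} + \tfrac{\gamma}{2} r_{t+1}^2 \le \tfrac{\gamma - \alpha}{2} r_t^2 .
\end{align*}
Dropping $\Delta_{t+1}\ge 0$ gives the distance contraction $r_{t+1}^2 \le \rho\, r_t^2$ with $\rho := 1 - \alpha/\gamma$, while taking $z = u_t$ in the same inequality shows $\Delta_t$ is nonincreasing.

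To reproduce the exact iteration count, I would multiply the one-step relation by the weight $\rho^{-(t+1)}$, which telescopes: with $S_t = \frac{\gamma}{2}\rho^{-t} r_t^2$ it becomes $\rho^{-(t+1)}\Delta_{t+1} \le S_t - S_{t+1}$. Summing over $t = 0,\dots,T-1$, using $S_T \ge 0$ and the monotonicity $\Delta_{t+1}\ge\Delta_T$, gives $\Delta_T \sum_{k=1}^{T}\rho^{-k} \le S_0 = \frac{\gamma}{2} r_0^2$; evaluating the geometric sum $\sum_{k=1}^{T}\rho^{-k} = \frac{\gamma}{\alpha}(\rho^{-T}-1)$ produces
\begin{align*}
\Delta_T \le \frac{(\alpha/2)\,\|w_0 - w_*\|_{\mathbf{V}}^2}{\rho^{-T}-1}.
\end{align*}
Requiring this to be at most $\epsilon$ is equivalent to $\rho^{-T} \ge \frac{\alpha}{2\epsilon}\|w_0 - w_*\|_{\mathbf{V}}^2 + 1$, and since $-\log\rho = -\log(1-\alpha/\gamma) \ge \alpha/\gamma$, the stated choice $T \ge \frac{\gamma}{\alpha}\log\big(\frac{\alpha}{2\epsilon}\|w_0-w_*\|_{\mathbf{V}}^2 + 1\big)$ suffices; transferring back through $L(w_T)-\min_w L(w) = \Delta_T$ closes the argument.

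The conceptual crux is the change of variables, and the single place demanding care is verifying that the $\mathbf{V}$-weighted prox collapses to an ordinary prox after the substitution — this is exactly where the hypothesis that $\mathbf{V}$ is independent of $w$ is used, since otherwise $\tilde{g}$, $\tilde{h}$, and the metric would all move with the iterate and the clean reduction would break. The remaining steps are the textbook strongly-convex proximal-gradient analysis; the only mild subtlety is choosing the weighted telescoping with weights $\rho^{-t}$ so that the geometric sum yields the sharp $\big(\rho^{-T}-1\big)$ denominator, which is precisely what makes the ``$+1$'' appear inside the logarithm in the stated bound.
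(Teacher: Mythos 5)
Your proposal is correct and follows essentially the same route as the paper: the same one-step prox-gradient inequality $L(w_{t+1}) - L(w_*) \le \frac{\gamma-\alpha}{2}\|w_t - w_*\|_{\mathbf{V}}^2 - \frac{\gamma}{2}\|w_{t+1}-w_*\|_{\mathbf{V}}^2$, the same geometrically weighted telescoping, and the same final bound. The only cosmetic difference is that you first change variables $u = \mathbf{V}^{1/2}w$ to reduce to Euclidean proximal gradient and then cite the textbook one-step inequality, whereas the paper derives that inequality directly in the $\|\cdot\|_{\mathbf{V}}$ geometry from the mean value theorem and the prox optimality condition.
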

\begin{remark}
	One can see that MadProx
	implements moment-adjusted gradient (using implicit updates)
	because $w_{t+1}$ satisfies the implicit equation
	\begin{align*}
		w_{t+1} = w_{t} - \eta \mathbf{V}^{-1} (\nabla g(w_t) + \partial h(w_{t+1})),
	\end{align*}
	in comparison to the sub-gradient step (explicit updates)
	\begin{align*}
		w_{t+1} = w_{t} - \eta \mathbf{V}^{-1} (\nabla g(w_t) + \partial h(w_{t})).
	\end{align*}
	Remark that as in the GLMs case, the moment-adjusted idea speed up the computation as the number of proximal steps scales with adjusted condition number
	$\kappa_{\rm MadProx} \approx \sqrt{\kappa_{\rm GD}}$.
	However, to be fair, it can be computationally hard to implement each proximal step for a non-diagonal $\mathbf{V}$. Motivated from the diagonalizing idea in AdaGrad \citep{duchi2011adaptive}, one can substitute $\mathbf{V}$ by ${\rm diag}(\mathbf{V})$ to save the per-iteration computation.

\end{remark}

\section{Non-Convex Inference}
\label{sec:non-convex}

In this section, we study the non-asymptotic inference and optimization for stationary points of a smooth non-convex population landscape $L(\theta)$, via our proposed MasGrad.

\subsection{Inference and optimization for stationary points}
First we state a theorem that quantifies how well our proposed MasGrad achieves both the inference and optimization goal.

\begin{thm}[MasGrad: non-convex]
	\label{thm:non-convex}
	Let $L(w): \mathbb{R}^p \rightarrow \mathbb{R}$ be a smooth function. Recall $\mathbf{b}(w) = \nabla L(w)$, and $\mathbf{H}(w)$ being the Hessian matrix of $L$. $\mathbf{V}(w) \in \mathbb{R}^{p \times p}$ is a positive definite matrix.
	Assume
	\begin{align*}
		\gamma &\triangleq \max_{v, w}~ \lambda_{\max} \left( \mathbf{V}(w)^{-1/2} \mathbf{H}(v) \mathbf{V}(w)^{-1/2}  \right) >0.
	\end{align*}

  Consider the MasGrad updates $\theta_t$ in \eqref{eq: MasGrad} with step-size $\eta = 1/\gamma$, and the corresponding discretized diffusion $\xi_t$,
	\begin{align*}
		\xi_{t+1} = \xi_t - \eta \mathbf{V}(\xi_t)^{-1} \mathbf{b}(\xi_t)+ \sqrt{2\beta^{-1} \eta} \mathbf{g}_t,\quad \text{where $\beta = \frac{2n}{\eta}$}.
	\end{align*}
  Then for any precision $\epsilon, \delta > 0$, one can choose
	\begin{align}
	  T = \frac{2\gamma(L(\theta_0) - \min_\theta L(\theta)) + p\delta^2 }{\epsilon^2} \cdot (\max_\theta \|\mathbf{V}(\theta)\| \vee 1),~ \text{and}~~ n = \frac{T}{\delta^2},
	\end{align}
  such that
	\begin{align*}
		&(1)\quad D_{\rm TV}\left( \mu(\theta_t, t \in [T]),  \mu( \xi_t, t \in [T] ) \right) \leq  O_{\delta}(\delta), \\
		&(2)\quad \E \min_{t \leq T} \| \nabla L(\theta_t) \|\leq \epsilon, ~\E \min_{t \leq T} \| \nabla L(\xi_t) \|  \leq \epsilon,
	\end{align*}
	with in total $O_{\epsilon,\delta}(\epsilon^{-4} \delta^{-2})$ independent data samples.
\end{thm}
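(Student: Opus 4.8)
The plan is to treat the two claims separately. Claim (1) is an immediate consequence of Theorem~\ref{thm:couple.p}: with the prescribed $n = T/\delta^2$ the dominant term in the total-variation bound becomes $\sqrt{T/n} = \delta$, while the lower-order term is easily checked to be $o(\delta^2)$ under this scaling, so that $D_{\rm TV}(\mu(\theta_t, t\in[T]), \mu(\xi_t, t\in[T])) \le C\delta = O_\delta(\delta)$. The substance of the proof is therefore the optimization guarantee (2), which I would establish by a single preconditioned descent-lemma argument that applies \emph{verbatim} to both the MasGrad iterates $\theta_t$ and the diffusion iterates $\xi_t$.

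First I would record the descent inequality in the $\mathbf{V}$-metric. The definition of $\gamma$ gives $\mathbf{H}(v) \preceq \gamma \mathbf{V}(w)$ for every pair $v,w$; integrating this Hessian bound along the segment joining $x$ to $y$ while freezing the reference matrix at $w = x$ yields the exact descent lemma
\begin{align*}
 L(y) \le L(x) + \langle \nabla L(x), y - x\rangle + \tfrac{\gamma}{2}(y-x)^\top \mathbf{V}(x)(y-x).
\end{align*}
Applying this with $x = \xi_t$, $y = \xi_{t+1}$ and taking the conditional expectation given $\xi_t$, the cross terms linear in the injected noise vanish because $\mathbf{V}(\xi_t)$ and the drift are deterministic given $\xi_t$. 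The key observation is that the preconditioned noise has identity covariance for \emph{both} processes: for the diffusion, $\sqrt{2\beta^{-1}\eta}\,\mathbf{g}_t$ contributes $2\beta^{-1}\eta\,\mathrm{tr}\,\mathbf{V}(\xi_t) = \tfrac{\eta^2}{n}\mathrm{tr}\,\mathbf{V}(\xi_t)$ through the identity $2\beta^{-1}\eta = \eta^2/n$, while for MasGrad the moment adjustment turns the mini-batch fluctuation $\mathbf{V}(\theta_t)^{-1}[\widehat{\E}_n\nabla\ell - \mathbf{b}]$ into a mean-zero vector with covariance $\tfrac1n I_p$, contributing the identical $\tfrac{\eta^2}{n}\mathrm{tr}\,\mathbf{V}(\theta_t)$. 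With $\eta = 1/\gamma$ this produces, for both processes, the one-step bound
\begin{align*}
 \E[L(\xi_{t+1}) \mid \xi_t] \le L(\xi_t) - \tfrac{\eta}{2}\|\mathbf{b}(\xi_t)\|_{\mathbf{V}(\xi_t)^{-1}}^2 + \tfrac{1}{2\gamma n}\,\mathrm{tr}\,\mathbf{V}(\xi_t),
\end{align*}
and the identical recursion holds with $\theta$ in place of $\xi$.

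Next I would telescope. Summing from $t = 0$ to $T-1$, using $\mathrm{tr}\,\mathbf{V}(\xi_t) \le p\max_\theta\|\mathbf{V}(\theta)\|$ together with $\E L(\xi_T) \ge \min_\theta L(\theta)$, and dividing by $T\eta/2 = T/(2\gamma)$, gives
\begin{align*}
 \frac{1}{T}\sum_{t=0}^{T-1}\E\|\mathbf{b}(\xi_t)\|_{\mathbf{V}(\xi_t)^{-1}}^2 \le \frac{2\gamma(L(\theta_0) - \min_\theta L(\theta))}{T} + \frac{p\max_\theta\|\mathbf{V}(\theta)\|}{n}.
\end{align*}
Bounding $\min_{t\le T}$ by the running average and converting the preconditioned norm to the Euclidean one via $\|\mathbf{b}\|^2 \le \max_\theta\|\mathbf{V}(\theta)\|\cdot\|\mathbf{b}\|_{\mathbf{V}^{-1}}^2$, the prescribed $T$ and $n = T/\delta^2$ are calibrated so that each term on the right is of order $\epsilon^2$, the $(\max_\theta\|\mathbf{V}(\theta)\|\vee 1)$ prefactor in $T$ absorbing the norm conversion. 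A final application of Jensen's inequality, $\E\min_{t\le T}\|\nabla L(\xi_t)\| \le (\E\min_{t\le T}\|\nabla L(\xi_t)\|^2)^{1/2} \le \epsilon$, closes claim (2), with the total budget $nT = T^2/\delta^2 = O_{\epsilon,\delta}(\epsilon^{-4}\delta^{-2})$.

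The main obstacle is conceptual rather than computational: one must recognize that the optimization guarantee does \emph{not} invoke the Gaussian coupling at all, relying only on the first two conditional moments of the moment-adjusted stochastic gradient, so that the descent recursion is genuinely shared by $\theta_t$ and $\xi_t$. The supporting technical points are (i) phrasing smoothness through the uniform bound $\mathbf{H}(v)\preceq\gamma\mathbf{V}(w)$, which lets the descent lemma freeze the metric at the current iterate and thereby annihilate the noise cross-term; and (ii) verifying that the moment adjustment is precisely what renders the preconditioned noise covariance isotropic and equal to $\tfrac1n I_p$, matching the diffusion's injected variance.
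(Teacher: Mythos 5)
Your proposal is correct and follows essentially the same route as the paper's proof: a mean-value/descent inequality with the Hessian controlled by $\gamma\,\mathbf{V}(\cdot)$, vanishing noise cross-terms, a telescoping sum bounding $\E\min_{t\le T}\|\mathbf{V}(\xi_t)^{-1/2}\mathbf{b}(\xi_t)\|^2$, conversion to the Euclidean norm via $\max_\theta\|\mathbf{V}(\theta)\|$, Jensen's inequality, and Theorem~\ref{thm:couple.p} for the total-variation claim, with the identical observation that the argument transfers to $\theta_t$ because the conditional first and second moments of the MasGrad step match those of the diffusion step. Your handling of the constants is in fact slightly cleaner than the paper's.
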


\begin{remark}
  \rm
	We would like to contrast the optimization part of the above theorem with the sample complexity result of classic SGD. To obtain an $\epsilon$-stationary point $w$ such that in expectation $\| \nabla L(w) \| \leq \epsilon$, SGD needs $O_\epsilon(\epsilon^{-4})$ iterations for non-convex smooth functions (with step size $\eta_t = \min\{ 1/\gamma, 1/\sqrt{t}\}$). Here we show that one can achieve this accuracy with the same dependence on $\epsilon$ with MasGrad, while being able to make statistical inference at the same time. And the additional price we pay for $\delta$-closeness in distribution for statistical inference is a factor of $\delta^{-2}$.

	The result can also be compared to Thm.~\ref{thm:converge} (the strongly convex case). In both cases, statistically, we have shown that the discretized diffusion $\xi_t$ tracks the non-asymptotic distribution of MasGrad $\theta_t$, as long as the data-generating process satisfies conditions like weak moment and bounded entropic distance to Gaussian. The distribution of $\xi_t$ is universal regardless of the specific data-generating distribution, and only depends on the moments $\V(\theta)^{-1}\b(\theta)$. In terms of optimization, to obtain an $\epsilon$-minimizer, the discretized diffusion approximation to MasGrad --- with the proper step-size $\eta$, and inverse temperature $\beta = 2n/\eta$ --- achieves the acceleration in the strongly convex case, and enjoys the same dependence on $\epsilon$ as SGD in the non-convex case in terms of sample complexity.
\end{remark}

\subsection{Why local inference}

For a general non-convex landscape, let us discuss why we focus on inference about local optima, or more precisely stationary points. Our Thm.~\ref{thm:non-convex} can be read as, within reasonable number of steps, the MasGrad converges to a population stationary point, and the distribution is well-described by the discretized Langevin diffusion. One can argue that the random perturbation introduced by the isotropic Gaussian noise in Langevin diffusion makes the process hard to converge to a typical saddle point. Therefore, intuitively, the MasGrad will converge to a distribution that is well concentrated near a certain local optima (depending on the initialization) as the temperature parameter $\beta^{-1} = \eta/2n$ is small. In this asymptotic low temperature regime, the Eyring-Kramer Law states that the transiting time from one local optimum to another local optimum, or the exiting time from a certain local optimum, is very long --- roughly $e^{\beta h}$ where $h$ is the depth of the basin of the local optimum \citep{bovier2004metastability, tzen2018local}. Therefore, a reasonable and tangible goal is to establish statistical inference for population local optima, for a particular initialization.

\section{Estimation and Computation of MasGrad Direction}
\label{sec:est-comp-mat-root}

We address in this section how to estimate and efficiently approximate the MasGrad direction $\V(\theta)^{-1} \b(\theta)$ at a current parameter location $\theta$. The estimation part undertakes a plug-in approach relying on the theory of self-normalized processes \citep{pena2008self}. For efficient computation of the pre-conditioning matrix, we devise a fast iterative algorithm to directly approximate the root of the inverse covariance matrix, which in a way resembles the advantage of quasi-Newton methods \citep{wright1999numerical}, however, with noticeable differences. The quasi-Newton methods approximate Hessian with first-order information, while MasGrad uses stochastic gradient information to approximate the root of the inverse covariance matrix as pre-conditioning. 
In this section we deliberately state all propositions working with general sample covariance matrix $\widehat{\Sigma}$ with dimension $d$, to emphasize that the results extend beyond the discussions for MasGrad.     

\subsection{Statistical estimation and self-normalized processes}
\label{sec:est-mat-root}

Recall that $\mathbf{V}(\theta)$ is the matrix root of the covariance. We estimate the moment-adjusted gradient direction $\mathbf{V}(\theta)^{-1} \mathbf{b}(\theta)$ at current location $\theta$, base on a mini-batch of size $n$. This section concerns this estimation part, borrowing tools from self-normalized processes. Define the sample estimates based on i.i.d data $z_i$ as
\begin{align*}
	\widehat{\b}(\theta) &\triangleq \frac{1}{n} \sum_{i=1}^n \nabla_{\theta} \ell(\theta, z_i) \\
	\widehat{\Sig}(\theta) &\triangleq \frac{1}{n-1} \sum_{i=1}^n [\nabla_{\theta}\ell(\theta, z_i) - \widehat{\b}(\theta)] \otimes [\nabla_{\theta}\ell(\theta, z_i) - \widehat{\b}(\theta)]
\end{align*}
and $\widehat{\V}(\theta)$ satisfies 
	$\widehat{\V}(\theta) \widehat{\V}(\theta)^T = \widehat{\Sig}(\theta),$
we will show that the plug-in approach $\widehat{\V}(\theta)^{-1} \widehat{\b}(\theta)$ estimates the population moment-adjusted gradient direction $\mathbf{V}(\theta)^{-1} \mathbf{b}(\theta)$ consistently at a parametric rate, in the fixed dimension setting.

\begin{proposition}[Connection to self-normalized processes]\label{lem:self-norm-process}
	Consider $\{x_i \in \mathbb{R}^d, 1\leq i \leq n\}$ i.i.d with mean $\mu$, $\bar{x}$ and $\widehat{\Sigma}$ to be sample mean vector and sample covariance. Consider $d \ll n$ and $\widehat{\Sigma}$ is invertible. Denote the centered moments
	\begin{align*}
		S_n \triangleq \sum_{i=1}^n (x_i - \mu), \quad  V_n^2 \triangleq \sum_{i=1}^n (x_i - \mu) \otimes (x_i - \mu)		
	\end{align*}
	and the multivariate self-normalized process
	\begin{align*}
		M_n \triangleq V_n^{-1} S_n \in \mathbb{R}^d.
	\end{align*}
	Then there exists $\widehat{V}$, which satisfies
	$
	\widehat{V} \widehat{V}^T = \widehat{\Sigma}
	$
	such that
	\begin{align*}
		\sqrt{n} \widehat{V}^{-1} (\bar{x} - \mu) =  M_n \cdot \sqrt{\frac{n-1}{n - \| M_n \|^2}}.
	\end{align*}
\end{proposition}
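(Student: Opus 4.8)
The plan is to reduce everything to the centered summands $u_i \triangleq x_i - \mu$ and then exhibit $\widehat{V}$ explicitly as a rank-one correction of the symmetric square root of $V_n^2$. First I would record the elementary bias-correction identity relating the sample covariance to the centered moments. Writing $\bar{u} = \bar{x} - \mu = \tfrac{1}{n} S_n$ and expanding $\sum_i (x_i - \bar{x}) \otimes (x_i - \bar{x}) = \sum_i (u_i - \bar u)\otimes(u_i - \bar u)$, the cross terms collapse and one obtains
\[
  (n-1)\widehat{\Sigma} = V_n^2 - \tfrac{1}{n} S_n S_n^T .
\]
This is the only ``statistical'' input; everything after is linear algebra.

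Next I would factor the right-hand side through the symmetric positive-definite square root of $V_n^2$, which exists and is invertible whenever $V_n^2 = \sum_i u_i \otimes u_i$ is nonsingular. Taking $V_n$ to be that symmetric root and recalling $M_n = V_n^{-1} S_n$, I can write
\[
  (n-1)\widehat{\Sigma} = V_n\bigl(I - \tfrac{1}{n} M_n M_n^T\bigr) V_n .
\]
Since $I - \tfrac1n M_n M_n^T$ is symmetric, this suggests the candidate factor $\widehat{V} \triangleq \tfrac{1}{\sqrt{n-1}}\, V_n \bigl(I - \tfrac1n M_n M_n^T\bigr)^{1/2}$, and a one-line check gives $\widehat{V}\widehat{V}^T = \widehat{\Sigma}$, so this $\widehat{V}$ is an admissible square root (the factorization being unique only up to an orthogonal factor, we are free to pick this one).

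Finally I would compute $\sqrt{n}\,\widehat{V}^{-1}(\bar x - \mu)$ directly. Substituting $\bar x - \mu = \tfrac1n S_n$ and $V_n^{-1}S_n = M_n$ reduces the expression to $\sqrt{\tfrac{n-1}{n}}\,(I - \tfrac1n M_n M_n^T)^{-1/2} M_n$. The key simplification is that $M_n$ is itself an eigenvector of the rank-one perturbation $I - \tfrac1n M_n M_n^T$, with eigenvalue $1 - \|M_n\|^2/n$ (because $M_n M_n^T M_n = \|M_n\|^2 M_n$); hence $(I - \tfrac1n M_n M_n^T)^{-1/2} M_n = (1 - \|M_n\|^2/n)^{-1/2} M_n$, and collecting the scalar factors yields exactly $M_n \sqrt{(n-1)/(n - \|M_n\|^2)}$.

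The only point requiring care — and the place I expect the main (mild) obstacle — is well-definedness of the square roots and inverses, i.e. that $I - \tfrac1n M_n M_n^T \succ 0$, equivalently $\|M_n\|^2 < n$. This is not an extra hypothesis: from the factored identity, $(n-1)^d\det\widehat\Sigma = \det(V_n^2)\,(1 - \|M_n\|^2/n)$, so the assumption that $\widehat\Sigma$ (and hence $V_n^2$) is invertible forces $\|M_n\|^2 < n$. Geometrically this is transparent: writing $U = [u_1,\dots,u_n]$, one has $\|M_n\|^2 = \mathbf{1}^T U^T (UU^T)^{-1} U\, \mathbf{1}$, a quadratic form of the orthogonal projection onto the row space of $U$, hence at most $\mathbf{1}^T\mathbf{1} = n$, with strict inequality precisely in the nondegenerate case guaranteed by $d \ll n$ and invertibility.
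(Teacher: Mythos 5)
Your proof is correct, and it rests on the same central decomposition as the paper's: both start from $(n-1)\widehat{\Sigma} = V_n^2 - \tfrac{1}{n}S_nS_n^T = V_n\bigl(I - \tfrac{1}{n}M_nM_n^T\bigr)V_n$ and build $\widehat{V}$ as $V_n$ times a square root of the rank-one perturbation. Where you diverge is in how that square root is handled. The paper passes to $\widehat{\Sigma}^{-1}$, applies the Sherman--Morrison--Woodbury identity to invert $I - \tfrac1n M_nM_n^T$, and then invokes an explicit rank-one matrix-root identity (its Lemma on $I + cvv^T$ being a perfect square) to produce a closed-form expression for $\widehat{V}^{-1}$, from which the final formula follows by direct substitution. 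You instead keep the abstract symmetric root $(I - \tfrac1n M_nM_n^T)^{1/2}$ and exploit the observation that $M_n$ is an eigenvector of the perturbation with eigenvalue $1 - \|M_n\|^2/n$, so that the functional calculus collapses to a scalar. This is leaner: it avoids both SMW and the explicit root formula, at the cost of not producing a closed-form $\widehat{V}^{-1}$ (which the paper reuses in spirit for its iterative rank-one computation scheme). You also supply a point the paper leaves implicit: that $\|M_n\|^2 < n$, hence all roots and inverses are well defined, which you derive both from the determinant of the factored identity and from the projection interpretation $\|M_n\|^2 = \mathbf{1}^T U^T(UU^T)^{-1}U\mathbf{1} \le n$. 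That is a genuine (if mild) improvement in rigor over the paper's version.
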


\begin{remark}
	\rm
	In the case of $d=1$, the above proposition reduces to a standard result in \cite{pena2008self}. In our matrix version,
		the proof relies on Sherman-Morrison-Woodbury matrix identity, together with a rank-one update formula for matrix root we derived in Lemma~\ref{lem:matrix-root} in Appendix~\ref{sec:proof}. Recall the Law of the Iterated Logarithm (LIL) on the norm of self-normalized process $\|M_n\|^2 \sim \log \log n$ (Theorem 14.11 in  \citep{pena2008self}, in the case when dimension is fixed), a direct application of the above formula implies
	\begin{align*}
		\widehat{\V}(\theta)^{-1} \left( \widehat{\b}(\theta) - \mathbf{b}(\theta) \right) = \frac{1}{\sqrt{n}} M_n \cdot \sqrt{\frac{n-1}{n - \| M_n \|^2}}  =  \frac{1 + O_{\bf p}(\log\log n/n)}{\sqrt{n}} M_n,
	\end{align*}
	where $M_n$ is a self-normalized process with asymptotic distribution being $\mathcal{N}(0, I_p)$. By Lemma~\ref{lem:consistency} in Appendix~\ref{sec:proof}, when $p\ll n$, the following approximation holds
	\begin{align*}
		\widehat{\V}(\theta)^{-1} \widehat{\b}(\theta) - \mathbf{V}(\theta)^{-1} \mathbf{b}(\theta) = \overbrace{\widehat{\V}(\theta)^{-1} \left( \widehat{\b}(\theta) - \mathbf{b}(\theta) \right)}^{\text{self-normalized processes}} + O_{\bf p} \left(\sqrt{\frac{p \log n}{n}} \right), 
	\end{align*}
	where the approximation is with respect to $\ell_2$ norm.
	All together, the above implies that one can estimate $\mathbf{V}(\theta)^{-1} \mathbf{b}(\theta)$ consistently in the fixed dimension $p$ and large $n$ setting. 
\end{remark}

\subsection{Efficient computation via direct rank-one updates}
\label{sec:comp-mat-root}

In this section we devise a fast iterative formula for calculating $\widehat{\V}(\theta)^{-1}$ directly via rank-one updates.

Recall the brute-force approach of calculating $\widehat{\Sig}(\theta)$ first
then solving for the inverse root $\widehat{\V}(\theta)^{-1}$ involves $O(n p^2 + p^3)$ complexity in the computation. Instead, we will provide an algorithm that approximates $\widehat{\V}(\theta)^{-1}$ directly through iterative rank-one updates, that is only $O(np^2)$ in complexity, utilizing the fact that the sample covariance is a finite sum of rank-one matrices. To the best of our knowledge, this direct approach of calculating root of inverse covariance matrix is new. 

\begin{proposition}[Iterative rank-one updates of matrix inverse root]
	\label{lem:rank-one-mat-root-inv}
	Initialize $H_0 = I_d$, and define the recursive rank-one updates for matrix inverse root, for $v_i \in \mathbb{R}^d$
	\begin{align}
		\label{eq:rank-one-update}
		H_{i+1} = H_{i} - \frac{1}{\alpha_i} H_i v_{i+1} v_{i+1}^T H_i^T H_i
	\end{align}
	with $\alpha_i \triangleq (1 + \sqrt{1+v_{i+1}^T H_i^T H_i v_{i+1}})\sqrt{1+v_{i+1}^T H_i^T H_i v_{i+1}} \in \mathbb{R}$.
	Then for all $n$, $H_n$ is the matrix inverse root of $I_d + \sum_{i=1}^{n} v_i \otimes v_i$.
	In other words, define $V_n \triangleq H_n^{-1},$
	then $V_n V_n^T =  I_d + \sum_{i=1}^{n} v_i \otimes v_i.$
\end{proposition}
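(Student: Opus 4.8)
The plan is to recast the assertion in a form amenable to induction. Since $V_n \triangleq H_n^{-1}$, the desired identity $V_n V_n^T = I_d + \sum_{i=1}^n v_i \otimes v_i$ is equivalent, upon inverting both sides of $V_n V_n^T = (H_n^T H_n)^{-1}$, to the statement
\begin{align*}
	H_n^T H_n = A_n^{-1}, \qquad A_n \triangleq I_d + \sum_{i=1}^n v_i v_i^T.
\end{align*}
I would therefore prove $H_n^T H_n = A_n^{-1}$ by induction on $n$. The base case is immediate: $H_0 = I_d$ gives $H_0^T H_0 = I_d = A_0^{-1}$, where $A_0 = I_d$ is the empty-sum matrix.

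For the inductive step, write $M_i \triangleq H_i^T H_i$, abbreviate $w \triangleq v_{i+1}$, and set the scalar $s \triangleq w^T H_i^T H_i w = w^T M_i w = \| H_i w\|^2$, which is exactly the quantity appearing inside $\alpha_i$. Assuming $M_i = A_i^{-1}$, the Sherman--Morrison formula yields the target
\begin{align*}
	A_{i+1}^{-1} = (A_i + w w^T)^{-1} = M_i - \frac{1}{1+s}\, M_i w w^T M_i.
\end{align*}
On the other hand, substituting the update $H_{i+1} = H_i - \tfrac{1}{\alpha_i} H_i w w^T M_i$ and expanding $H_{i+1}^T H_{i+1}$ term by term (using the symmetry of $M_i$ and the collapse $w^T M_i w = s$ in the quadratic term) gives
\begin{align*}
	H_{i+1}^T H_{i+1} = M_i - \left( \frac{2}{\alpha_i} - \frac{s}{\alpha_i^2} \right) M_i w w^T M_i.
\end{align*}
Matching the two displays reduces the entire induction to the scalar identity $\frac{2}{\alpha_i} - \frac{s}{\alpha_i^2} = \frac{1}{1+s}$.

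The main work — and the only place where the precise form of $\alpha_i$ matters — is verifying this scalar identity. I would substitute $r \triangleq \sqrt{1+s}$, so that $\alpha_i = (1+r)r$ and $s = r^2 - 1$; a short computation collapses the left-hand side to $\frac{2r - (r-1)}{r^2(1+r)} = \frac{1}{r^2} = \frac{1}{1+s}$, as required. It is worth remarking, to explain where $\alpha_i$ comes from, that requiring $2c - c^2 s = \tfrac{1}{1+s}$ for a coefficient $c = 1/\alpha_i$ is a quadratic in $c$ whose admissible root is exactly $c = \big[\sqrt{1+s}\,(1+\sqrt{1+s})\big]^{-1}$; thus $\alpha_i$ is not ad hoc but forced by consistency of the rank-one update. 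I do not expect any genuine obstacle beyond careful transpose bookkeeping in the expansion of $H_{i+1}^T H_{i+1}$ and this algebraic verification; once the identity is confirmed, the induction closes and taking inverses recovers $V_n V_n^T = A_n$.
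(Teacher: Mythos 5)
Your proposal is correct and follows essentially the same route as the paper's proof: induction on the claim $H_n^T H_n = (I_d + \sum_{i\le n} v_i v_i^T)^{-1}$, direct expansion of $H_{i+1}^T H_{i+1}$ into $M_i - (\tfrac{2}{\alpha_i} - \tfrac{s}{\alpha_i^2}) M_i w w^T M_i$, an appeal to Sherman--Morrison for the rank-one inverse update, and verification of the scalar identity $\tfrac{2}{\alpha_i} - \tfrac{s}{\alpha_i^2} = \tfrac{1}{1+s}$. Your closing remark deriving $\alpha_i$ as the admissible root of the quadratic $2c - c^2 s = \tfrac{1}{1+s}$ is a nice addition not present in the paper, but the argument is otherwise the same.
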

\begin{remark}
	\rm
	One can directly apply the above result to evaluate $\widehat{\V}(\theta)^{-1}$ efficiently. 
	Define $v_i = \nabla_{\theta}\ell(\theta, \z_i) - \widehat{\b}(\theta)$, one can use \eqref{eq:rank-one-update} in the above proposition for fast iterative calculations, and that
	$$
	\left( \sqrt{n-1} H_{n} \right)^{-1} \left( \sqrt{n-1} H_{n}^T \right)^{-1} = \frac{1}{n-1} I_d + \widehat{\Sig} \approx \widehat{\Sig}.
	$$
	Therefore $\widehat{\V}(\theta)^{-1}$ is approximated by $\sqrt{n-1} H_{n}$. We remark that the quality of the approximation depends on the spectral decay of the true covariance $\Sigma$.

	For each iteration, the computation complexity for \eqref{eq:rank-one-update} is $4d^2$, with some careful design in calculation: it takes $d^2$ operations to calculate $H_i v_{i+1} \in \mathbb{R}^d$, then an additional $d^2$ to calculate $(H_i v_{i+1})^T H_i \in \mathbb{R}^d$, another $d^2$ operations for multiplication of rank-one vectors $H_i v_{i+1} \times (H_i v_{i+1})^T H_i$, and finally $d^2$ operations for matrix addition. Hence, the total complexity is $O(nd^2)$ (for MasGrad, simply substitute $d = p$).	
\end{remark}

\subsection{Optimal updates for online least-squares}
\label{sec:optim-updat-online}

In the case of a least-squares loss $\ell(\theta, z) = \frac12 (y - x^T \theta)^2$, we offer a simple and efficient online rule for estimating $\mathbf{V}(\theta)$ without any accuracy loss compared with offline counterparts. This is based on the fact that the data points $z_i$ and the parameter $\theta$ can be ``decoupled'' in least-squares. To show this, first write the covariance as
\[
\begin{aligned}
\mathbf{V}(\theta)^2 &= \Cov[(\mathbf{y} - \mathbf{x}^T \theta) \mathbf{x}]\\
&= \Cov(\mathbf{x}\mathbf{x}^T \theta) + \Cov(\mathbf{y}\mathbf{x}) - 2 \Cov(\mathbf{x}\mathbf{x}^T \theta, \mathbf{x}\mathbf{y}).
\end{aligned}
\]
To efficiently estimate $\Cov(\mathbf{x}\mathbf{x}^T \theta)$ in an online fashion, we observe that
\begin{equation}\label{eq:cov_two}
\Cov(\mathbf{x}\mathbf{x}^T \theta) = \E_{\mathbf{z} \sim P} (\mathbf{x}\mathbf{x}^T \theta \theta^T \mathbf{x}\mathbf{x}^T) - \left[\E_{\mathbf{z} \sim P} (\mathbf{x}\mathbf{x}^T \theta) \right] \left[\E_{\mathbf{z} \sim P} (\mathbf{x}\mathbf{x}^T \theta) \right]^T.
\end{equation}
Recalling $\otimes_{\text{K}}$ denotes the Kronecker product and letting $\vecs(X)$ be the vector that is formed by stacking the columns of $X$ into a single column, we express $\mathbf{x}\mathbf{x}^T \theta \theta^T \mathbf{x}\mathbf{x}^T$ as
\[
\vecs (\mathbf{x}\mathbf{x}^T \theta \theta^T \mathbf{x}\mathbf{x}^T) = \left[(\mathbf{x} \mathbf{x}^T) \otimes_{\text{K}} (\mathbf{x}\mathbf{x}^T) \right] (\theta \otimes_{\text{K}} \theta).
\]
This expression shows that
\begin{equation}\nonumber
\vecs\left[\E_{\mathbf{z} \sim P} (\mathbf{x}\mathbf{x}^T \theta \theta^T \mathbf{x}\mathbf{x}^T) \right] = \E_{\mathbf{z} \sim P}\left[(\mathbf{x} \mathbf{x}^T) \otimes_{\text{K}} (\mathbf{x}\mathbf{x}^T) \right] (\theta \otimes_{\text{K}} \theta).
\end{equation}
Accordingly, one can simply keep track of $\sum_{i=1}^t (\mathbf{x}_i \mathbf{x}_i^T) \otimes_{\text{K}} (\mathbf{x}_i\mathbf{x}_i^T)$ in the online setting and estimate $\E_{\mathbf{z} \sim P} (\mathbf{x}\mathbf{x}^T \theta \theta^T \mathbf{x}\mathbf{x}^T)$ through mapping the vector
\[
\left[ \frac{\sum_{i=1}^t (\mathbf{x}_i \mathbf{x}_i^T) \otimes_{\text{K}} (\mathbf{x}_i\mathbf{x}_i^T)}{t} \right] (\theta \otimes_{\text{K}} \theta).
\]
to its associated matrix. It remains to estimate $\E_{\mathbf{z} \sim P} (\mathbf{x}\mathbf{x}^T \theta)$ in \eqref{eq:cov_two}. Recognizing $\E_{\mathbf{z} \sim P} (\mathbf{x}\mathbf{x}^T \theta) = \left[ \E_{\mathbf{z} \sim P} (\mathbf{x}\mathbf{x}^T) \right] \theta$, this can be done by simply recording the sum $\mathbf{x}_1\mathbf{x}_1^T + \cdots + \mathbf{x}_t\mathbf{x}_t^T$ in an online manner and replacing $\E_{\mathbf{z} \sim P} (\mathbf{x}\mathbf{x}^T)$ by the average $(\mathbf{x}_1\mathbf{x}_1^T + \cdots + \mathbf{x}_t\mathbf{x}_t^T)/t$. Likewise, $\Cov(\mathbf{y}\mathbf{x})$ and $\Cov(\mathbf{x}\mathbf{x}^T \theta, \mathbf{x}\mathbf{y})$ can be estimated in the online setting regardless of a varying $\theta$. We omit this part in the interest of space.

\section{Numerical Experiments}

In this section we present results for numerical experiments. Full details of the experiments are deferred to Appendix~\ref{sec:exp-details}.

\smallskip

\noindent \textbf{Linear models.} \quad 

The first numerical example is the simple linear regression, as in Fig.~\ref{fig:linear}. Here we generate two plots as a proof of concept. The top one summarizes the trajectory of several methods for inference --- our proposed \textit{MasGrad}, the discretized diffusion approximation \textit{diff\_MasGrad}, as well as the classical \textit{SGD}, and the diffusion approximation \textit{diff\_SGD} --- with the confidence intervals (95\% coverage) at each time step $t$. In this convex setting, we can solve for the global optimum, which is labeled as the \textit{truth}. Here the mini-batch size is $n=50$. We run $100$ independent chains to calculate the confidence intervals at each step. We look at the low dimensional case $p = 4$, and the four subfigures (on top) each corresponds to one
coordinate of the parameter $w_i, i \in [p]$. The $x$-axis is $t$, the time of the evolution, and $y$-axis is the value of the parameter $w$. We remark that \textit{MasGrad} and \textit{diff\_MasGrad} are path-wise close in terms of distribution, which verifies our statistical theory in Thm.~\ref{thm:couple.p}. This also holds for \textit{GD} and \textit{diff\_GD}.
Remark that in this simulation, the condition number of the empirical Gram matrix is $30.98$, and the first and third coordinates have very small population eigenvalues, which explains why in those coordinates \textit{MasGrad} has significant acceleration compared to \textit{SGD} as shown in the figure. To be fair, at each time step, both MasGrad and SGD sample the same amount of data, and the step-size is chosen as in Thm.~\ref{thm:acceleration}. All four chains start with the same random initialization.

To examine the optimization side of the story, we plot the logarithm of the $\ell_2$-error according to time $t$, for \textit{diff\_MasGrad} and \textit{diff\_SGD}, in the bottom plot. Remark that the error bar quantifies the confidence interval for the log error. In theory, we should expect that the slope of MasGrad is twice that of the slope of SGD. In simulation, it seems that the acceleration is slightly better than what the theory predicts.
We would like to remark that compared to GD, in which different coordinates make uneven progress (fast progress in the second and fourth coordinates, but slow on the others), MasGrad adaptively adjusts the relative step-size on each coordinate for synchronized progress. This effect has also been observed in AdaGrad and natural gradient descent.

\begin{figure}[pht]
  \centering
\includegraphics[width = 0.9\textwidth]{./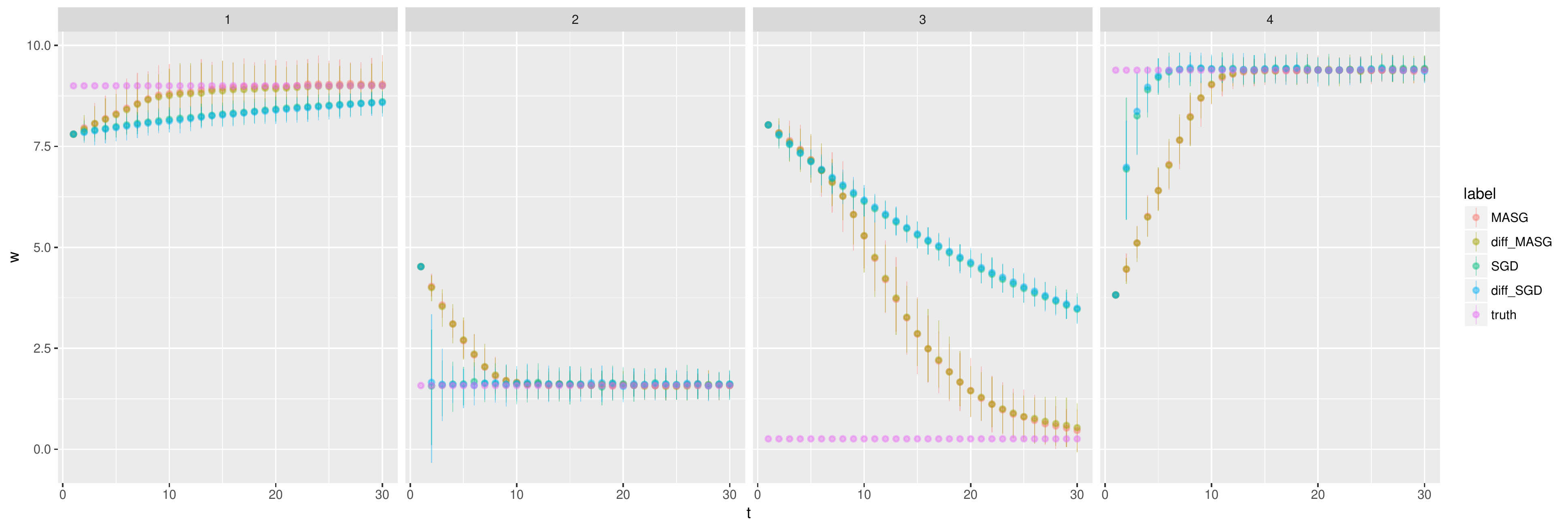}
\includegraphics[width = 0.9\textwidth]{./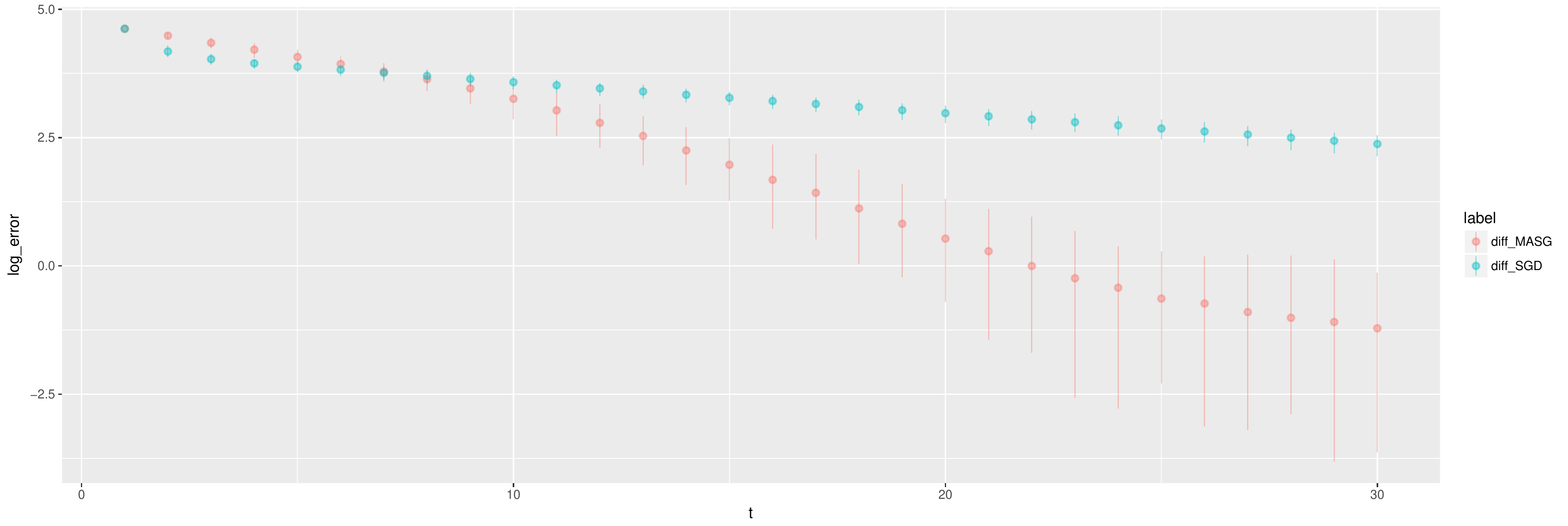}
\caption{Linear regression}
\label{fig:linear}
\end{figure}

\noindent \textbf{Logistic model.} \quad
Fig.~\ref{fig:logistic} illustrates the acceleration for inference in logistics regression. The figure should be read the same way as in the linear case. In this case, we sample a much larger number of samples ($N = 500$) and then use the GLMs package in R to fit the global optimum. For MasGrad and SGD, we generate bootstrap subsamples ($n = 25$) to evaluate stochastic descents at each iteration. Again, we run $100$ independent chains to calculate the confidence interval at each step. 
In this case, there is no theoretically optimal way of choosing the step-size, so we choose the same step-size ($\eta = 0.2$) for both MasGrad and SGD.

Statistically, the \textit{MasGrad} and \textit{diff\_MasGrad} are close in distribution when $t < 100$, and they both reach a stationary distribution after around $50$ steps, simultaneously for all $p=4$ coordinates. Then the distribution fluctuates around stationarity. However, \textit{GD} and \textit{diff\_GD} make much slower progress, and they fail to reach the global optimum in $100$ steps.
For optimization, empirically the acceleration in the log error plot seems to be better than what the theoretical results predict. Remark that the confidence intervals are on the scale of log error, therefore, it is negative-skewed.

\begin{figure}[pht]
  \centering
\includegraphics[width = 0.9\textwidth]{./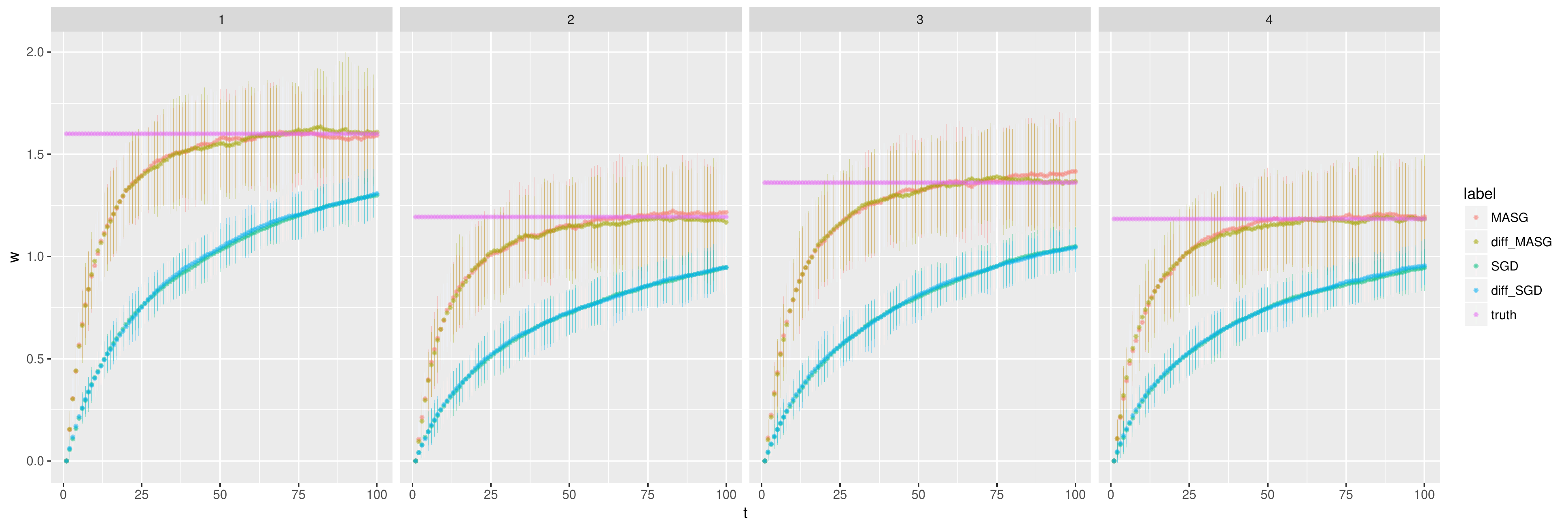}
\includegraphics[width = 0.9\textwidth]{./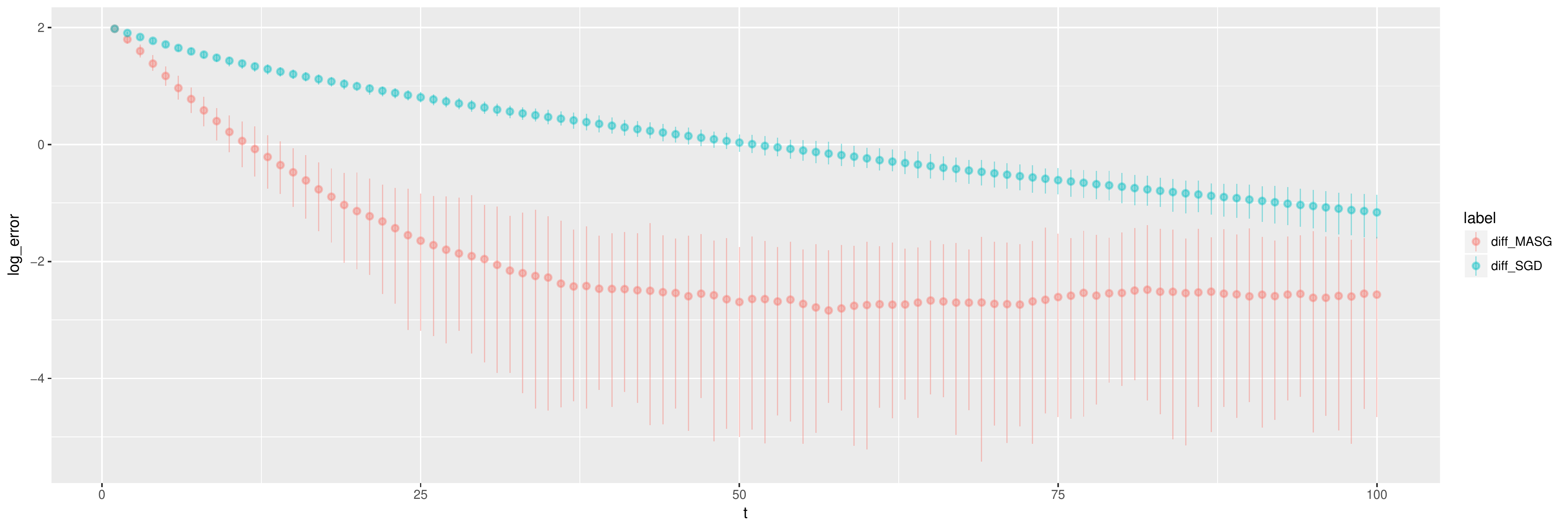}
\caption{Logistic regression}
\label{fig:logistic}
\end{figure}

\noindent \textbf{Gaussian mixture.} \quad
Here we showcase inference via MasGrad for non-convex case, using the Gaussian mixture model. We will consider a simple setting: the data $z_i \in \mathbb{R}^n, 1\leq i \leq [N]$ generated from a mixture of $p$ Gaussians, with mean $[\theta_1, \theta_2,\ldots, \theta_p]\triangleq \theta$ respectively, and variance $\sigma^2$. The goal is to infer the unknown mean vector $\theta \in \mathbb{R}^p$. The problem is non-convex due to the mixture nature: the maximum likelihood is multimodal, as we can shuffle the coordinates of $\theta$ to obtain the equivalent class of local optima.

Fig.~\ref{fig:mixture} illustrates the acceleration for inference in the Gaussian mixture model. Here we run two simulations, according to the difficulty (or separability) of the problem defined as the signal-to-noise ratio ${\rm SNR} \triangleq \min_{i\neq j} |\theta_{i} - \theta_{j}|/\sigma$. The top one is for the easy case with ${\rm SNR} = 3.3$ and the bottom one for the hard case with ${\rm SNR} = 1$. In both simulations, $\theta = (1,2,3) \in \mathbb{R}^3$, and we choose a random initial point to start the chains. The plot is presented as before. At each iteration, we subsample $n = 20$ data points to calculate the decent direction, and the step-size is fixed to be $\eta = 0.05$. Remark that there are many population local optima (at least $3! = 6$), and both \textit{MasGrad} and \textit{diff\_MasGrad} seem to be able to find a good local optimum relatively quickly (which concentrates near a permutation of $1,2,3$ for each coordinate), compared to \textit{SGD} and \textit{diff\_SGD}. The acceleration effect in both cases seems to be apparent. Again, we want to emphasize that the convergence for each coordinate in MasGrad seems to happen around the same number of iterations, which is not true for SGD.

\begin{figure}[pht]
\centering
\includegraphics[width =0.9\textwidth]{./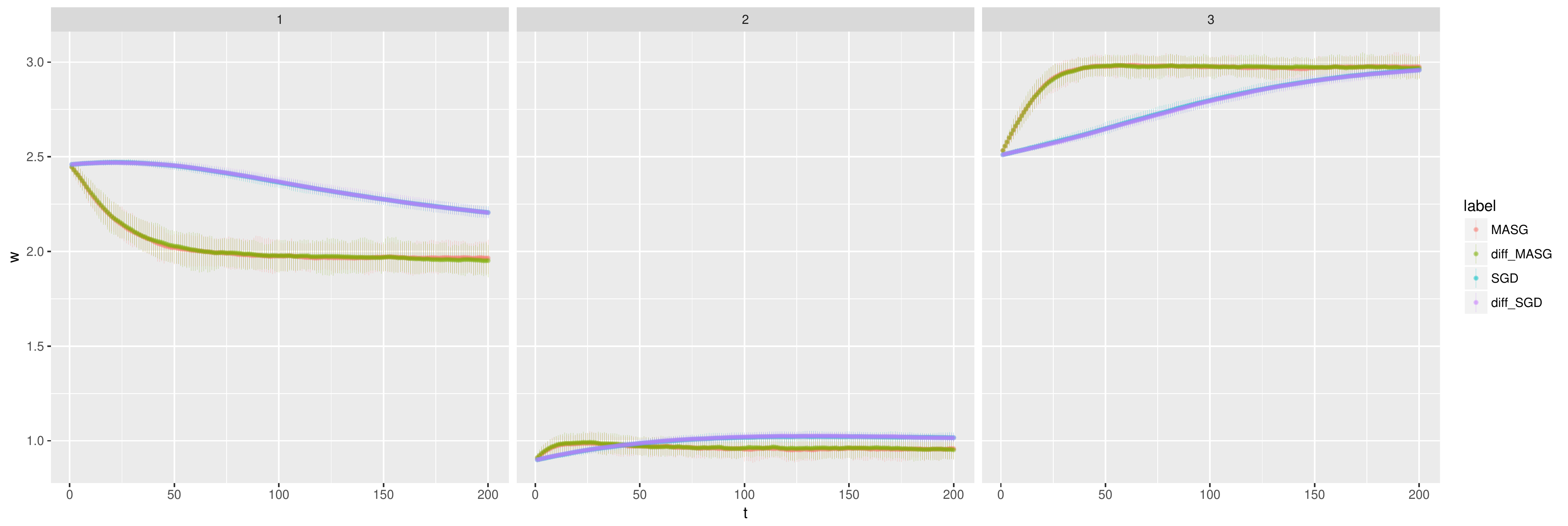}
\includegraphics[width =0.9\textwidth]{./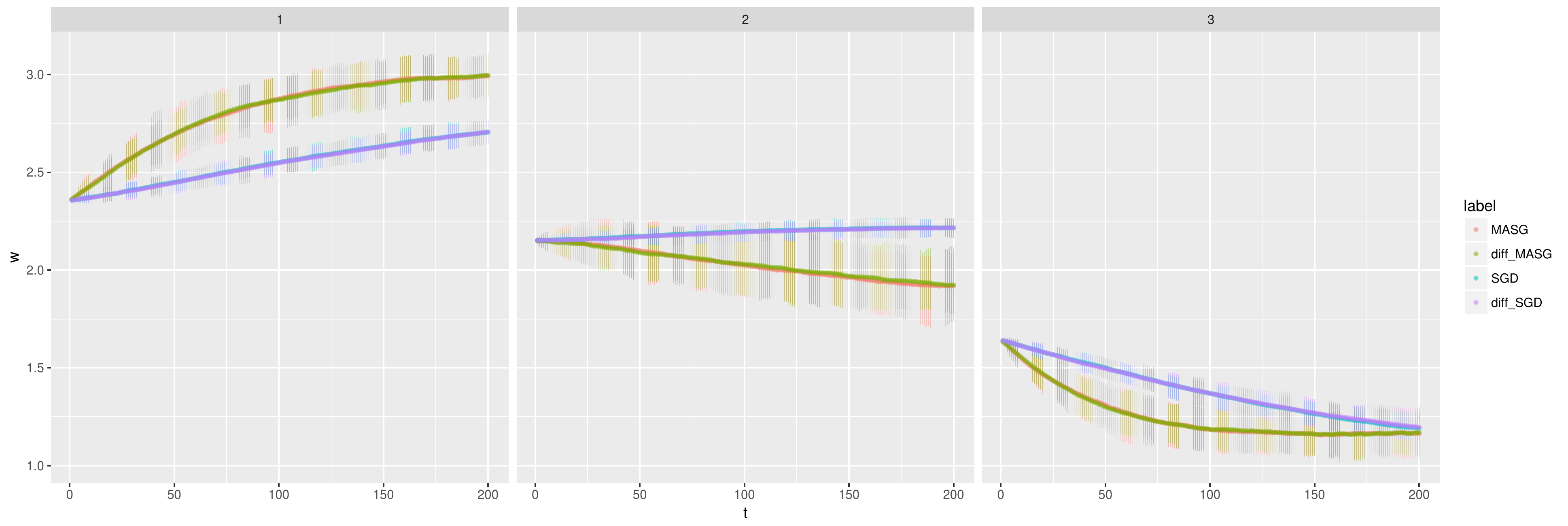}
\caption{Gaussian mixture}
\label{fig:mixture}
\end{figure}

\noindent \textbf{Shallow neural networks.} \quad
We also run MasGrad on a two-layer ReLU neural network, as a proof of concept for non-convex models. Define the ReLU activation $\sigma(x) = \max(x, 0)$, a two-layer neural network (with $k$ hidden units) represents a function
\begin{align*}
	f_w(x) = \sigma(W_2 \sigma(W_1 x)), \quad \text{where $x \in \mathbb{R}^d$, $w = \{ W_1 \in \mathbb{R}^{k \times d}, W_2 \in \mathbb{R}^{1 \times k} \}$}.
\end{align*}
In our experiment, we work with the square loss
$\ell(w, (x,y)) = \frac{1}{2}(y - f_w(x))^2.$
The gradients can be calculated through back-propagation. 
In this case, it is harder to calculate the global optimum; instead, in order to compare the \textit{diff\_MasGrad} and \textit{SGD}, we run $50$ experiments with random initializations to explore the population landscape.

For each experiment (as illustrated in the top figure in Fig.~\ref{fig:neural}), we randomly initialize the weights using standard Gaussians.  As usual, we run 100 independent chains with the same initial points for \textit{diff\_MasGrad} and \textit{SGD} to calculate the confidence interval. As anticipated, the distribution is rather non-Gaussian (for instance, in coordinate $2$ and $6$). We run the chain for 100 steps, and then evaluate the population loss function for the two methods. Out of the $50$ experiments, $45/50= 90\%$ of the time the population loss returned by \textit{diff\_MasGrad} is much smaller than that of the \textit{SGD}. The bottom figure in Fig.~\ref{fig:neural} plots the histogram (dotplot using ggplot2 \citep{Wickham:2009aa}) of the population error (test accuracy). Empirically, the $\textit{diff\_MasGrad}$ seems to converge to ``better'' local optima most of the time. There could be several explanations: first, MasGrad uses better local geometry (similar to natural gradient) so that it induces better implicit regularization; second, MasGrad as an optimization method accelerates the chain so that it mixes to a local optima faster, compared to SGD which may not yet converge within a certain time budget.

\begin{figure}[pht]
\includegraphics[width = 0.9\textwidth]{./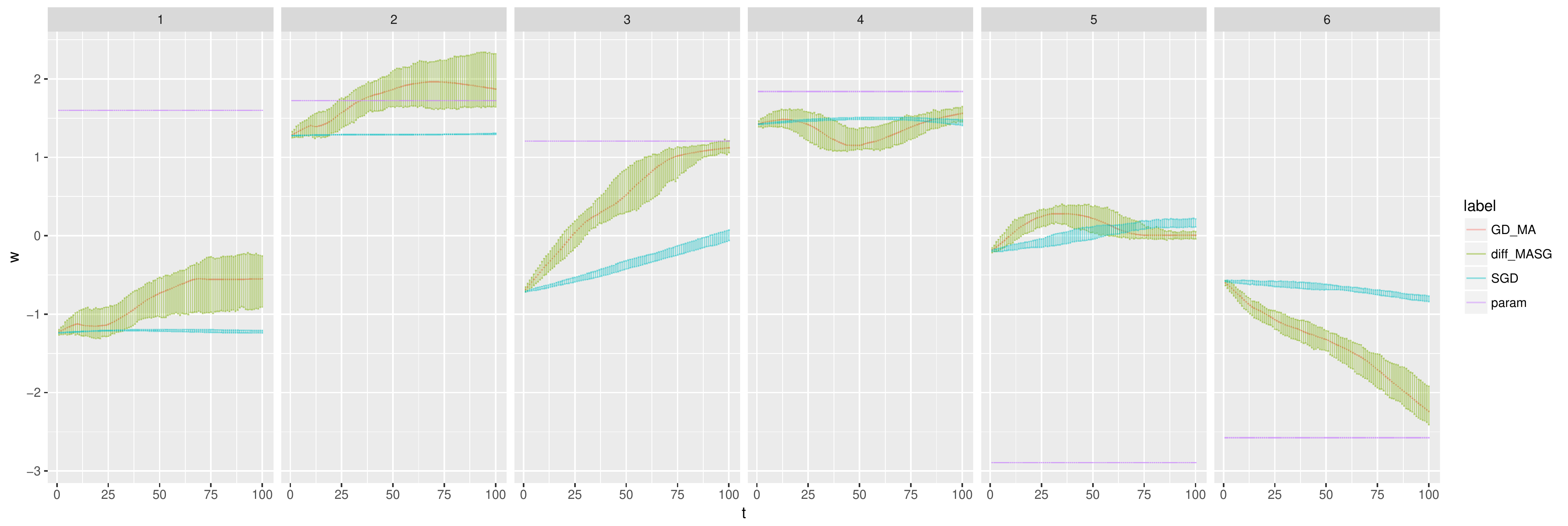}
\includegraphics[width = 0.9\textwidth]{./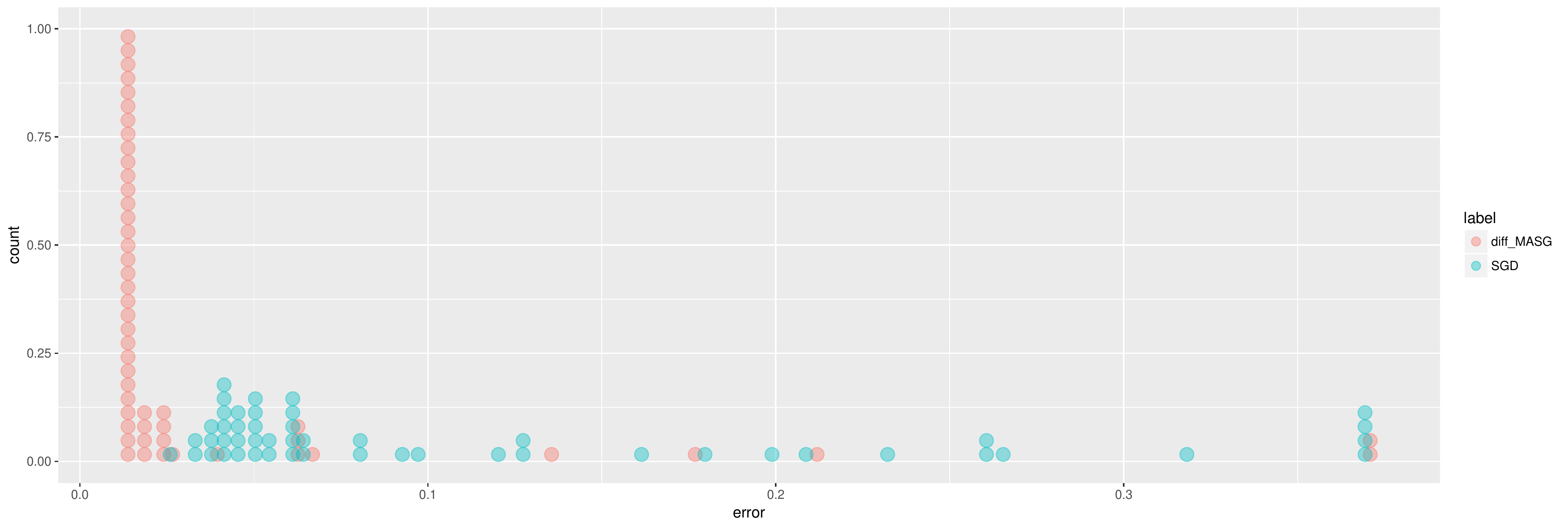}
\centering
\caption{Shallow neural nets}
\label{fig:neural}
\end{figure}

\section{Further Discussions}

Let us continue to discuss more about $\mathbf{V}(\theta_t)$. Note that in the fixed-dimension setting, one can estimate the covariance matrix of the gradient $\nabla \ell(\theta, \mathbf{z})$ using the empirical version with $N$ independent samples, when $N$ is large. Let us be more careful in this statement: (1) When the population landscape is convex, then the global optimum of $\widehat{L}_N(\theta)$ and $L(\theta)$ are within $1/\sqrt{N}$. We can always treat $\widehat{L}_N(\theta)$ as the population version and at each step we bootstrap subsamples of size $n$ to evaluate the stochastic gradients, adjusted using the empirical covariance $\widehat{\mathbf{V}}_N$ calculated using $N$ data points. Intuitively, when $\eta < O(n/N)$ (so that $\beta > N$), we know the MasGrad will concentrate near the optimum of $\widehat{L}_N(\theta)$ with better accuracy than $1/\sqrt{N}$. (2) In the non-convex case, things become unclear. However, under stronger conditions such as strongly Morse \citep{mei2016landscape}, i.e., when there is nice one-to-one correspondence between the stationary points of $\widehat{L}_N(\theta)$ and $L(\theta)$, one may still use the bootstrap idea above with $\widehat{\mathbf{V}}_N$. (3) Computation of $\widehat{\mathbf{V}}_N$ and its inverse could be burdensome, thus one may want to use the efficient rank-one updates designed in Section~\ref{sec:optim-updat-online}, or to calculate a diagonalized version of $\widehat{\mathbf{V}}_N$ as done in AdaGrad \citep{duchi2011adaptive}. (4) To have fully rigorous non-asymptotic theory in the case where $\mathbf{V}$ is known, one may require involved tools from self-normalized processes \citep{pena2008self} to establish a similar version of entropic CLT for multivariate self-normalized processes, where we standardize $\widehat{\E}_n[\nabla \ell(\theta, \mathbf{z})]$ by the empirical covariance matrix $\widehat{\mathbf{V}}_n$ calculated based on the same samples. To the best of our knowledge, this is an ambitious and challenging goal that is beyond the scope and focus of the current paper.

We would like to conclude this section by discussing the connections between pre-conditioning methods and our moment-adjusting method.
Pre-conditioning considers performing a linear transformation $\xi = A^{-1} \theta$ on the original parameter space on $\theta$. In other words, consider $\tilde{L}(\xi) \triangleq L(A\xi)$, and perform the updates on $\xi$ yields
\begin{align*}
	  \xi_{t+1} = \xi_{t} - \eta \nabla_\xi \tilde{L}(\xi)= \xi_{t} - \eta A \mathbf{b} (A \xi_t) ~\Rightarrow~ \theta_{t+1} =  \theta_{t} - \eta A^2 \mathbf{b}(\theta_t),
\end{align*}
Therefore, in the noiseless case, the moment-adjusting method is equivalent to pre-conditioning when the moment matrix $\mathbf{V}(\theta)$ is a constant matrix w.r.t. $\theta$. However, in Langevin diffusion when the isotropic Gaussian noise is presented, the connection becomes more subtle --- as $\mathbf{V}^{-1} (\theta) \mathbf{b}(\theta)$ may not be the gradient vector field for any function. The moment-adjusting idea motivated from standardizing noise in statistics is different from the pre-conditioning idea in optimization. We would also like to point out that a nice idea using Hessian information to speed up the Langevin diffusion for sampling from log-concave distribution has been considered in \cite{dalalyan2017theoretical}. Remark that we use the moment matrix at the current
point $\theta_t$ (time varying) instead of the optimal point $\theta_*$ (which is unknown). We also use the matrix root instead of the covariance matrix itself. In the case when the model is well-specified and the loss function chosen to be the negative log-likelihood, the $V(\theta_*)$ is the root of the Fisher information matrix.

\section*{Supplemental Materials}
Due to space constraints, we have
relegated further discussion of Langevin diffusion to Appendix~\ref{sec:continuous-langevin}, the detailed proofs to Appendix~\ref{sec:proof}, and remaining details about experiments to Appendix~\ref{sec:exp-details} in the Supplement to ``Statistical Inference for the Population Landscape via Moment-Adjusted Stochastic Gradients."

\section*{Acknowledgement}
The authors would like to thank the Associate Editor and the anonymous referees for the constructive feedback that significantly improves the content and presentation of the paper.

\bibliographystyle{abbrvnat}
\bibliography{ref}

\appendix

\section{Discussions on continuous time Langevin diffusion}
\label{sec:continuous-langevin}

In this section we will provide non-asymptotic bounds on the closeness of the discretized and the continuous time Langevin diffusion, both in terms of the Wasserstein-$2$ distance, and the entropic distance. Let us introduce few notations within this section. Denote $\mathbf{h}(x) = \mathbf{V}(x)^{-1} \mathbf{b}(x)$, and let's define two processes $\theta_{t}$ and $\xi_{t}$ with the same initial position $\xi_0$ as follows
\begin{align}
	\text{Continuous}: \quad \theta_t &=  \theta_0 - \int_0^t \mathbf{h}(\theta_s) ds + \sqrt{2\beta^{-1}} \int_0^t dB_s, \label{eq:conti}\\
	\text{Interpolation}: \quad \xi_{t} &= \theta_{0} - \int_0^t \mathbf{h}(\xi_{\lfloor s/\eta \rfloor \eta}) ds + \sqrt{2\beta^{-1}} \int_0^t dB_s. \label{eq:discr}
\end{align}
Here $\theta_{t}$ is the continuous time Langevin diffusion, and $\xi_{t}$ is an interpolation of the discretization process $\xi_{k}$ in Eqn.~\eqref{eq:discrete.diff}: for any integer $k$, the marginal distribution of $\xi_{k\eta}$ is the same as $\xi_{k}$, and is well-defined for any $t \in [(k-1)\eta, k\eta]$. Under the following standard assumptions, we can establish Lemma~\ref{lem:wass} for Wasserstein distance and Lemma~\ref{lem:discrete-to-continuous} for relative entropy.
\begin{assumption}[Lipschitz]
	\label{asmp:Lipschitz}
	Assume that $\mathbf{h}(\cdot)$ is $\ell$-Lipschitz,
	\begin{align*}
		\| \mathbf{h}(x) - \mathbf{h}(y) \| \leq \ell \| x - y \|, \quad \forall x, y \in \mathbb{R}^d.
	\end{align*}
\end{assumption}

\begin{assumption}[Boundedness]
	\label{asmp:bound}
	Assume that $\mathbf{h}(\cdot)$ is $M$-bounded,
	\begin{align*}
		\| \mathbf{h}(x) \| \leq M, \quad \forall x \in \mathbb{R}^d.
	\end{align*}
\end{assumption}

\begin{assumption}[Expansiveness]
	\label{asmp:expansive}
	Assume that $x \mapsto x  - \eta \mathbf{h}(x)$ is $\delta$-expansive,
	\begin{align*}
		\| \left(x - \eta \mathbf{h}(x) \right) - \left(y -\eta \mathbf{h}(y) \right) \| \leq \delta \| x - y \|, \quad \forall x, y \in \mathbb{R}^d.
	\end{align*}
\end{assumption}

\begin{lem}[Wasserstein]
	\label{lem:wass}
	Let $W_2(\mu, \nu)$ denote the Wasserstein-$2$ distance,
	\begin{align*}
		W_2(\mu, \nu) \triangleq \left\{ \inf_{\gamma \in \Gamma(\mu, \nu)} \int \| x - y \|^2 d \gamma(x, y) \right\}^{1/2}, \quad \text{$\Gamma(\mu, \nu)$ are all couplings of $\mu, \nu$.}
	\end{align*}
	Under Assumptions~\ref{asmp:Lipschitz}, \ref{asmp:bound} and \ref{asmp:expansive}, the Wasserstein-$2$ distance between the $\xi_{k\eta}$ in \eqref{eq:discr} and $\theta_{k\eta}$ in \eqref{eq:conti} satisfies
	\begin{align*}
		W_2\left(\mu(\xi_{k\eta}), \mu(\theta_{k\eta}) \right) \leq \left(\frac{2\ell^2 M^2 }{3} \eta^4 + 2\ell^2 p \cdot \beta^{-1} \eta^3 \right)^{1/2} \cdot \sum_{i=0}^{k-1} \delta^i.
	\end{align*}
\end{lem}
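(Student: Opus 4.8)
The goal is to bound the Wasserstein-$2$ distance between the interpolated discretization $\xi_{k\eta}$ defined in \eqref{eq:discr} and the continuous Langevin diffusion $\theta_{k\eta}$ in \eqref{eq:conti}. Since both processes are driven by the \emph{same} Brownian motion $B_s$ and share the same initial point, the natural strategy is to construct an explicit coupling --- simply use the synchronous coupling where both processes see the identical noise path --- and then directly estimate $\E \| \xi_{k\eta} - \theta_{k\eta} \|^2$. Because $W_2^2$ is an infimum over couplings, any single coupling gives a valid upper bound, so this reduces the problem to a pathwise second-moment computation.

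\textbf{The one-step recursion.} First I would subtract \eqref{eq:discr} from \eqref{eq:conti} over a single interval $[k\eta, (k+1)\eta]$. The Brownian increments cancel exactly under the synchronous coupling, leaving
\begin{align*}
	\xi_{(k+1)\eta} - \theta_{(k+1)\eta} = \bigl( \xi_{k\eta} - \theta_{k\eta} \bigr) - \int_{k\eta}^{(k+1)\eta} \bigl[ \mathbf{h}(\xi_{k\eta}) - \mathbf{h}(\theta_s) \bigr] \, ds.
\end{align*}
The key decomposition is to split the integrand into a ``frozen'' term and a ``drift'' term: write $\mathbf{h}(\xi_{k\eta}) - \mathbf{h}(\theta_s) = \bigl[\mathbf{h}(\xi_{k\eta}) - \mathbf{h}(\theta_{k\eta})\bigr] + \bigl[\mathbf{h}(\theta_{k\eta}) - \mathbf{h}(\theta_s)\bigr]$. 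The first bracket, combined with the leading difference term, is precisely the $\delta$-expansive map from Assumption~\ref{asmp:expansive}, which contracts (or expands) the accumulated error by the factor $\delta$ per step. The second bracket measures how far the continuous diffusion drifts within one time-step of length $\eta$; using the $\ell$-Lipschitz property (Assumption~\ref{asmp:Lipschitz}), I would bound it by $\ell \, \E \| \theta_s - \theta_{k\eta} \|$, and then control $\E \| \theta_s - \theta_{k\eta} \|^2$ on the short interval directly from \eqref{eq:conti}. That local increment has two contributions: the drift piece, bounded by $M \eta$ via Assumption~\ref{asmp:bound}, and the Brownian piece, contributing $2\beta^{-1} p \, \eta$ in expected squared norm. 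Squaring and integrating over the length-$\eta$ interval yields the two terms $\tfrac{2}{3}\ell^2 M^2 \eta^4$ and $2\ell^2 p \beta^{-1} \eta^3$ as the per-step error injected.

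\textbf{Unrolling and the obstacle.} Once the one-step bound reads, schematically, $\|\text{error}\|_{k+1} \le \delta \, \|\text{error}\|_k + \sqrt{E_\eta}$ with $E_\eta = \tfrac{2}{3}\ell^2 M^2\eta^4 + 2\ell^2 p \beta^{-1}\eta^3$, a geometric unrolling from the matched initial condition gives the stated $\sum_{i=0}^{k-1}\delta^i$ prefactor. The main obstacle I anticipate is the careful bookkeeping at the one-step level: one must apply the triangle inequality in $L^2$ correctly so that the expansive contraction factor $\delta$ multiplies the accumulated error while the newly injected discretization error adds as a square root, and one must ensure the short-interval moment estimate for $\E\|\theta_s - \theta_{k\eta}\|^2$ is sharp enough to produce the exact constants $\tfrac{2}{3}$ and the powers $\eta^4, \eta^3$. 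This requires treating the drift and diffusion contributions to the local increment separately (they are independent in expectation, so cross terms vanish), and being attentive to the $\eta^2$ versus $\eta$ scaling: the drift contributes at order $\eta^2$ per increment, which squares to $\eta^4$, whereas the martingale part contributes at order $\eta$, squaring to $\eta^2$ but gaining an extra $\eta$ from integrating the Lipschitz bound over the interval. Getting these powers and the Brownian dimension factor $p$ to line up is the delicate part; the rest is a standard stochastic-calculus estimate combined with a linear recursion.
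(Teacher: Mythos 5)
Your proposal matches the paper's proof essentially line for line: the same synchronous coupling through a shared Brownian motion, the same splitting of the one-step difference into the $\delta$-expansive map $x \mapsto x - \eta\mathbf{h}(x)$ applied to the accumulated error plus the local term $\int_{(k-1)\eta}^{k\eta}[\mathbf{h}(\theta_t)-\mathbf{h}(\theta_{(k-1)\eta})]\,dt$, the same Lipschitz-plus-boundedness estimate of the short-interval increment, and the same geometric unrolling of the resulting $L^2$ recursion. One small correction: the drift increment $\int_{(k-1)\eta}^{t}\mathbf{h}(\theta_s)\,ds$ is not independent of the Brownian increment $B_t - B_{(k-1)\eta}$ (since $\theta_s$ depends on the Brownian path), so the cross term does not vanish by independence; the paper instead uses $\|a+b\|^2 \le 2\|a\|^2+2\|b\|^2$, which is precisely where the constants $\tfrac{2}{3}$ and $2$ in the stated bound come from.
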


\begin{remark}
	\rm
	Let's make few remarks to dissect the non-asymptotic upper bound in Lemma~\ref{lem:wass}. Here we borrow Lemma 3.7 in \cite{hardt2015train}, which accounts for the expansiveness of updates induced by the vector field $\mathbf{h}$, as follows:
	\begin{enumerate}
		\item If $\mathbf{h}$ is $\ell$-smooth, then $x - \eta \mathbf{h}(x)$ is $(1+\eta \ell)$-expansive;
		\item If in addition $\mathbf{h} = \nabla U$, where $U$ is a convex, then for $\eta \leq \frac{2}{\ell}$, $x - \eta \mathbf{h}(x)$ is $1$-expansive;
		\item If in addition $U$ is $\alpha$-strongly convex, then for $\eta \leq \frac{2}{\alpha + \ell}$, $x - \eta \mathbf{h}(x)$ is $(1 - \frac{\eta \alpha \ell}{\alpha + \ell} )$-expansive.
	\end{enumerate}
	First, let's focus on the dependence of $\eta$ and $k$ in the Wasserstein bound. Plug in $\beta = \frac{2n}{\eta}$, we discuss the three cases for the expansiveness parameter $\delta$.
	\begin{enumerate}
		\item Smooth non-convex: $\delta = 1 + \eta \ell$, we have $W_2\left(\mu(\xi_{k\eta}), \mu(\theta_{k\eta}) \right) \leq O(\eta e^{\ell k\eta})$.
		\item Convex: $\delta = 1$, the Wasserstein-$2$ distance reads $W_2\left(\mu(\xi_{k\eta}), \mu(\theta_{k\eta}) \right) \leq O(k\eta^2)$.
		\item Strongly convex: $\delta = 1 - \frac{\eta \alpha \ell}{\alpha + \ell}$, we have $W_2\left(\mu(\xi_{k\eta}), \mu(\theta_{k\eta}) \right) \leq O(\eta \frac{\alpha + \ell}{\alpha\ell})$.
	\end{enumerate}
	In the convex and strongly convex cases, the Wasserstein bound depends on $k$ in a desirable weak manner, as one utilizes the non-expansiveness of the vector fields $\mathbf{h}$.
	In the most general smooth non-convex case, the Wasserstein bound $O(\eta e^{\ell k\eta})$ agrees with the Gr\"{o}nwall's inequality \citep{borkar1999strong} on the exponential dependence ($e^{k\eta}$ for effective time scaling $k\eta$). This undesirable exponential dependence motivates us to also present the non-asymptotic bound using a different notion --- the relative entropy via Girsanov formula in Lemma~\ref{lem:discrete-to-continuous}.
\end{remark}

\begin{lem}[Relative entropy]
	\label{lem:discrete-to-continuous}
	Under Assumptions~\ref{asmp:Lipschitz} and \ref{asmp:bound}, the relative entropy between stochastic processes $\{\xi_t, 0\leq t\leq k\eta\}$ in \eqref{eq:discr} and $\{\theta_t, 0\leq t\leq k\eta\}$ in \eqref{eq:conti} satisfies,
	\begin{align*}
		D_{\rm KL}\left( \mu(\theta_t, 0\leq t\leq k\eta) || \mu(\xi_t, 0\leq t\leq k\eta) \right)  \leq \left( \frac{\ell^2 M^2}{6} \beta \eta^3  +  \frac{\ell^2 p}{2} \eta^2 \right) \cdot k.
	\end{align*}
\end{lem}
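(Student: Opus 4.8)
The plan is to invoke the Girsanov change-of-measure formula, which is ideally suited here because the continuous diffusion in \eqref{eq:conti} and the interpolated process in \eqref{eq:discr} are driven by the same Brownian motion with the identical constant diffusion coefficient $\sqrt{2\beta^{-1}}$, differing only through their drift vector fields. Viewing both path laws as measures on $C([0,k\eta], \mathbb{R}^p)$ with canonical process $X_t$, the drift under $\mu(\theta_t, 0\leq t\leq k\eta)$ is $-\mathbf{h}(X_t)$, whereas under $\mu(\xi_t, 0\leq t\leq k\eta)$ it is $-\mathbf{h}(X_{\lfloor t/\eta\rfloor \eta})$. Both fields are adapted, and Assumption~\ref{asmp:bound} guarantees boundedness (hence the Novikov condition for the associated exponential martingale), so Girsanov yields mutual absolute continuity together with the identity
\begin{align*}
	D_{\rm KL}\left( \mu(\theta_t, 0\leq t\leq k\eta) || \mu(\xi_t, 0\leq t\leq k\eta) \right) = \frac{\beta}{4}\, \mathbb{E}\left[ \int_0^{k\eta} \left\| \mathbf{h}(\theta_t) - \mathbf{h}(\theta_{\lfloor t/\eta\rfloor \eta}) \right\|^2 dt \right],
\end{align*}
where the expectation is taken along the continuous diffusion $\theta_t$ and the prefactor $\frac{\beta}{4} = \frac{1}{2\cdot 2\beta^{-1}}$ arises from the squared diffusion coefficient.

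Next I would reduce the drift discrepancy to a one-step displacement bound. By the $\ell$-Lipschitz Assumption~\ref{asmp:Lipschitz}, the integrand is controlled by $\ell^2 \|\theta_t - \theta_{\lfloor t/\eta\rfloor \eta}\|^2$, so it remains to estimate $\mathbb{E}\|\theta_t - \theta_{t_0}\|^2$ where $t_0 = \lfloor t/\eta\rfloor\eta$ denotes the preceding grid point. Writing the increment directly from the defining SDE,
\begin{align*}
	\theta_t - \theta_{t_0} = -\int_{t_0}^t \mathbf{h}(\theta_s)\, ds + \sqrt{2\beta^{-1}}\,(B_t - B_{t_0}),
\end{align*}
I would split via $\|a+b\|^2 \leq 2\|a\|^2 + 2\|b\|^2$, bound the drift term by $M^2 (t-t_0)^2$ using Assumption~\ref{asmp:bound} together with Jensen's inequality, and use $\mathbb{E}\|B_t - B_{t_0}\|^2 = p(t-t_0)$ for the $p$-dimensional Brownian part. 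This yields $\mathbb{E}\|\theta_t - \theta_{t_0}\|^2 \leq 2M^2(t-t_0)^2 + 4\beta^{-1} p (t-t_0)$.

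Finally, I would assemble the pieces by integrating over each mesh interval. On $[j\eta,(j+1)\eta]$ one has $t-t_0 = t - j\eta$, so the substitution $u = t-j\eta$ gives $\int_0^\eta (2M^2 u^2 + 4\beta^{-1} p u)\, du = \frac{2M^2}{3}\eta^3 + 2\beta^{-1} p \eta^2$; summing over the $k$ intervals and multiplying by $\frac{\beta \ell^2}{4}$ recovers exactly $\left( \frac{\ell^2 M^2}{6}\beta\eta^3 + \frac{\ell^2 p}{2}\eta^2\right) k$, as claimed. The main obstacle I anticipate is the rigorous justification of the Girsanov step: one must verify that the Dol\'eans--Dade exponential is a genuine martingale (not merely a local one), so that the Radon--Nikodym derivative integrates to one and the KL identity holds with no defect term. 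Assumption~\ref{asmp:bound} makes this clean, since boundedness of $\mathbf{h}$ immediately supplies the Novikov condition; nonetheless some care is warranted because the drift of the interpolated process is only piecewise-constant-in-time along the path, so one should confirm adaptedness and square-integrability on each mesh subinterval before concatenating the change of measure across $[0,k\eta]$.
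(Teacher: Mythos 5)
Your proposal is correct and follows essentially the same route as the paper: Girsanov's formula to express the relative entropy as $\frac{\beta}{4}$ times the integrated expected squared drift discrepancy, the Lipschitz assumption to reduce to a one-step displacement bound, and the elementary $2M^2(t-t_0)^2+4\beta^{-1}p(t-t_0)$ estimate integrated over each mesh interval. The only (immaterial) difference is that you evaluate the drift discrepancy along the continuous diffusion $\theta_t$ — which is indeed the correct measure for $D_{\rm KL}(\mu(\theta)\,||\,\mu(\xi))$ and costs you a Jensen step on $\int_{t_0}^t\mathbf{h}(\theta_s)\,ds$ — whereas the paper evaluates it along the interpolation $\xi_t$, whose piecewise-constant drift makes the increment exact; both yield the identical final bound.
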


\begin{remark}
	\rm
	Let's explain the pros and cons of the upper bound in Lemma~\ref{lem:discrete-to-continuous}. On the one hand, the bound on relative entropy reads $O((n+p)k\eta^2)$ when plug in $\beta = \frac{2n}{\eta}$, which results in better dependence on $k$ for the general non-convex case. On the other hand, the bound is not as desirable as in Lemma~\ref{lem:wass} for two reasons. First, notions like total variation or entropic distance can be very strong, which can be easily seen in the extreme case when $n \rightarrow \infty$ and $\beta = \frac{2n}{\eta} \rightarrow \infty$ --- the distribution of the discretized diffusion and the continuous time analog are two $\delta$ measures with total variation distance $1$, even though we know they are path-wise close. The Wasserstein distance captures the path-wise closeness. Second, bound in Lemma~\ref{lem:discrete-to-continuous} fails to provide more detailed characterization when the vector field $\mathbf{h}(x)$ enjoys the non-expansive property as in Lemma~\ref{lem:wass}.
\end{remark}

\section{Technical Proofs}
\label{sec:proof}

\subsection{Proof of Theorems}

In this section, we provide details of the proof of theorems in the paper.

\begin{proof}[Proof of Theorem~\ref{thm:couple.p}]
	The MasGrad updates can be represented as
	\begin{align}
		\theta_{t+1} = \theta_{t} - \eta \mathbf{V}(\theta_{t})^{-1} \mathbf{b}(\theta_t) + \sqrt{2\beta^{-1} \eta} \frac{\sum_{i=1}^n X_i(\theta_t)}{\sqrt{n}}.
	\end{align}
	Denote $S_n(X, \theta_t) = \frac{\sum_{i=1}^n X_i(\theta_t)}{\sqrt{n}}$.
	Under the Assumptions~(A.1) and (A.2), Thm. 6.1 in \cite{bobkov2013} (with $(4+\delta)$-moment condition) implies at each step $t$,
	\begin{align}
	D_{\rm KL}\left( \mu(S_n(X, \theta_t))|| \mu( \mathbf{g}_t ) | \theta_{t} \right)  = \frac{C}{n} + o\left( \frac{(\log n)^{\frac{p - (4+\delta)}{2}}}{n^{\frac{4+\delta-2}{2}}}  \right) = \frac{C}{n} + o\left( \frac{(\log n)^{\frac{p - (4+\delta)}{2}}}{n^{1+\frac{\delta}{2}}} \right),
	\end{align}
	conditioned on $\theta_t$, for some constant $C >0$.

	Apply the chain-rule for relative entropy, we know that
	\begin{align*}
		&D_{\rm KL}\left( \mu(\theta_t, t \in [T]) || \mu( \xi_t, t \in [T]) \right)  \\
		& = D_{\rm KL}\left( \mu(\theta_t, t \in [T-1]) || \mu( \xi_t, t \in [T-1]) \right) \\
		&\quad \quad + \int   D_{\rm KL}\left( \mu(S_n(X, \theta_{T-1}) || \mu(\mathbf{g}_{T-1}) | \theta_{T-1} \right) d \mu(\theta_t, t \in [T-1]) \\
		& \leq D_{\rm KL}\left( \mu(\theta_t, t \in [T-1]) || \mu( \xi_t, t \in [T-1]) \right) + \frac{C}{n} + o\left( \frac{(\log n)^{\frac{p - (4+\delta)}{2}}}{n^{1+\frac{\delta}{2}}} \right) \\
		& \leq D_{\rm KL}\left( \mu(\theta_t, t \in [T-2]) || \mu( \xi_t, t \in [T-2]) \right) \\
		&\quad + \int   D_{\rm KL}\left( \mu(S_n(X, \theta_{T-2}) || \mu(\mathbf{g}_{T-2}) | \theta_{T-2} \right) d \mu(\theta_t, t \in [T-2]) + o\left( \frac{(\log n)^{\frac{p - (4+\delta)}{2}}}{n^{1+\frac{\delta}{2}}} \right) \\
		& \leq \ldots \leq D_{\rm KL}\left( \mu(\theta_0)|| \mu(\xi_0)  \right) + \frac{CT}{n} + o\left( \frac{T(\log n)^{\frac{p - (4+\delta)}{2}}}{n^{1+\frac{\delta}{2}}} \right),
	\end{align*}
	where the second step uses the fact for $a, b>0$, $D_{\rm KL}(\mu(X)||\mu(Y)) = D_{\rm KL}(\mu(a+bX) || \mu(a+bY))$, therefore
	{\scriptsize
	\begin{align*}
		&D_{\rm KL}\left( \left. \mu\left(\theta_{T-1} - \eta \mathbf{V}(\theta_{T-1})^{-1} \mathbf{b}(\theta_{T-1}) + \sqrt{2\beta^{-1} \eta} \frac{\sum\limits_{i=1}^n X_i(\theta_{T-1})}{\sqrt{n}} \right) \left|\right|  \mu\left(\theta_{T-1} - \eta \mathbf{V}(\theta_{T-1})^{-1} \mathbf{b}(\theta_{T-1}) + \sqrt{2\beta^{-1} \eta} \mathbf{g}_{T-1} \right) \right| \theta_{T-1} \right) \\
		&= D_{\rm KL} \left( \mu\left(\sum_{i=1}^n X_i(\theta_{T-1})/\sqrt{n}\right) || \mu(\mathbf{g}_{T-1}) | \theta_{T-1} \right) = D_{\rm KL}\left( \mu(S_n(X, \theta_{T-1}) || \mu(\mathbf{g}_{T-1}) | \theta_{T-1} \right).
	\end{align*}
	}

	Apply the Pinsker's inequality that for any random variables $X, Y$, $$\frac{1}{2}D_{\rm TV}(\mu(X), \mu(Y))^2 \leq  D_{\rm KL}(\mu(X)||\mu(Y)),$$ we finish the proof.
\end{proof}

\begin{proof}[Proof of Theorem~\ref{thm:converge}]
	Proceed as in the proof of Lemma.~\ref{lem:conv.glm}, there exists $0\leq \tilde{c} \leq 1$ such that
	\begin{align*}
		& \E \left\{ L(\xi_{t+1}) | \xi_{t}\right\}\\
		&=  \E \left\{ L(\xi_t) +  \langle \mathbf{b}(\xi_t), \xi_{t+1} - \xi_{t} \rangle + \frac{1}{2} (\xi_{t+1} - \xi_{t})^T \mathbf{H}\left(\tilde{c} \xi_t + (1-\tilde{c}) \xi_{t+1} \right) (\xi_{t+1} - \xi_{t}) | \xi_{t} \right\}, \\
		&\leq  L(\xi_t) - \eta \langle \mathbf{b}(\xi_t), \mathbf{V}(\xi_t)^{-1} \mathbf{b}(\xi_t) \rangle + \E \left\{ \frac{\gamma}{2}  \| \eta \mathbf{V}(\xi_t)^{-1/2} \mathbf{b}(\xi_t) - \mathbf{V}(\xi_t)^{1/2} \sqrt{2\beta^{-1}\eta}\mathbf{g}_t \|^2 | \xi_t\right\}, \\
		& \leq L(\xi_t) - \eta \| \mathbf{V}(\xi_t)^{-1/2} \mathbf{b}(\xi_t) \|^2 + \frac{\eta^2 \gamma}{2} \| \mathbf{V}(\xi_t)^{-1/2} \mathbf{b}(\xi_t) \|^2  + \beta^{-1} \eta \gamma\E \left\{ \| \mathbf{V}(\xi_t)^{1/2} \mathbf{g}_t \|^2| \xi_t \right\}, \\
		& = L(\xi_t) - \frac{1}{2\gamma} \| \mathbf{V}(\xi_t)^{-1/2} \mathbf{b}(\xi_t) \|^2 + \beta^{-1} \langle \mathbf{I}_p,  \mathbf{V}(\xi_t)\rangle.
	\end{align*}
	Therefore we have
	\begin{align*}
		\E L(\xi_{t+1}) - \min_{\xi} L(\xi) &\leq (1 - \frac{\alpha}{\gamma}) (\E L(\xi_t) - \min_{\xi} L(\xi)) + \beta^{-1} \langle \mathbf{I}_p, \E \mathbf{V}(\xi_t) \rangle \\
		\E L(\xi_{t+1}) - \min_{\xi} L(\xi) &\leq (1 - \frac{\alpha}{\gamma}) (\E L(\xi_t) - \min_{\xi} L(\xi)) + \beta^{-1} p \cdot \max_{\xi} \| \mathbf{V}(\xi) \| \\
		\E L(\xi_{k}) - \min_{\xi} L(\xi) & \leq (1 - \frac{\alpha}{\gamma})^k (\E L(\xi_0) - \min_{\xi} L(\xi)) +  \frac{\beta^{-1} p \cdot \max_{\xi} \| \mathbf{V}(\xi) \|}{1 - (1 - \frac{\alpha}{\gamma})},
	\end{align*}
	one can make both the term on the right hand side to be bounded by $\epsilon/2$ by choosing
	\begin{align*}
		T = \frac{\gamma}{\alpha} \log \frac{2(L(\theta_0) - \min_{\theta} L(\theta))}{\epsilon} ~~\text{and}~~ n = \frac{4p \max_{\theta} \| \mathbf{V}(\theta) \|}{\alpha \epsilon}.
	\end{align*}
	We remind the reader that $\eta = 1/\gamma$ and $\beta = 2n/\eta$.
	We know $\xi_0 = \theta_0$. It is easily seen that the same argument holds with $\theta_t$ as the second moment of the Gaussian approximation using $\xi_{t+1}$ (conditioned on $\xi_t$) matches that of $\theta_{t+1}$ (conditioned on $\theta_t$). 
	
\end{proof}

\begin{proof}[Proof of Theorem~\ref{thm:acceleration}]
	
Let's denote for two matrices $A \preceq B$ denotes that $B-A$ is a positive semi-definite matrix.
First, observe that $\Cov[\beta(\mathbf{x}, w) \mathbf{x} ]   \preceq \E[\beta(\mathbf{x}, w)^2 \mathbf{x} \mathbf{x}^T ]$.
Using the bias and variance decomposition, one can further analytically evaluate,
\begin{align*}
	\mathbf{V}(w) =  \left( \mathbb{E}[ \xi(\mathbf{x})^2 \mathbf{x} \mathbf{x}^T] + \Cov[\beta(\mathbf{x}, w) \mathbf{x} ]  \right)^{1/2}, \quad \mathbf{H}(w) =  \E\left[ c''(\mathbf{x}^T w) \mathbf{x} \mathbf{x}^T  \right].
\end{align*}
Therefore, the following matrix inequalities hold
\begin{align*}
	\mathbb{E}[ \xi(\mathbf{x})^2 \mathbf{x} \mathbf{x}^T] & \preceq \mathbf{V}(w)^2 \preceq \mathbb{E}[ (\xi(\mathbf{x})^2 + \beta(\mathbf{x}, w)^2) \mathbf{x} \mathbf{x}^T].
\end{align*}

Under the condition that there exists $C>1$ such that $$C^{-1/3} < \frac{c''(x^T v)}{\xi(x)^2 + \beta(x, w)^2}  \leq \frac{c''(x^T v)}{\xi(x)^2} < C^{1/3},$$ then we have
\begin{align*}
	&\mathbf{H}(v) = \E\left[ c''(\mathbf{x}^T v) \mathbf{x} \mathbf{x}^T  \right] = \E\left[ \frac{c''(\mathbf{x}^T v)}{\xi(\mathbf{x})^2}  \xi(\mathbf{x})^2 \mathbf{x} \mathbf{x}^T  \right] \prec C^{1/3} \mathbf{V}(w)^2, \\
	&\mathbf{H}(v) = \E\left[ c''(\mathbf{x}^T v) \mathbf{x} \mathbf{x}^T  \right] = \E\left[ \frac{c''(\mathbf{x}^T v)}{\xi(\mathbf{x})^2 + \beta(\mathbf{x}, w)^2}  \left( \xi(\mathbf{x})^2 + \beta(\mathbf{x}, w)^2\right) \mathbf{x} \mathbf{x}^T  \right] \succ C^{-1/3} \mathbf{V}(w)^2.
\end{align*}

Let's recall the following facts that if $A \prec B$, then $\lambda_{\max}(A) < \lambda_{\max}(B)$ because take $v$ to be the top unit eigenvector of $A$,
$$
\lambda_{\max}(A) = v^T A v < v^T B v \leq \lambda_{\max}(B).
$$
Similarly, we have $\lambda_{\min}(A) < \lambda_{\min}(B)$. Also, if $A \prec B$, then for any symmetric matrix $S$, $S A S \prec S B S$.

Now because $\mathbf{H}(v) \prec C^{1/3} \mathbf{V}(w)^2$, take $w, v$ that maximize the LHS of the following
\begin{align*}
	\lambda_{\max} \left( [\mathbf{V}(w)]^{-1/2} \mathbf{H}(v) [\mathbf{V}(w)]^{-1/2} \right) &< C^{1/3} \lambda_{\max} \left( [\mathbf{V}(w)]^{-1/2} \mathbf{V}(w)^2 [\mathbf{V}(w)]^{-1/2} \right) \\
	& \leq C^{1/3} \max_w \lambda_{\max}(\mathbf{V}(w)).
\end{align*}
Similarly, because $C^{-1/3} \mathbf{V}(w)^2 \prec \mathbf{H}(v)$,
\begin{align*}
	\lambda_{\min} \left( [\mathbf{V}(w)]^{-1/2} \mathbf{H}(v) [\mathbf{V}(w)]^{-1/2} \right) &> C^{-1/3} \lambda_{\min} \left( [\mathbf{V}(w)]^{-1/2} \mathbf{V}(w)^2 [\mathbf{V}(w)]^{-1/2} \right) \\
	&\geq C^{-1/3} \min_w \lambda_{\min}(\mathbf{V}(w))
\end{align*}

Recall the definition of $\kappa_{\rm MasGrad}$, we know
\begin{align*}
	\kappa_{\rm MasGrad} &= \frac{\max_{w, v} \lambda_{\max} \left( [\mathbf{V}(w)]^{-1/2} \mathbf{H}(v) [\mathbf{V}(w)]^{-1/2} \right)}{\min_{w, v} \lambda_{\min} \left( [\mathbf{V}(w)]^{-1/2} \mathbf{H}(v) [\mathbf{V}(w)]^{-1/2} \right) }, \\
	&\leq \frac{C^{1/3} \max_w \lambda_{\max}(\mathbf{V}(w)) }{C^{-1/3} \min_w \lambda_{\min}(\mathbf{V}(w))} \\
	&\leq C^{2/3} \sqrt{ \frac{\max_w \lambda_{\max}(\mathbf{V}(w)^2)}{\min_w \lambda_{\min}(\mathbf{V}(w)^2)} } \leq C \sqrt{\frac{\max_v \lambda_{\max}(\mathbf{H}(v))}{\min_v \lambda_{\min}(\mathbf{H}(v))}} =  C \sqrt{\kappa_{\rm GD}}
\end{align*}
where the last step also uses the fact that
\begin{align*}
C^{-1/3} \mathbf{V}(w)^2 \prec \mathbf{H}(v) \prec C^{1/3} \mathbf{V}(w)^2.
\end{align*}

\end{proof}

\begin{proof}[Proof of Theorem~\ref{thm:non-convex}]

Denote $C \triangleq \max_{\theta} \| \mathbf{V}(\theta) \| $.
	Let's start with the the mean value theorem on the line segment between $\xi_{t+1}$ and $\xi_{t}$,
	\begin{align*}
		& \E \left\{ L(\xi_{t+1}) | \xi_{t}\right\}\\
		&=  \E \left\{ L(\xi_t) +  \langle \mathbf{b}(\xi_t), \xi_{t+1} - \xi_{t} \rangle + \frac{1}{2} (\xi_{t+1} - \xi_{t})^T \mathbf{H}\left(\tilde{c} \xi_t + (1-\tilde{c}) \xi_{t+1} \right) (\xi_{t+1} - \xi_{t}) | \xi_{t} \right\} \\
		&\leq L(\xi_t) - \eta \langle \mathbf{b}(\xi_t), \mathbf{V}(\xi_t)^{-1} \mathbf{b}(\xi_t) \rangle + \E \left\{ \frac{\gamma}{2}  \| \eta \mathbf{V}(\xi_t)^{-1/2} \mathbf{b}(\xi_t)- \sqrt{2\beta^{-1} \eta} \mathbf{V}(\xi_t)^{1/2} \mathbf{g}_t \|^2 | \xi_t\right\} \\
		&= L(\xi_t) - \left(\eta -\frac{\eta^2 \gamma}{2} \right) \| \mathbf{V}(\xi_t)^{-1/2} \mathbf{b}(\xi_t) \|^2  + \beta^{-1} \eta \gamma \E \| \mathbf{V}(\xi_t)^{1/2} \mathbf{g}_t \|^2 \\
		& \leq L(\xi_t) - \left(\eta -\frac{\eta^2 \gamma}{2} \right) \| \mathbf{V}(\xi_t)^{-1/2} \mathbf{b}(\xi_t) \|^2 +  C^{1/2} \cdot p  \beta^{-1} \eta \gamma.
	\end{align*}
Therefore, summing over $t \in [T]$, we have
	\begin{align*}
		L(\xi_0) - \min L(\theta) + C^{1/2} \cdot p  \beta^{-1} \eta \gamma T &\geq \sum_{t=0}^{T-1} \left(\eta -\frac{\eta^2 \gamma}{2} \right) \mathbb{E} \| \mathbf{V}(\xi_t)^{-1/2} \mathbf{b}(\xi_t) \|^2,  \\
		\E \min_{t \leq T} \| \mathbf{V}(\xi_t)^{-1/2} \mathbf{b}(\xi_t) \|^2 &\leq \frac{L(\theta_0) - \min L(\theta)+ C^{1/2} \cdot p  \beta^{-1} \eta \gamma T}{T \left(\eta -\frac{\eta^2 \gamma}{2} \right) }.
	\end{align*}
	Therefore we the choice $\eta = \frac{1}{\gamma}$, we have
	$$
	\E \min_{t \leq T} \| \mathbf{V}(\xi_t)^{-1/2} \mathbf{b}(\xi_t) \|^2 \leq \frac{2\gamma(L(\theta_0) - \min L(\theta))}{T} + C^{1/2} \cdot \frac{p}{n}.
	$$
	To obtain an $\epsilon$-stationary point in the sense that $\E \min_{t \leq T} \| \mathbf{b}(w_t) \| \leq \epsilon$, we need to
	\begin{align*}
		\frac{1}{C^{1/2}} \E \min_{t \leq T} \| \mathbf{b}(\xi_t) \|^2 \leq \E \min_{t \leq T} \| \mathbf{V}(\xi_t)^{-1/2} \mathbf{b}(\xi_t) \|^2 \leq \frac{2\gamma(L(\theta_0) - \min L(\theta))}{T} + C^{1/2} \cdot \frac{p}{n} \leq \frac{\epsilon^2}{C^{1/2}}.
	\end{align*}
	Hence, one can choose
	\begin{align*}
		T &=  \frac{ C^{1/2}\left[ 2\gamma(L(w_0) - \min L(w))  + C^{1/2} \cdot p\delta^2 \right]}{\epsilon^2}, \\
		n &= \frac{T}{\delta^2},
	\end{align*}
	to ensure
	$$
	\left(\E \min_{t \leq T} \| \mathbf{b}(w_t) \| \right)^2 \leq \E \min_{t \leq T} \| \mathbf{b}(w_t) \|^2 \leq \epsilon^2.$$
	And due to Thm.~\ref{thm:couple.p}, we know at the same time
	\begin{align*}
		D_{\rm TV}\left( \mu(\theta_t, t \in [T]), \mu( \xi_t, t \in [T] ) \right) \leq O(\sqrt{\frac{T}{n}}) = C \sqrt{\frac{T}{n}} = O_{\delta}(\delta).
	\end{align*}
	The total number of samples needed is $N = nT = O(\epsilon^{-4} \delta^{-2})$. Again, it is easy to see that the same argument holds with $\theta_t$ as the conditional second moment of $\xi_{t+1}$ matches that of $\theta_{t+1}$. 
\end{proof}

\subsection{Proof of Propositions}

This section dedicates to the proof of propositions.  

\begin{proof}[Proof of Proposition~\ref{thm:prox}]
Now let's analyze Moment Adjusted Proximal Gradient Descent in Eq.~\eqref{eq:mad-proximal}. For any $w$, and any $z \in \partial h(w_{t+1})$ in sub-gradient, following holds,
\begin{align}
	L(w_{t+1}) &= g(w_{t+1}) + h(w_{t+1}) \nonumber \\
	&\leq \left[ g(w_t) + \langle \nabla g(w_t), w_{t+1} - w_{t} \rangle + \frac{1}{2} \| w_{t+1} - w_{t} \|^2_{\mathbf{H}(\tilde{c})} \right] + \left[ h(w) + \langle z, w_{t+1} - w \rangle \right] \nonumber \\
	&\leq \left[ g(w) + \langle \nabla g(w_{t}), w_{t} - w \rangle - \frac{1}{2} \| w_t - w \|^2_{\mathbf{H}(c')} \right] \nonumber\\
	& \quad + \left[ \langle \nabla g(w_t), w_{t+1} - w_{t} \rangle + \frac{1}{2} \| w_{t+1} - w_{t} \|^2_{\mathbf{H}(\tilde{c})}  \right] + \left[ h(w) +  \langle z, w_{t+1} - w \rangle \right] \nonumber \\
	& = \left[ g(w) + \langle \nabla g(w_{t}), w_{t+1} - w \rangle + \frac{1}{2} \| w_{t+1} - w_{t} \|^2_{\mathbf{H}(\tilde{c})} - \frac{1}{2} \| w_t - w \|^2_{\mathbf{H}(c')} \right]   \nonumber \\
	& \quad + \left[ h(w) +  \langle z, w_{t+1} - w \rangle \right] \nonumber \\
	& = L(w) + \langle \nabla g(w_{t}) + z, w_{t+1} - w \rangle + \frac{1}{2} \| w_{t+1} - w_{t} \|^2_{\mathbf{H}(\tilde{c})} - \frac{1}{2} \| w_t - w\|^2_{\mathbf{H}(c')} \label{eq:proximal}.
\end{align}
Due to the optimality of the proximal updates in Eq.~\eqref{eq:mad-proximal}, we know
$$
0 \in \frac{1}{\eta} \mathbf{V}(w_{t+1} - w_t +\eta \mathbf{V}^{-1} \nabla g(w_t)) + \partial h(w_{t+1}),
$$
there exists $z \in \partial h(w_{t+1})$ such that
$$
\nabla g(w_t) + z = \frac{1}{\eta}  \mathbf{V} (w_{t} - w_{t+1}).
$$
Continue with Eq.~\eqref{eq:proximal}, and recall the definition of $\alpha, \gamma$,
one has
\begin{align}
	L(w_{t+1}) & \leq L(w) + \langle \frac{1}{\eta} \mathbf{V} (w_{t} - w_{t+1}), w_{t+1} - w \rangle + \frac{1}{2} \| w_{t+1} - w_{t} \|^2_{\mathbf{H}(\tilde{c})} - \frac{1}{2} \| w_t - w \|^2_{\mathbf{H}(c')} \nonumber\\
	& \leq L(w) + \langle \frac{1}{\eta} \mathbf{V} (w_{t} - w_{t+1}), w_{t+1} - w \rangle + \frac{\gamma}{2} \| w_{t+1} - w_{t} \|^2_{\mathbf{V}} - \frac{\alpha}{2} \| w_t - w \|^2_{\mathbf{V}}. \label{eqn:w_t}
\end{align}
Plug in $w = w_t$, we know if $\eta = \frac{1}{\gamma}$
$$
L(w_{t+1}) \leq L(w_t) - (\frac{1}{\eta} - \frac{\gamma}{2})\| w_{t+1} - w_{t} \|_{\mathbf{V}}^2 = L(w_t) - \frac{\gamma}{2} \| w_{t+1} - w_{t} \|_{\mathbf{V}}^2 \leq L(w_t).
$$
Plug in $w_* = \argmin L(w)$, one has
\begin{align}
	L(w_{t+1}) - L(w_*) &\leq \gamma \langle   w_{t}  - w_{t+1}, w_{t+1} - w_* \rangle_{\mathbf{V}} + \frac{\gamma}{2} \| w_{t+1} - w_{t} \|_{\mathbf{V}}^2 -  \frac{\alpha}{2} \| w_t - w_* \|^2_{\mathbf{V}} \nonumber\\
	& = - \frac{\gamma}{2} \| w_{t+1} - w_* \|_{\mathbf{V}}^2 + \frac{\gamma}{2} \| w_{t+1} - w_* + w_t - w_{t+1} \|_{\mathbf{V}}^2 -  \frac{\alpha}{2} \| w_t - w_* \|^2_{\mathbf{V}} \nonumber\\
	& = \frac{\gamma-\alpha}{2}  \| w_{t} - w_* \|_{\mathbf{V}}^2  -  \frac{\gamma}{2}\| w_{t+1} - w_* \|_{\mathbf{V}}^2 \nonumber \\
	\frac{2}{\gamma - \alpha} [L(w_{t+1}) - L(w_*)] & \leq \| w_{t} - w_* \|_{\mathbf{V}}^2 - \frac{\gamma}{\gamma - \alpha} \| w_{t+1} - w_* \|_{\mathbf{V}}^2 \label{eqn:w_star}
\end{align}
where the second equality follows due to opening the square
$$
-  \| w_{t+1} - w_* \|_{\mathbf{V}}^2 +  \| w_{t+1} - w_* + w_t - w_{t+1} \|_{\mathbf{V}}^2 = 2 \langle   w_{t}  - w_{t+1}, w_{t+1} - w_* \rangle_{\mathbf{V}} +  \| w_{t+1} - w_{t} \|_{\mathbf{V}}^2.
$$

Aggregating the above equations for $t = 0,\ldots T-1$ in a weighted way to form the telescoping sum, one has
\begin{align*}
	&\frac{2}{\alpha}\left[\left( \frac{\gamma}{\gamma - \alpha} \right)^T - 1 \right] (L(w_T) - L(w_*)) \\
  & \leq \frac{2}{\gamma - \alpha} \sum_{t=0}^{T-1} \left( \frac{\gamma}{\gamma - \alpha} \right)^t (L(w_{t+1}) - L(w_*))  \quad\text{by Eqn.~\eqref{eqn:w_t}}\\
  & \leq \sum_{t=0}^{T-1} \left( \frac{\gamma}{\gamma - \alpha} \right)^t  \left\{ \| w_{t} - w_* \|_{\mathbf{V}}^2 - \frac{\gamma}{\gamma - \alpha} \| w_{t+1} - w_* \|_{\mathbf{V}}^2 \right\} \quad\text{by Eqn.~\eqref{eqn:w_star}} \\
  & = \| w_0 - w_* \|^2_{\mathbf{V}} - \left( \frac{\gamma}{\gamma - \alpha} \right)^T \| w_T - w_* \|^2_{\mathbf{V}} \leq   \| w_0 - w_* \|^2_{
	\mathbf{V}}.
\end{align*}
Therefore we know if
\begin{align*}
	T \geq \frac{\gamma}{\alpha} \log \left( \frac{\alpha}{2\epsilon} \| w_0 - w_* \|^2_{
\mathbf{V}} + 1 \right),
\end{align*}
we have
$$
L(w_T) - L(w_*) \leq \epsilon.
$$
\end{proof}

\begin{proof}[Proof of Proposition~\ref{lem:self-norm-process}]
	Using standard definition of sample covariance matrix, and simple matrix algebra, we know
	\begin{align}
		(n-1)\widehat{\Sig} &=  \left( V_n^2 - n (\bar{\x} - \mu) (\bar{\x} - \mu)^T\right) \nonumber \\
		&=  V_n \left( I - n [V_n^{-1} (\bar{\x} - \mu)] \otimes [V_n^{-1} (\bar{\x} - \mu)]  \right)V_n \nonumber\\
		\widehat{\Sig}^{-1} &= (n-1) V_n^{-1} \left( I - \frac{1}{n} V_n^{-1} S_n S_n^T V_n^{-1} \right)^{-1} V_n^{-1} . \label{eq:root-verify}
	\end{align}
	Plug in the Woodbury identity (Lemma~\ref{lem:woodbury}) with the choice of $A = I, C = 1$ and $U = -V^T = \frac{1}{\sqrt{n}} V_n^{-1} S_n$, one has
	\begin{align*}
		& \left( I - \frac{1}{n} V_n^{-1} S_n S_n^T V_n^{-1} \right)^{-1} = I + \frac{ \frac{1}{n} V_n^{-1} S_n S_n^T V_n^{-1} }{1 - \frac{1}{n} S_n^T V_n^{-2} S_n}.
	\end{align*}
	Apply Lemma~\ref{lem:matrix-root}, the matrix root identity we derived, with $v = \frac{1}{\sqrt{n}} V_n^{-1} S_n$ and $c = 1/(1 -  \frac{1}{n} S_n^T V_n^{-2} S_n)$, we know
	\begin{align*}
		& I + \frac{ \frac{1}{n} V_n^{-1} S_n S_n^T V_n^{-1} }{1 - \frac{1}{n} S_n^T V_n^{-2} S_n} \\
		&= \left\{ I + \left(\frac{1}{\sqrt{1 - \frac{1}{n} S_n^T V_n^{-2} S_n}} - 1\right) \frac{1}{\frac{1}{n} S_n^T V_n^{-2} S_n} \frac{1}{n} V_n^{-1} S_n S_n^T V_n^{-1}  \right\}^2 \\
		&= \left\{ I + \left(\frac{1}{(1+\sqrt{1 - \frac{1}{n} S_n^T V_n^{-2} S_n})\sqrt{1 - \frac{1}{n} S_n^T V_n^{-2} S_n}}\right) \frac{1}{n} V_n^{-1} S_n S_n^T V_n^{-1}  \right\}^2. 
	\end{align*}
	Define
	\begin{align*}
		\widehat{\V}^{-1}  :=  \sqrt{n-1}  \left( I + \left(\frac{1}{(1+\sqrt{1 - \frac{1}{n} S_n^T V_n^{-2} S_n})\sqrt{1 - \frac{1}{n} S_n^T V_n^{-2} S_n}}\right) \frac{1}{n} V_n^{-1} S_n S_n^T V_n^{-1}  \right) V_n^{-1},
	\end{align*}
	then one can verify $\widehat{\V}$ is indeed root of $\widehat{\Sig}$ in the sense that $ \widehat{\V}\widehat{\V}^T = \widehat{\Sig}$, recalling \eqref{eq:root-verify}.
	Therefore, we know
	\begin{align*}
		&\sqrt{n} \widehat{\V}^{-1} (\bar{\x} - \mu) \\
		&= \sqrt{\frac{n-1}{n}} \left( I + \left(\frac{1}{(1+\sqrt{1 - \frac{1}{n} S_n^T V_n^{-2} S_n})\sqrt{1 - \frac{1}{n} S_n^T V_n^{-2} S_n}}\right) \frac{1}{n} V_n^{-1} S_n S_n^T V_n^{-1}  \right)  V_n^{-1} S_n \\
		&= V_n^{-1} S_n \cdot \sqrt{\frac{n-1}{n}} \left( 1 +  \frac{\frac{1}{n} S_n^T V_n^{-2} S_n}{(1+\sqrt{1 - \frac{1}{n} S_n^T V_n^{-2} S_n})\sqrt{1 - \frac{1}{n} S_n^T V_n^{-2} S_n}} \right) \\
		& = M_n \sqrt{\frac{n-1}{n - \| M_n \|^2}}.
	\end{align*}
\end{proof}

\begin{proof}[Proof of Proposition~\ref{lem:rank-one-mat-root-inv}]
	The proof relies on induction. Recall the definition of $\alpha_i$
	\begin{align}
		\label{eq:alpha_i}
		\alpha_i \triangleq (1 + \sqrt{1+v_{i+1}^T H_i^T H_i v_{i+1}})\sqrt{1+v_{i+1}^T H_i^T H_i v_{i+1}} \in \mathbb{R}.
	\end{align}
	Assume that
	\begin{align*}
		\left[ H_{i}^T H_{i} \right]^{-1} = I_d + \sum_{s=1}^{i} v_s v_s^T \triangleq \Sigma_i,
	\end{align*}
	we are going to show, for $i+1$, the induction holds. The following equations hold,
	\begin{align*}
		H_{i+1}^T H_{i+1} &= H_{i}^T H_{i} - \frac{2}{\alpha_i} H_{i}^T H_i v_{i+1} v_{i+1}^T H_i^T H_i + \frac{1}{\alpha_i^2} H_{i}^T H_i v_{i+1} v_{i+1}^T H_{i}^T  H_i v_{i+1} v_{i+1}^T H_i^T H_i \\
		& = H_{i}^T H_{i} - \frac{2}{\alpha_i} H_{i}^T H_i v_{i+1} v_{i+1}^T H_i^T H_i + \frac{v_{i+1}^T H_{i}^T  H_i v_{i+1}}{\alpha_i^2} H_{i}^T H_i v_{i+1}  v_{i+1}^T H_i^T H_i \\
		& = \Sigma_i^{-1} - \frac{2}{\alpha_i} \Sigma_i^{-1} v_{i+1} v_{i+1}^T \Sigma_i^{-1} + \frac{v_{i+1}^T\Sigma_i^{-1}v_{i+1}}{\alpha_i^2}  \Sigma_i^{-1} v_{i+1} v_{i+1}^T \Sigma_i^{-1} \\
		&= \Sigma_i^{-1} - \left( \frac{2}{\alpha_i} - \frac{v_{i+1}^T\Sigma_i^{-1}v_{i+1}}{\alpha_i^2} \right) \Sigma_i^{-1} v_{i+1} v_{i+1}^T \Sigma_i^{-1} \\
		&= \Sigma_i^{-1} - \frac{1}{1+v_{i+1}^T \Sigma_i^{-1}  v_{i+1}} \Sigma_i^{-1} v_{i+1} v_{i+1}^T \Sigma_i^{-1} \\
		&= (\Sigma_i + v_{i+1} v_{i+1}^T)^{-1},
	\end{align*}
	where the second last line uses \eqref{eq:alpha_i} and the following fact, 
	\begin{align*}
		\frac{2\alpha_i - v_{i+1}^T\Sigma_i^{-1}v_{i+1}}{\alpha_i^2} &= \frac{(1 + \sqrt{1+v_{i+1}^T H_i^T H_i v_{i+1}})^2}{\alpha_i^2} \\
	 	 & = \frac{1}{1+v_{i+1}^T \Sigma_i^{-1}  v_{i+1}},
	\end{align*}
	and the last line uses the Sherman-Morrison-Woodbury matrix identity, see Lemma~\ref{lem:woodbury}.
\end{proof}

\subsection{Proof of Lemmas}

We collect the supporting technical lemmas in this section.

\begin{lem}[Convergence: noiseless]
	\label{lem:conv.glm}
	Let $L(w): \mathbb{R}^p \rightarrow \mathbb{R}$ be a smooth convex function. Recall $\mathbf{b}(w) = \nabla L(w)$, and denote $\mathbf{H}(w)$ as the Hessian matrix of $L$. $\mathbf{V}(w) \in \mathbb{R}^{p \times p}$ is a positive definite matrix.
	Assume that
	\begin{align*}
		\alpha &\triangleq \min_{v, w}~ \lambda_{\min} \left( \mathbf{V}(w)^{-1/2} \mathbf{H}(v) \mathbf{V}(w)^{-1/2}  \right) >0, \\
		\gamma &\triangleq \max_{v, w}~ \lambda_{\max} \left( \mathbf{V}(w)^{-1/2} \mathbf{H}(v) \mathbf{V}(w)^{-1/2}  \right) >0.
	\end{align*}

	The deterministic updates $w_{t+1} = w_t - \eta \mathbf{V}(w_t)^{-1} \mathbf{b}(w_t),$ with step-size $\eta = 1/\gamma$, satisfies
	\begin{align*}
		L(w_{t+1}) - \min_{w} L(w) \leq \left( 1 - \frac{\alpha}{\gamma} \right) \left( L(w_t) - \min_{w} L(w) \right).
	\end{align*}
\end{lem}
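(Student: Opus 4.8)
The plan is to combine a one-step descent inequality with a Polyak--\L ojasiewicz (PL) type inequality, both expressed in the $\mathbf{V}(w_t)$-metric, paralleling the standard contraction argument for preconditioned gradient descent. First I would Taylor-expand $L$ at $w_t$ with the exact (mean-value) remainder: for some $\tilde c \in [0,1]$ and $\bar w = \tilde c\, w_t + (1-\tilde c) w_{t+1}$,
\[
L(w_{t+1}) = L(w_t) + \langle \mathbf{b}(w_t), w_{t+1} - w_t\rangle + \tfrac12 (w_{t+1}-w_t)^T \mathbf{H}(\bar w)(w_{t+1}-w_t).
\]
Substituting the update $w_{t+1}-w_t = -\eta \mathbf{V}(w_t)^{-1}\mathbf{b}(w_t)$ and abbreviating $u_t := \mathbf{V}(w_t)^{-1/2}\mathbf{b}(w_t)$, the linear term becomes $-\eta\|u_t\|^2$, while the quadratic term equals $\tfrac{\eta^2}{2}\, u_t^T [\mathbf{V}(w_t)^{-1/2}\mathbf{H}(\bar w)\mathbf{V}(w_t)^{-1/2}] u_t \le \tfrac{\eta^2\gamma}{2}\|u_t\|^2$ by the definition of $\gamma$. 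With $\eta = 1/\gamma$ these combine to the descent lemma
\[
L(w_{t+1}) \le L(w_t) - \frac{1}{2\gamma}\,\|\mathbf{V}(w_t)^{-1/2}\mathbf{b}(w_t)\|^2.
\]

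Next I would establish the matching lower bound. The definition of $\alpha$ gives the matrix inequality $\mathbf{H}(v) \succeq \alpha\, \mathbf{V}(w_t)$ for every $v$ (it suffices to fix the metric point to be $w_t$ in the minimization). Taylor-expanding once more, this time to minorize $L$ globally, yields
\[
L(w) \ge L(w_t) + \langle \mathbf{b}(w_t), w-w_t\rangle + \frac{\alpha}{2}\,\|w - w_t\|_{\mathbf{V}(w_t)}^2 \quad \text{for all } w.
\]
Minimizing the right-hand side over $w$ (its minimizer is $w_t - \tfrac1\alpha \mathbf{V}(w_t)^{-1}\mathbf{b}(w_t)$, and completing the square gives minimal value $L(w_t) - \tfrac{1}{2\alpha}\|u_t\|^2$) and using that the minorant lies below $L$ pointwise, so that $\min_w L(w)$ dominates the minimum of the minorant, produces the PL inequality
\[
\|\mathbf{V}(w_t)^{-1/2}\mathbf{b}(w_t)\|^2 \ge 2\alpha\,\big(L(w_t) - \min_w L(w)\big).
\]

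Finally I would chain the two displays: inserting the PL inequality into the descent lemma gives $L(w_{t+1}) \le L(w_t) - \tfrac{\alpha}{\gamma}\big(L(w_t) - \min_w L(w)\big)$, and subtracting $\min_w L(w)$ from both sides yields the claimed contraction $L(w_{t+1}) - \min_w L(w) \le (1-\alpha/\gamma)\big(L(w_t)-\min_w L(w)\big)$.

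The main obstacle is getting the PL step right: it hinges on reading the definitions of $\alpha,\gamma$ as uniform eigenvalue bounds over \emph{both} the Hessian-evaluation point $v$ and the metric point $w$, i.e.\ $\alpha\, \mathbf{V}(w) \preceq \mathbf{H}(v) \preceq \gamma\, \mathbf{V}(w)$, so that the single fixed metric $\mathbf{V}(w_t)$ can be used simultaneously for the upper bound (smoothness, in the descent step) and the lower bound (strong convexity, in the PL step). The remaining algebra---completing the square and checking that the choice $\eta=1/\gamma$ produces exactly the coefficient $1/(2\gamma)$---is routine.
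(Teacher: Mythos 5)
Your proposal is correct and follows essentially the same route as the paper's proof: a mean-value Taylor expansion combined with the definition of $\gamma$ gives the descent inequality $L(w_{t+1}) \le L(w_t) - \tfrac{1}{2\gamma}\|\mathbf{V}(w_t)^{-1/2}\mathbf{b}(w_t)\|^2$, and a second mean-value expansion with the definition of $\alpha$ (fixing the metric point at $w_t$) gives the quadratic minorant whose minimization yields the PL-type bound $\|\mathbf{V}(w_t)^{-1/2}\mathbf{b}(w_t)\|^2 \ge 2\alpha\,(L(w_t)-\min_w L(w))$. Chaining the two is exactly how the paper concludes, so there is nothing to add.
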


\begin{remark}
	\rm
	If we define the condition number of MasGrad as
	\begin{align}
		\label{eq:cond.num}
		\kappa_{\rm MasGrad} = \frac{\max_{w, v} \lambda_{\max} \left( [\mathbf{V}(w)]^{-1/2} \mathbf{H}(v) [\mathbf{V}(w)]^{-1/2} \right)}{\min_{w, v} \lambda_{\min} \left( [\mathbf{V}(w)]^{-1/2} \mathbf{H}(v) [\mathbf{V}(w)]^{-1/2} \right) },\quad \kappa_{\rm GD} = \frac{\max_{v} \lambda_{\max} \left(  \mathbf{H}(v)  \right)}{\min_{v} \lambda_{\min} \left( \mathbf{H}(v) \right)},
	\end{align}
	compared to the condition number in gradient descent.
	To obtain a solution such that $L(w_t) - \min_{w} L(w) \leq \epsilon$, one need the number of iterations being
	$$
	t = \kappa_{\rm MasGrad} \cdot \log \frac{L(w_0) - \min_w L(w)}{\epsilon}.
	$$
\end{remark}

\begin{proof}[Proof of Lemma~\ref{lem:conv.glm}]
	First, let us focus on the line segment $\{ c w_t + (1-c) w_{t+1}, 0\leq c \leq 1\}$, by the mean value theorem, we know there exist a $\tilde{c} \in [0,1]$ such that the following holds
	\begin{align*}
		L(w_{t+1}) &=  L(w_t) + \langle \mathbf{b}(w_t), w_{t+1} - w_{t} \rangle + \frac{1}{2} (w_{t+1} - w_{t})^T \mathbf{H}\left(\tilde{c} w_t + (1-\tilde{c}) w_{t+1} \right) (w_{t+1} - w_{t}).
	\end{align*}
	Note $w_{t+1} = w_t - \eta \mathbf{V}(w_t)^{-1} \mathbf{b}(w_t)$, let's abbreviate $\mathbf{H}_{\tilde{c}}$ for the Hessian matrix at the middle point,
	\begin{align*}
		L(w_{t+1})&= L(w_t) - \eta \langle \mathbf{b}(w_t), \mathbf{V}(w_t)^{-1} \mathbf{b}(w_t) \rangle \\
		& \quad + \frac{\eta^2}{2} \left[ \mathbf{V}(w_t)^{-1/2} \mathbf{b}(w_t)\right]^T \mathbf{V}(w_t)^{-1/2} \mathbf{H}_{\tilde{c}} \mathbf{V}(w_t)^{-1/2} \left[ \mathbf{V}(w_t)^{-1/2} \mathbf{b}(w_t) \right], \\
		& \leq L(w_t) - \eta \| \mathbf{V}(w_t)^{-1/2} \mathbf{b}(w_t) \|^2 + \frac{\eta^2 \gamma}{2} \| \mathbf{V}(w_t)^{-1/2} \mathbf{b}(w_t) \|^2,  \\
		& = L(w_t) - \frac{1}{2\gamma} \| \mathbf{V}(w_t)^{-1/2} \mathbf{b}(w_t) \|^2.
	\end{align*}
	if we choose $\eta = \frac{1}{\gamma}$.

	For any $w$, on line segment $c w + (1-c) w_{t}$, we can use mean value theorem again,
	\begin{align*}
		&L(w) - L(w_t) \\
		&= \langle \mathbf{V}(w_t)^{-1/2} \mathbf{b}(w_t), \mathbf{V}(w_t)^{1/2} (w - w_{t}) \rangle + \frac{1}{2} (w - w_{t})^T \mathbf{H}\left(\tilde{c} w + (1-\tilde{c}) w_{t} \right) (w - w_{t}) \\
		& = \langle \mathbf{V}(w_t)^{-1/2} \mathbf{b}(w_t), \mathbf{V}(w_t)^{1/2} (w - w_{t}) \rangle \\
		& \quad + \frac{1}{2} \left[ \mathbf{V}(w_t)^{1/2}(w - w_{t}) \right]^T \mathbf{V}(w_t)^{-1/2}  \mathbf{H}_{\tilde{c}}  \mathbf{V}(w_t)^{-1/2} \left[ \mathbf{V}(w_t)^{1/2}(w - w_{t}) \right] \\
		& \geq \langle \mathbf{V}(w_t)^{-1/2} \mathbf{b}(w_t), \mathbf{V}(w_t)^{1/2} (w - w_{t}) \rangle  + \frac{\alpha}{2} \| \mathbf{V}(w_t)^{1/2}(w - w_{t}) \|^2 \\
		& \geq - \frac{1}{2\alpha} \| \mathbf{V}(w_t)^{-1/2} \mathbf{b}(w_t) \|^2.
	\end{align*}
	Therefore, choose $w$ that attains the minimum of $L$, combine the above two bounds, we know
	\begin{align*}
		L(w_{t+1}) - L(w_t) &\leq \frac{\alpha}{\gamma} (L(w) - L(w_t)), \\
		L(w_{t+1}) - L(w) &\leq (1 - \frac{\alpha}{\gamma}) (L(w_{t}) - L(w)).
	\end{align*}
\end{proof}

\begin{lem}
	\label{lem:consistency}
	Assume $p \precsim n/\log n$ and $\|  \mathbf{V}(\theta)^{-1} \mathbf{b}(\theta)  \| \leq C$ for some constant $C>0$. 
	Then there exists $\widehat{\V}(\theta)$, such that  
	\begin{align*}
		\widehat{\V}(\theta)^{-1} \widehat{\b}(\theta) - \mathbf{V}(\theta)^{-1} \mathbf{b}(\theta) = \overbrace{\widehat{\V}(\theta)^{-1} \left( \widehat{\b}(\theta) - \mathbf{b}(\theta) \right)}^{\text{self-normalized processes}} + E,
	\end{align*}
	where the $\ell_2$ norm of $E$ is upper bounded with probability at least $1- 2n^{-c}$,
	\begin{align*}
		\| E \| \leq  2C \sqrt{\frac{p}{n} \left(\log n + c^{-1} \log p\right) },
	\end{align*}
	for some constant $c>0$.
\end{lem}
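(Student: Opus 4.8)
The plan is to observe that, once the self-normalized process is subtracted off, the remainder $E$ has an explicit closed form and can be controlled entirely through the operator-norm fluctuation of a \emph{standardized} sample covariance matrix. Indeed, subtracting $\widehat{\V}(\theta)^{-1}(\widehat{\b}(\theta) - \b(\theta))$ from the left-hand side gives
\[
E = \left(\widehat{\V}(\theta)^{-1} - \V(\theta)^{-1}\right)\b(\theta),
\]
so the whole task reduces to bounding $\|(\widehat{\V}^{-1} - \V^{-1})\b\|$ at the rate $\sqrt{p/n}$. The key to getting clean constants is to work in standardized coordinates rather than bounding $\|\widehat{\V} - \V\|$ directly, which would introduce stray $\lambda_{\min}(\Sig)$ factors.

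First I would standardize. Recall $X_i(\theta) = \V(\theta)^{-1}[\nabla_\theta \ell(\theta, z_i) - \b(\theta)]$ from \eqref{eq:asmp}, which are i.i.d.\ with $\E X_i = 0$ and $\Cov(X_i) = I_p$; let $\widehat{\Sig}_X$ denote their sample covariance. Since $\V(\theta)$ is the symmetric PSD root of the covariance, a one-line computation ($\nabla_\theta\ell(\theta,z_i) - \widehat{\b} = \V(X_i - \bar X)$) shows $\widehat{\Sig}(\theta) = \V(\theta)\,\widehat{\Sig}_X\,\V(\theta)$. Consequently the choice $\widehat{\V}(\theta) = \V(\theta)\,\widehat{\Sig}_X^{1/2}$ satisfies $\widehat{\V}\widehat{\V}^T = \widehat{\Sig}$, so it is a legitimate root, and this is precisely the $\widehat{\V}(\theta)$ whose existence is asserted. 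With this choice the self-normalized term equals $\widehat{\Sig}_X^{-1/2}\bar X$, and the remainder collapses to
\[
E = \left(\widehat{\Sig}_X^{-1/2} - I_p\right)\left(\V(\theta)^{-1}\b(\theta)\right),
\]
whence by the hypothesis $\|\V^{-1}\b\| \le C$ we obtain $\|E\| \le C\,\|\widehat{\Sig}_X^{-1/2} - I_p\|$.

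Next I would linearize the matrix square root and invoke concentration. On the event $\{\|\widehat{\Sig}_X - I_p\| \le 1/2\}$, all eigenvalues of $\widehat{\Sig}_X$ lie in $[1/2, 3/2]$; since $x \mapsto x^{-1/2}$ is Lipschitz on that interval with constant $\le \sqrt{2} \le 2$, simultaneous diagonalization yields $\|\widehat{\Sig}_X^{-1/2} - I_p\| \le 2\,\|\widehat{\Sig}_X - I_p\|$. It then remains to establish the sample-covariance concentration
\[
\|\widehat{\Sig}_X - I_p\| \le \sqrt{\tfrac{p}{n}\left(\log n + c^{-1}\log p\right)}
\]
with probability at least $1 - 2n^{-c}$; combined with $p \precsim n/\log n$ (which forces the right-hand side below $1/2$, so the linearization event holds), this produces $\|E\| \le 2C\sqrt{\tfrac{p}{n}(\log n + c^{-1}\log p)}$ on that event.

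The main obstacle is exactly this last operator-norm bound at rate $\sqrt{p/n}$ with the stated polynomial failure probability. Because Assumption (A.2) only supplies $(4+\delta)$ moments rather than sub-Gaussian tails, a direct matrix Bernstein inequality does not apply; instead I would pass to an $\epsilon$-net of the unit sphere (cardinality $e^{O(p)}$), control each quadratic form $v^T(\widehat{\Sig}_X - I_p)v$ by a scalar truncation-and-Bernstein argument, and union bound over the net --- the net size contributes the factor $p$, while tuning the truncation level against the moment bound and the target probability $n^{-c}$ produces the $\log n + c^{-1}\log p$ factor. Controlling the mean correction $\bar X \bar X^T$ inside $\widehat{\Sig}_X$ (which is lower order, of size $\|\bar X\|^2 = O_{\bf p}(p/n)$) and threading the Lipschitz constant so that it lands exactly in the prefactor $2C$ are the remaining bookkeeping steps.
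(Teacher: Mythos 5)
Your proposal is correct and follows essentially the same route as the paper: the same decomposition $E = \left(\widehat{\V}(\theta)^{-1} - \V(\theta)^{-1}\right)\b(\theta)$, the same choice of root $\widehat{\V}(\theta) = \V(\theta)\,\widehat{\Sig}_X^{1/2}$ (which the paper writes as $\V(\theta)\,U\Lambda^{1/2}U^T$ with $U\Lambda U^T = \V(\theta)^{-1}\widehat{\Sig}(\theta)\V(\theta)^{-1}$), and the same reduction of $\|E\|$ to the operator-norm deviation of the standardized sample covariance from $I_p$. The only difference is the final concentration step, where the paper simply invokes Theorem~41 of \cite{vershynin2010introduction} for isotropic heavy-tailed rows rather than re-deriving the bound via an $\epsilon$-net as you sketch.
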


\begin{proof}[Proof of Lemma~\ref{lem:consistency}]
	Start with standard decomposition
	\begin{align}
		\nonumber \widehat{\V}(\theta)^{-1} \widehat{\b}(\theta) - \mathbf{V}(\theta)^{-1} \mathbf{b}(\theta) &= \widehat{\V}(\theta)^{-1} \left( \widehat{\b}(\theta) - \mathbf{b}(\theta) \right) + \overbrace{\left( \widehat{\V}(\theta)^{-1}\V(\theta) - I  \right) \mathbf{V}(\theta)^{-1} \mathbf{b}(\theta)}^{\text{term $E$}}.
	\end{align}
	To bound the rate for term $E$, denote the singular value decomposition 
	\begin{align*}
		\mathbf{V}(\theta)^{-1} \widehat{\Sig}(\theta) \mathbf{V}(\theta)^{-1} = U \Lambda U^T
	\end{align*}
	Due to Theorem~41 in \cite{vershynin2010introduction} (random matrix with independent rows, isotropic, heavy-tail), one knows that with probability at least $1 - 2p \exp(-ct^2)$
	\begin{align*}
		(1 - t \sqrt{p/n})^2 I \precsim \Lambda \precsim (1 + t \sqrt{p/n})^2 I. 
	\end{align*}
	Let's define
	\begin{align*}
		\widehat{\V}(\theta) = \V(\theta) U \Lambda^{1/2} U^T
	\end{align*}
	it then is easy to verify $\widehat{\V}(\theta) \widehat{\V}(\theta)^T = \widehat{\Sig}(\theta)$ holds, meaning $\widehat{\V}(\theta)$ is a valid matrix root for $\widehat{\Sig}(\theta)$.
	In this case, one knows that term $E$ can be upper bounded
	\begin{align*}
		\left\| \left( \widehat{\V}(\theta)^{-1}\V(\theta) - I  \right) \mathbf{V}(\theta)^{-1} \mathbf{b}(\theta) \right\| &\leq \left\| \widehat{\V}(\theta)^{-1}\V(\theta) - I \right\|_{\rm op} \|\mathbf{V}(\theta)^{-1} \mathbf{b}(\theta) \| \\
		\text{by $UU^T= I$ and definition of $\widehat{\V}(\theta)$} \quad\quad & = \| U \Lambda^{-1/2} U^T - I \|_{\rm op} \|\mathbf{V}(\theta)^{-1} \mathbf{b}(\theta) \| \\
		& = \| \Lambda^{1/2} - I \|_{\rm op} \| \Lambda^{-1/2}\|_{\rm op} \|\mathbf{V}(\theta)^{-1} \mathbf{b}(\theta) \| \\
		& \leq \frac{C t \sqrt{p/n}}{1 - t \sqrt{p/n}},
	\end{align*}	
	with probability at least $1 - p \exp(-ct^2)$.
	Choose $t = \frac{\log p + c \log n}{c}$, we know under the condition 
	\begin{align*}
		\sqrt{\frac{p}{n} \left(\log n + c^{-1} \log p\right) } \leq \frac{1}{2},
	\end{align*}
	the following upper bound holds with probability $1- 2p \exp(-ct^2) = 1- 2n^{-c}$,
	\begin{align*}
		\| E \| \leq 2C \sqrt{\frac{p}{n} \left(\log n + c^{-1} \log p\right) } .
	\end{align*}
\end{proof}

\begin{lem}[Sherman-Morrison-Woodbury identity]
	\label{lem:woodbury}
	For matrices $A, U, C, V$ with matrices of the correct sizes, then 
	\begin{align*}
		(A + UCV)^{-1} = A^{-1} - A^{-1} U(C^{-1} + VA^{-1} U)^{-1} V A^{-1}.
	\end{align*}
\end{lem}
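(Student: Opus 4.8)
The plan is to establish the identity by direct verification: rather than derive the inverse from scratch, I would take the right-hand side as a candidate and check that multiplying it by $A + UCV$ yields the identity. This reduces the entire claim to bookkeeping in (generally non-commuting) matrix algebra, and presupposes only that $A$, $C$, and the capacitance matrix $C^{-1} + VA^{-1}U$ are invertible, so that every inverse appearing in the statement is well-defined.

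First I would form the product
\[
(A + UCV)\Bigl[ A^{-1} - A^{-1}U(C^{-1} + VA^{-1}U)^{-1}VA^{-1} \Bigr]
\]
and expand it into four terms. The two terms arising from $A\cdot A^{-1}$ contribute the identity $I$ together with one term carrying a trailing factor $VA^{-1}$, while the two terms arising from $UCV\cdot(\cdots)$ each also carry a trailing $VA^{-1}$. Pulling the common left factor $U$ and the common right factor $VA^{-1}$ out of the three non-identity terms, the task reduces to showing that a single bracketed middle factor vanishes.

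The one step that requires care is simplifying that middle factor. Abbreviating $W \triangleq C^{-1} + VA^{-1}U$, I would use the elementary relation $VA^{-1}U = W - C^{-1}$, whence $CVA^{-1}U = CW - I$ and therefore $CVA^{-1}U\,W^{-1} = C - W^{-1}$. Substituting this back, the bracket telescopes: the contributions $-W^{-1}$, $+C$, and $-(C - W^{-1})$ cancel exactly, leaving $0$. Hence the product equals $I$, which proves the identity.

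I do not anticipate a genuine obstacle, as the result is classical; the only pitfalls are keeping the matrix factors in the correct order and not inadvertently assuming commutativity. If a more conceptual derivation were preferred, one could instead invert the block matrix $\begin{pmatrix} A & U \\ V & -C^{-1} \end{pmatrix}$ in two ways --- via the Schur complement of $A$ and via that of $-C^{-1}$ --- and equate the two resulting expressions for its $(1,1)$ block; this recovers the identity without any explicit cancellation. I would nonetheless present the direct verification, since it is self-contained and shortest.
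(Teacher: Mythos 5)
Your proposal is correct: the expansion into four terms, the factoring of $U(\cdots)VA^{-1}$, and the telescoping of the middle bracket via $CVA^{-1}U\,W^{-1}=C-W^{-1}$ all check out, and since $A+UCV$ is square a one-sided verification suffices. The paper states this lemma as a classical identity without proof, so there is nothing to compare against; your direct verification is the standard argument and would fill the gap cleanly.
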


\begin{lem}[Matrix-root identity]
	\label{lem:matrix-root}
	For vector $V$, and scaler $c$, then
	\begin{align*}
		I + c v v^T = \left[ I + \frac{\sqrt{1+c\| v \|^2}-1}{\| v \|^2} v v^T \right]^2
	\end{align*}
\end{lem}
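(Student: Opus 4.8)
The plan is to reduce this matrix identity to a one-dimensional scalar identity by exploiting the rank-one structure of $vv^T$. The key observation is that $vv^T$ has a single nonzero eigenvalue $\|v\|^2$ (with eigenvector $v$) and acts as zero on the orthogonal complement $v^\perp$; consequently any matrix of the form $I + \lambda\, vv^T$ is diagonal in the orthogonal splitting $\R^d = \mathrm{span}(v) \oplus v^\perp$, acting as multiplication by $1 + \lambda\|v\|^2$ along $v$ and as the identity on $v^\perp$. Both sides of the claimed identity have exactly this form, so the matrix equation holds if and only if the corresponding scalars agree on each eigenspace. On $v^\perp$ both sides reduce to $1 = 1^2$ trivially, and the entire content of the lemma collapses to a single scalar equation on the $v$-direction.

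Concretely, I would abbreviate the coefficient as $a \triangleq (\sqrt{1+c\|v\|^2}-1)/\|v\|^2$ and expand the right-hand side using the rank-one collapse $(vv^T)(vv^T) = v(v^Tv)v^T = \|v\|^2\, vv^T$, giving
\[
\left(I + a\, vv^T\right)^2 = I + 2a\, vv^T + a^2\|v\|^2\, vv^T = I + \left(2a + a^2\|v\|^2\right) vv^T.
\]
Matching with $I + c\, vv^T$ reduces everything to verifying $2a + a^2\|v\|^2 = c$. Writing $s \triangleq \|v\|^2$ and setting $u \triangleq \sqrt{1+cs}-1$, so that $as = u$ and $cs = (u+1)^2 - 1 = u^2 + 2u$, one computes $2a + a^2 s = (2u + u^2)/s = cs/s = c$, which closes the argument. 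Equivalently, in the eigenspace language above, the required scalar identity is simply $1 + c\|v\|^2 = (1 + a\|v\|^2)^2$, i.e. $1 + a\|v\|^2 = \sqrt{1+c\|v\|^2}$, which is precisely the definition of $a$.

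The only point needing care — and hence the main (admittedly minor) obstacle — is the degenerate case $v = 0$, where the coefficient $a$ takes the indeterminate form $0/0$. Here the identity must be read with the convention that the term vanishes: both sides equal $I$ because $vv^T = 0$, so the formula holds trivially once one adopts the limiting value $a \to c/2$ as $v \to 0$. A small accompanying remark worth making is that, beyond being \emph{a} square root, the matrix $I + a\, vv^T$ is the \emph{principal} (positive semidefinite) square root whenever $1 + c\|v\|^2 \ge 0$, since its eigenvalues are $1$ on $v^\perp$ and $\sqrt{1+c\|v\|^2} \ge 0$ along $v$; this is the regime in which the identity is invoked in the proof of Proposition~\ref{lem:self-norm-process}. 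Apart from this, the proof is pure algebra driven entirely by the reduction of a matrix square root to a scalar square root.
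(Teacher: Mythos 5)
Your proof is correct and follows essentially the same route as the paper's: expand the square using $(vv^T)^2 = \|v\|^2\, vv^T$ and verify the resulting scalar identity for the coefficient of $vv^T$. The extra observations (eigenspace decomposition, the degenerate case $v=0$, and the principal-root remark) are sound but not needed beyond the paper's one-line computation.
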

\begin{proof}[Proof of Lemma~\ref{lem:matrix-root}]
	The proof follows from the fact of openning the squares on the RHS,
	\begin{align*}
		\left[ I + \frac{\sqrt{1+c\| v \|^2}-1}{\| v \|^2} v v^T \right]^2 &= I + 2 \frac{\sqrt{1+c\| v \|^2}-1}{\| v \|^2} v v^T + \frac{(\sqrt{1+c\| v \|^2}-1)^2}{\| v \|^4} \| v \|^2 v v^T \\
		&= I + c vv^T.
	\end{align*}
\end{proof}

\begin{proof}[Proof of Lemma~\ref{lem:wass}]
	The proof is motivated from \citep{dalalyan2017further}. We will show that the proof extends to more general vector fields $\mathbf{h}$ using the notion of expansiveness \citep{hardt2015train}, without requiring $\mathbf{h}$ to be the gradient of a strongly convex function. Another difference is that we are tracking the difference between the Cauchy discretization $\xi_t$ and the Langevin diffusion $\theta_t$, instead of characterizing the distance of $\xi_t$ to the invariant measure. In addition, we generalize the proof to review the explicit dependence on the inverse temperature $\beta$. 

	Consider $\theta_t$ and $\xi_t$ defined using the same Brownian motion $B_t$, then we have
	\begin{align*}
		\| \xi_{k\eta} - \theta_{k\eta} \| &\leq  \| [\xi_{(k-1)\eta} - \eta \mathbf{h}(\xi_{(k-1)\eta})] - [\theta_{(k-1)\eta} - \eta \mathbf{h}(\theta_{(k-1)\eta})]  \| \\
		& \quad \quad + \left\| \int_{(k-1)\eta}^{k\eta} \left[ \mathbf{h}(\theta_t) - \mathbf{h}(\theta_{(k-1)\eta}) \right] dt \right\|  \\
		\left( \E \| \xi_{k\eta} - \theta_{k\eta} \|^2 \right)^{1/2} &\leq \left( \E \| [\xi_{(k-1)\eta} - \eta \mathbf{h}(\xi_{(k-1)\eta})] - [\theta_{(k-1)\eta} - \eta \mathbf{h}(\theta_{(k-1)\eta})] \|^2 \right)^{1/2} \\
		& \quad \quad + \underbrace{\left( \E \left\|\int_{(k-1)\eta}^{k\eta} \left[ \mathbf{h}(\theta_t) - \mathbf{h}(\theta_{(k-1)\eta}) \right] dt \right\|^2 \right)^{1/2}}_\text{defined as $\Delta$} \\
		& \leq \delta \E \left( \| \xi_{(k-1)\eta} - \theta_{(k-1)\eta} \|^2 \right)^{1/2} + \Delta
	\end{align*}
	where the first two steps use triangle inequality, on $\mathbb{R}^p$ and $\ell_2$ space associated with $\E$ respectively. The last step uses the following
	fact about the $\delta$-expansiveness,
	\begin{align*}
		\| [\xi_{(k-1)\eta} - \eta \mathbf{h}(\xi_{(k-1)\eta})] - [\theta_{(k-1)\eta} - \eta \mathbf{h}(\theta_{(k-1)\eta})]  \| &\leq \delta \| \xi_{(k-1)\eta} - \theta_{(k-1)\eta} \|.
	\end{align*}
	For the term $\Delta$,
	\begin{align*}
		\Delta^2 &= \E \sum_{i=1}^d \left| \int_{(k-1)\eta}^{k\eta} \left[ \mathbf{h}(\theta_t) - \mathbf{h}(\theta_{(k-1)\eta}) \right]_i dt \right|^2 \\
		&\leq \E \sum_{i=1}^d  \eta  \int_{(k-1)\eta}^{k\eta} |\left[ \mathbf{h}(\theta_t) - \mathbf{h}(\theta_{(k-1)\eta}) \right]_i |^2 dt  \quad \text{by Cauchy-Schwartz} \\
		&= \eta \int_{(k-1)\eta}^{k\eta} \E \| \mathbf{h}(\theta_t) - \mathbf{h}(\theta_{(k-1)\eta})  \|^2 dt \\
		&\leq \eta \ell^2 \int_{(k-1)\eta}^{k\eta} \E \| \theta_t - \theta_{(k-1)\eta}  \|^2 dt \quad \text{by $\ell$-Lipschitz} \\
		&= \eta \ell^2 \int_{(k-1)\eta}^{k\eta} \E \left\| - \int_{(k-1)\eta}^t \mathbf{h}(\theta_{s}) ds + \sqrt{2\beta^{-1}}  (B_t - B_{(k-1)\eta})  \right\|^2 dt.
	\end{align*}
	\begin{align*}
		& \leq \eta \ell^2 \int_{(k-1)\eta}^{k\eta} \left\{ 2 \E \left\| - \int_{(k-1)\eta}^t \mathbf{h}(\theta_{s}) ds \right\|^2 + 2 \E \left\| \sqrt{2\beta^{-1}}  (B_t - B_{(k-1)\eta})  \right\|^2  \right\}dt \\
		& \leq 2\eta \ell^2 \int_{(k-1)\eta}^{k\eta} (t - (k-1)\eta) \int_{(k-1)\eta}^t \E\| h(\theta_s) \|^2 ds dt + 2\eta \ell^2 \int_{(k-1)\eta}^{k\eta} 2\beta^{-1} p (t-(k-1)\eta) dt \\
		& \leq 2\eta \ell^2 \int_{(k-1)\eta}^{k\eta} (t - (k-1)\eta)^2 M^2 dt + 2 \ell^2 p \beta^{-1} \eta^3 \quad \text{by $M$-boundedness}\\
		& \leq \frac{2}{3} \ell^2 M^2 \eta^4 + 2\ell^2 p \beta^{-1} \eta^3.
	\end{align*}
	Then going back to the original equation we are trying to bound
	\begin{align}
		\left( \E \| \xi_{k\eta} - \theta_{k\eta} \|^2 \right)^{1/2} \leq \left(\frac{2}{3} \ell^2 M^2 \eta^4 + 2\ell^2 p \beta^{-1} \eta^3 \right)^{1/2} \cdot \sum_{i=0}^{k-1} \delta^i.
	\end{align}
\end{proof}

\begin{proof}[Proof of Lemma~\ref{lem:discrete-to-continuous}]
	The proof follows from calculations as in \cite{dalalyan2017theoretical,raginsky2017non}.
	The continuous-time interpolation enjoys the same distribution as $\xi_{k\eta}$ for all $k$.
	One can apply Girsanov formula to calculate the relative entropy
	\begin{align*}
		& D_{\rm KL}\left( \mu(\theta_t, 0\leq t\leq k\eta) || \mu(\xi_t, 0\leq t\leq k\eta) \right) \\
		& =  \frac{\beta}{4} \int_0^{k \eta} \mathbb{E}\| \mathbf{h}(\xi_t) - \mathbf{h}(\xi_{\lfloor t/\eta \rfloor \eta})  \|^2 dt \\
		& = \frac{\beta}{4} \sum_{i=0}^{k-1}\int_{i\eta}^{(i+1)\eta} \mathbb{E}\| \mathbf{h}(\xi_t) - \mathbf{h}(\xi_{i\eta})  \|^2 dt \\
		& \leq \frac{\ell^2 \beta}{4} \sum_{i=0}^{k-1}\int_{i\eta}^{(i+1)\eta} \mathbb{E}\| \xi_t - \xi_{i\eta}  \|^2 dt \\
		& = \frac{\ell^2 \beta}{4} \sum_{i=0}^{k-1}\int_{i\eta}^{(i+1)\eta} \mathbb{E}\| - (t-i\eta) \mathbf{h}(\xi_{i \eta}) + \sqrt{2\beta^{-1}} (B_t - B_{i\eta})  \|^2 dt \\
		& \leq \frac{\ell^2 \beta}{4} \sum_{i=0}^{k-1}\int_{i\eta}^{(i+1)\eta} \left[ 2 (t-i\eta)^2 \E\| \mathbf{h}(\xi_{i \eta}) \|^2 +  p \cdot 4\beta^{-1} (t-i \eta) \right] dt \\
		& = \frac{\ell^2 \beta}{4} \left[ \frac{2}{3} \eta^3 \sum_{i=0}^{k-1} \E\| \mathbf{h}(\xi_{i \eta}) \|^2  + k \cdot 2p\beta^{-1} \eta^2 \right] \\
		& = \frac{\ell^2}{6} \beta \eta^3 \sum_{i=0}^{k-1} \E\| \mathbf{h}(\xi_{i \eta}) \|^2 +  \frac{\ell^2 p}{2} k\eta^2
	\end{align*}
	Now recall that $\mathbf{h}$ is $M$-bounded, therefore, we know,
	\begin{align*}
		D_{\rm KL}\left( \mu(\theta_t, 0\leq t\leq k\eta) || \mu(\xi_t, 0\leq t\leq k\eta) \right) \leq \left( \frac{\ell^2 M^2}{6} \beta \eta^3  +  \frac{\ell^2 p}{2} \eta^2 \right) \cdot k.
	\end{align*}
\end{proof}

\section{Further details of the experiment}
\label{sec:exp-details}

\paragraph{Linear model.}

Let us provide the full details of the experiment. In the experiment, we generate a larger number of samples as the population (so that we can evaluate $\mathbf{V}$ easily), then use bootstrap to sample from this population at each step. The population minimizer can be solved using least squares. Here each row of the ``population'' data matrix $X \in \mathbb{R}^{500 \times 4}$ is sampled from a multivariate Gaussian independently, with a covariance matrix $\Sigma$ that has condition number $30.98$. Each step we independently subsample $n = 50$ rows with replacement. The response is generated from a well-specified linear model with additive standard Gaussian noise. The step-size is through calculating the smoothness parameter $\gamma$ as in Thm.~\ref{thm:acceleration}.

\paragraph{Logistic model.}

Again we will provide the full details of the experiment.
We fix a step-size $\eta = 0.2$ (other step-sizes essentially provide similar results), which implies the inverse temperature is $\beta = 2n/\eta = 250$. The data matrix $X \in \mathbb{R}^{500 \times 4}$ is generated from multivariate Gaussian with identity covariance. The response is generated from a well-specified logistic model with each coordinate of $w_*$ uniformly sampled between $[1,2]$.

\paragraph{Gaussian mixture.}

The likelihood for a data point $z$ is
\begin{align*}
	\ell(\theta;z) = - \log \left( \sum_{i=1}^p q_i \phi(z - \theta_i) \right), \quad \text{s.t.}~\sum_{i=1}^p q_i=1,
\end{align*}
where $\phi(x) = \frac{1}{\sqrt{2\pi}\sigma}e^{-\frac{x^2}{2\sigma^2}}$ denotes the density function for Gaussian.
Here in simulations we consider the case when the mixture probability $q_i, i\in [p]$ is known and uniform for the simplicity that we can apply the MasGrad without equality constraints\footnote{When the mixture probability is also unknown, one will need to consider adding a proper barrier function before applying the gradient method.}, and we have a clear picture of the global optima due to symmetry.

\paragraph{Shallow neural networks.}

In the experiment we generate from a well-specified model with very small additive Gaussian noise. However, due to the presense of the hidden layer, the problem is non-convex with many local optima. To break the ReLU scaling invariance (i.e., $\{c W_1, 1/c W_2\}$ is equivalent to $\{W_1, W_2\}$, for the purpose of letting stationary points more separable), we add a non-programmable constant in each layer in the experiment, namely $f_w(x) = \sigma(1+ W_2 \sigma(\mathbf{1}+ W_1 x))$. Because we generate the data from a well specified model, we also present the true parameter in the plot. Here we choose $n = 30$, and each step we subsample with replacement from $N = 300$ data points. The step-size is fixed to be $\eta = 0.1$, which implies the inverse temperature being $\beta = 600$.

\end{document}